\documentclass[letterpaper]{article} % DO NOT CHANGE THIS
\usepackage[preprint]{aaai2027}  % DO NOT CHANGE THIS
\usepackage[hyphens]{url}  % DO NOT CHANGE THIS
\usepackage{graphicx} % DO NOT CHANGE THIS
\usepackage{natbib}  % DO NOT CHANGE THIS AND DO NOT ADD ANY OPTIONS TO IT
\usepackage{caption} % DO NOT CHANGE THIS AND DO NOT ADD ANY OPTIONS TO IT
\usepackage{algorithm}
\usepackage{algorithmic}

\usepackage{amsmath}
\usepackage{amssymb}
\usepackage{amsthm}

\newtheorem{theorem}{Theorem}

\usepackage{newfloat}
\usepackage{listings}
\DeclareCaptionStyle{ruled}{labelfont=normalfont,labelsep=colon,strut=off} % DO NOT CHANGE THIS
\floatstyle{ruled}
\newfloat{listing}{tb}{lst}{}
\floatname{listing}{Listing}

\usepackage{booktabs}

\newcommand{\INPUT}{\item[\algorithmicinput]} \newcommand{\algorithmicinput}{\textbf{Input:}}

\newcommand{\appsectionentry}[2]{%
  \noindent
  \textbf{Appendix~\ref{#1}: #2}%
  \dotfill
  \pageref{#1}\par
}

\newcommand{\appsubsectionentry}[2]{%
  \noindent
  \hspace{1.5em}%
  Appendix~\ref{#1}: #2%
  \dotfill
  \pageref{#1}\par
}

\newcommand{\apptheorementry}[2]{%
  \noindent
  \hspace{3em}%
  Theorem~\ref{#1}: #2%
  \dotfill
  \pageref{#1}\par
}

\title{ArborEnum: Decision Tree Rashomon Sets over Continuous Features}

\author{
    Zakk Heile\textsuperscript{\rm 1}\corresponding,
    Hayden McTavish\textsuperscript{\rm 1},
    Margo Seltzer\textsuperscript{\rm 2},
    Cynthia Rudin\textsuperscript{\rm 1}
}

\affiliations{
    \textsuperscript{\rm 1}Department of Computer Science, Duke University,
    Durham, USA\\
    \textsuperscript{\rm 2}Department of Computer Science, University of British Columbia,
    Vancouver, Canada\\
    zakk.heile@duke.edu,
    hayden.mctavish@duke.edu,
    mseltzer@cs.ubc.ca,
    cynthia@cs.duke.edu
}

\begin{document}

\maketitle

\begin{abstract}
The Rashomon effect describes the phenomenon that many models can achieve nearly equivalent performance on the same learning task, with significant ramifications for robustness, feature importance, and customizability. These use cases motivate the computation of Rashomon sets: the set of all models whose regularized loss is near-optimal. Decision trees are one of the few model classes for which Rashomon sets can be fully enumerated, but this computation has always been conditional on a binarization of the original data, either restricting which splits each tree is allowed to make or substantially increasing the complexity of an already difficult combinatorial problem. We introduce the first algorithm that exactly enumerates decision-tree Rashomon sets while exploiting the ordered structure of continuous features. We further develop a relaxation for approximate enumeration and an anytime algorithm that progressively refines the set of candidate thresholds, producing increasingly detailed approximations that converge to the continuous-feature Rashomon set. Experiments show that coarse binarization can miss many trees, important features, and predictive multiplicity; our algorithms achieve orders-of-magnitude speedups over existing enumeration methods, with approximations providing further speedups while maintaining near-perfect recall.
\end{abstract}

% Uncomment the following to link to your code, datasets, an extended version or similar.
% You must keep this block between (not within) the abstract and the main body of the paper.
% \begin{links}
%     \link{Code for ArborEnum (our method)}{https://github.com/zakk-h/ArborEnum}
%     % \link{Datasets}{https://aaai.org/example/datasets}
%     % \link{Extended version}{https://aaai.org/example/extended-version}
% \end{links}

\begin{links}
\link{The code for our method, ArborEnum, is available at}{https://github.com/zakk-h/ArborEnum}
\end{links}

\section{Introduction}
For many prediction problems, there is not a single uniquely best model. Instead, many distinct models often achieve nearly equivalent predictive performance. This phenomenon, known as the Rashomon effect \citep{breiman1984classification}, motivates the study of Rashomon sets \citep{fisher2019}: collections of models whose objective values are within a prescribed tolerance of optimal. Rather than returning only one model, Rashomon set methods expose the full set of high-performing alternatives. 

Rashomon sets have been investigated for several model classes, including decision trees \citep{xin2022exploring}, rule lists \citep{ciaperoni2024efficient}, generalized additive models \citep{gam_rsets}, and prototypical-part convolutional neural networks \citep{protorset}. We focus this work on finding Rashomon sets of decision trees. Existing decision-tree Rashomon set algorithms \citep{heile2026, arslan2025sorted, babbar2025near, xin2022exploring} address this problem when given a set of binary features. As a result, continuous features must usually be coarsely binarized before the Rashomon set is computed. Even then, the search space is enormous. \citet{osdt} shows that the size of the search space of decision trees of depth $4$ with only $20$ binary features is approximately $8.4 \times 10^{18}$ trees. For some continuous features, there are thousands of cut points to consider. Consequently, existing methods fail to scale to handle this important problem.

We introduce \textbf{ArborEnum} (\textbf{A}lgorithms for \textbf{R}elaying \textbf{B}ounds for \textbf{O}rdered \textbf{R}ashomon \textbf{ENUM}eration), \textbf{the first framework designed for enumerating decision-tree Rashomon sets over continuous features}. ArborEnum adapts threshold bounds developed for finding a single optimal tree \citep{brița2025optimal} to enumerate trees whose objectives fall within a prescribed bound. We use these bounds to propagate information from evaluated thresholds to nearby thresholds, allowing large ranges of candidate splits to be pruned efficiently. ArborEnum can compute the information required by these bounds either optimally or approximately. In the former case, ArborEnum \textbf{exactly enumerates the continuous-feature Rashomon set}; in the latter, it yields \textbf{approximate variants} for settings where exact enumeration remains too expensive, \textbf{achieving a median speedup of 270$\times$} (Table \ref{tab:runtime-exhaustive-8}). For datasets where even high-quality approximation remains computationally demanding, we propose an \textbf{anytime algorithm that progressively refines the binarization and converges to the exact Rashomon set when run to completion}. These methods are supported by an improved representation of the Rashomon set that reduces runtime and memory usage while potentially improving approximation quality. Across our experiments, exact enumeration is feasible on many datasets, while our approximations recover nearly all trees at a fraction of the cost on harder instances. We further show that accounting for continuous thresholds is important for downstream analyses: coarse binarization can miss predictive multiplicity, important variables, and high-quality trees in the Rashomon set, motivating the use of our anytime algorithm to refine the binarization as much as is computationally feasible.

\section{Related Work}

\paragraph{Greedy and optimal trees.}
Decision trees are among the most widely used interpretable model classes, with classical algorithms such as CART \citep{breiman1984classification} and C4.5 \citep{quinlan2014c45} providing scalable top-down procedures for fitting trees. These methods choose splits greedily, which makes them computationally efficient but without any optimality guarantees.

A large body of recent work has studied optimal decision trees, including our setting of optimizing misclassification error plus a per-leaf penalty. Although this problem is NP-hard, specialized methods based on mixed-integer programming, dynamic programming, caching, and branch-and-bound have made it practical to find optimal trees of bounded size for many datasets \citep{osdt, gosdt, dl85, murtree, streed}.
These methods admit an interpretation of searching over an AND/OR graph, where OR nodes correspond to split choices and AND nodes correspond to combining left and right solutions \citep{sullivan2024maptree, chaoukibranches}. This structure is what has allowed prior work to store Rashomon sets compactly \citep{heile2026, arslan2025sorted, xin2022exploring}.

\paragraph{Binarization methods.}

A common way to apply binary-feature tree optimization methods to continuous data is to binarize each continuous feature using a small number of thresholds, often chosen by empirical quantiles; for example, \citet{babbar2025near} uses three quantile thresholds per feature. 
Threshold guessing \citep{gosdt_guesses} instead trains a reference ensemble such as XGBoost, extracts candidate thresholds, and uses backward elimination to keep a small set of high-quality splits. While effective for finding a single optimal tree, this is less suitable for Rashomon sets, where the goal is to characterize many good models.

\paragraph{Handling continuous features.}

Recent work extends these ideas to continuous features~\citep{brița2025optimal, kiossou2026anytime}. \citet{quantbnb} handle continuous features by performing branch-and-bound over intervals of each feature and using quantile-based upper and lower bounds to prune ranges of candidate thresholds that cannot contain an optimal tree. However, they focus only on depth-2 or depth-3 trees, as the algorithm does not scale beyond that. \citet{brița2025optimal} also optimize classification trees directly over continuous 
features, using bounds that are looser but cheaper to compute. % \citet{kiossou2026anytime} build on this approach by prioritizing subproblems according to their deviation from a greedy search, making the search state dependent on the path taken to reach a subproblem. In contrast, our anytime algorithm requires no such path-dependent information and instead progressively refines the set of candidate cut points, ensuring that the resulting Rashomon set does not unintentionally favor particular regions of the search space.

\paragraph{Approximation algorithms.} 
For the approximation algorithms in our work, we build on \citet{heile2026}, which uses a fast proxy algorithm to certify feasible completions within the Rashomon budget and prunes branches whose proxy-completed objective exceeds that bound. Since the budget is set relative to the proxy objective, this pruning rule is not overly aggressive; in fact, it is never incorrect when the proxy optimality gap is maximized at the root. Our approximation algorithms extend this strategy to continuous features through new relaxations and proxy algorithms. One important proxy is LicketySPLIT~\citep{babbar2025near}. At each
subproblem, it greedily completes the children of each candidate split, selects
the split with the best completed regularized objective, and recurses on the
resulting subproblems. We relax LicketySPLIT to operate efficiently over continuous features and use the resulting algorithm, which we call ``LicketySNIP,'' as a proxy in our approximation methods.

\section{Methods}

Let $D$ denote the current subproblem, represented by a bitvector over the training samples, and let $\gamma$ be a per-leaf penalty. Let $\mathcal{T}_d$ denote the set of axis-aligned decision trees
of depth at most $d$ on the training features. Let $\mathcal{Y}$ denote the set of $\geq 2$ class labels. We score a tree by
\[
    \mathrm{Obj}(T,D,\gamma) =
    \mathrm{misclassifications}(T;D) + \gamma \cdot |\mathrm{Leaves}(T)|.
\]
We use a \emph{proxy algorithm} to obtain an objective value attained by
some tree in $\mathcal{T}_d$ for the subproblem and remaining depth $d$; this tree may be, but need not
be, optimal. Our pruning rules assume a proxy robustness condition: for instance, moving a few samples across a threshold changes the proxy-completed objective by at most the number of samples moved. Optimal proxies satisfy this condition, in which case our algorithm returns the exact Rashomon set. For non-optimal proxies, the same rule acts as a useful relaxation, enabling large speedups with little or no empirical loss in solution quality.

 Given a budget
$\varepsilon_{\mathrm{abs}}$, we find subtrees whose objective on the subproblem is at most $\varepsilon_{\mathrm{abs}}$; if we are using an approximate proxy algorithm, we recover a subset of trees that satisfy this; otherwise, it is exact. The multiplicative Rashomon set is defined using $
\varepsilon_{\mathrm{abs}} = (1+\varepsilon_{\mathrm{mult}})\textsc{Optimal}(D,d,\gamma)$ 
so that it contains all trees whose objective is within a multiplicative factor of the optimal objective. We approximate it by initially setting $\varepsilon_{\mathrm{abs}} = (1+\varepsilon_{\mathrm{mult}})\textsc{Proxy}(D,d,\gamma).$

For each subproblem--depth pair $(D,d)$, we persist at most (1) a cached proxy solution and its intermediate computations, (2) a pointer to a node whose rooted subgraph encodes a Rashomon set for that subproblem under some budget $\varepsilon_{\textrm{abs}}$, and (3) one threshold-to-proxy-completion map per continuous feature. We additionally maintain a threshold registry $\mathcal{S}$ containing the active thresholds and per-feature bounds that we lazily infer from thresholds found to be constant on $D$. We elaborate more on these in the following subsections. 

\paragraph{Storing the Rashomon Set.}

We encode the information needed to recover the Rashomon set via an (acyclic) AND/OR graph. Each OR node represents a subproblem (represented by a set of data points), remaining depth, and budget, meaning the largest subtree objective we are willing to enumerate. In the algorithms, $G$ denotes a pointer to the OR node for the current
subproblem. We refer to $G$ as a subgraph, because the solutions for the subproblem are rooted at $G$. During the execution of our algorithms, each OR node stores its current budget, the minimum objective of any tree rooted at that node, and the split/leaf choices that we know can be completed within the budget. Each of those split choices has edges to its left and right child subgraphs, so the Rashomon set can be recovered by traversing these choices in the AND/OR graph. After the search terminates, we perform a post-processing step that builds objective histograms at each OR node, which allows trees to be indexed efficiently in sorted order of objective value. Before this post-processing step, we refer to the structure as a \textit{minimum-objective AND/OR graph}. 

\paragraph{Existing Caching. }
Separate from the AND/OR graph, we adopt the proxy optimizer framework of \citet{heile2026},  which caches proxy objective values and the intermediate computations used to obtain them. When LicketySPLIT is used as the proxy, this  cache includes the objective values returned by each greedy completion and its recursive calls, as well as those returned by each LicketySPLIT call and its recursive calls. These memoized values are retained and reused throughout the Rashomon set computation.

\paragraph{Subgraph Caching. }

In addition to the cache described above, we maintain an index into the AND/OR graph that maps each subproblem-depth pair to its canonical OR node. The index is budget-independent: there is at most one OR node for each subproblem and remaining depth (see lines 1 and 3 of Algorithm \ref{alg:extend-subgraph-continuous}). When a subproblem that was previously solved under a smaller budget is later reached with a larger budget, we start from the existing subgraph and extend it in place (line 8 and after), rather than rebuilding it. Accordingly, each OR node records the largest budget for which it has been solved (because of line 8). Conversely, if the existing subgraph contains solutions for a budget larger than the requested budget, we return a pointer to it and defer the budget restriction to solution extraction. (lines 4-6, details of extraction in Appendix \ref{app:andor-caching}). During iterative budget refinement (Appendix \ref{app:iterative-budget-refinement}), our mechanism reuses and expands existing child subgraphs (called in lines 15 and 18). In contrast, the mechanism of \citet{heile2026} repeatedly recurses from scratch as larger child budgets become available. Appendix \ref{app:theorems} establishes that this yields a more compact
representation than existing methods. Moreover, with an approximate proxy, solving under a larger budget can recover new trees that satisfy a smaller budget, improving approximation quality. 

\paragraph{Leaves.} At each OR node, we add any prediction leaf whose objective is within
$\varepsilon_{\mathrm{abs}}$ (line 9). We only add leaves that are not already stored in $G$. 

\paragraph{Binary Features.}
For a binary feature, \textsc{EnumerateBinaryFeature} (see Appendix \ref{app:continuous-rset}) partitions $D$ into
$D_L$ and $D_R$, computes proxy objectives $P_L$ and $P_R$ for child
subproblems, and discards the split if $P_L+P_R>\varepsilon_{\mathrm{abs}}$.
Otherwise, it calls \textsc{AddOrExtendSplit} (Appendix \ref{app:add-or-extend-split}) to construct or extend the child
subgraphs and add the split to $G$.

\begin{algorithm}[!t]
\caption{\textsc{ArborEnum}$(G,D,d,\gamma,\varepsilon_{\textrm{abs}},\mathcal{S})$}
\label{alg:extend-subgraph-continuous}
\begin{algorithmic}[1]
\INPUT Empty or existing Rashomon subgraph $G$ for subproblem $D$ and depth $d$, per-leaf penalty $\gamma$, new budget $\varepsilon_{\textrm{abs}}$, and threshold registry $\mathcal{S}$
\STATE $k \gets \textsc{SubproblemKey}(D,d)$ \COMMENT{Identify the subproblem by its samples and remaining depth}
\IF[Access global subgraph index $\mathcal{I}_{G}$]{$k \in \mathcal{I}_{G}$}
    \STATE $G \gets \mathcal{I}_{G}[k]$ \COMMENT{Retrieve the canonical subgraph for this subproblem-depth pair}
    \IF{$\textsc{Budget}(G) \ge \varepsilon_{\textrm{abs}}$}
        \STATE \textbf{return} $G$ \COMMENT{The desired solution is already encoded in the cached subgraph}
    \ENDIF
\ENDIF
\STATE $\textsc{SetBudget}(G,\varepsilon_{\textrm{abs}})$ \COMMENT{Record the largest budget for which this node must be solved}
\STATE $\textsc{AddFeasibleLeaves}(G,D,\gamma,\varepsilon_{\textrm{abs}})$ \COMMENT{Add feasible leaves that are not already stored in $G$}
\IF{$d=0$ \textbf{ or } $\varepsilon_{\textrm{abs}}<2\gamma$}
    \STATE $\mathcal{I}_{G}[k] \gets G$
    \STATE \textbf{return} $G$ \COMMENT{No split fits in the remaining budget}
\ENDIF
\FOR{\textbf{each continuous feature} $c$}
    \STATE $\begin{aligned}
        &\textsc{EnumContFeature}(G,D,c,d,\gamma,\varepsilon_{\textrm{abs}},\mathcal{S})
    \end{aligned}$ \COMMENT{Enumerate the currently available nonconstant thresholds of feature $c$; $\mathcal{S}$ and $G$ are updated by reference}
\ENDFOR
\FOR{\textbf{each binary feature} $j$}
    \STATE $\textsc{EnumerateBinary}(G,D,j,d,\gamma,\varepsilon_{\textrm{abs}},\mathcal{S})$ \COMMENT{Construct or extend the child subgraphs if the budget permits}
\ENDFOR
\STATE $\mathcal{I}_{G}[k] \gets G$ \COMMENT{Store a pointer to the canonical root of this subgraph}
\STATE \textbf{return} $G$ \COMMENT{The Rashomon set is encoded in $G$}
\end{algorithmic}
\end{algorithm}

\paragraph{Handling Continuous Features.}

For continuous features, we assume that each continuous feature has been expanded
into a contiguous, ordered block of threshold columns. A threshold column
corresponds to the rule $x_j \le \nu$ (for some value $\nu$). These columns are nested: $x_j \leq \nu \implies x_j \leq \nu'$ for $\nu' > \nu$. 
We exploit this ordering to prune thresholds. For example, suppose we know that a certain threshold column is always zero for each row in the current subproblem. Then, we can exclude any lower-indexed threshold because it will create an empty leaf. We use $\mathcal{S}$ to track lower and upper bounds on the non-constant thresholds and eliminate thresholds outside this range in \textsc{InitAndPrune}. We update $\mathcal{S}$ lazily in \textsc{EnumContFeature} as thresholds are explored.

\paragraph{Continuous Feature Pruning.}

Within a continuous feature, \textsc{EnumContFeature} searches over threshold intervals in the spirit of ConTree \citep{brița2025optimal}. 
We modify the pruning rules to account for the Rashomon budget and per-leaf penalty and maintain a map of thresholds to proxy completions to exploit repeated visits to the same subproblem, possibly with different budgets. We allow the search to be over a subset of thresholds, included in $\mathcal{S}$, which we call active thresholds. In general, all thresholds are active, but we support activating only a subset of them for use by the anytime algorithm (Algorithm \ref{main:anytime})

\begin{figure*}[!h]
    \centering
    \includegraphics[width=1.64\columnwidth]{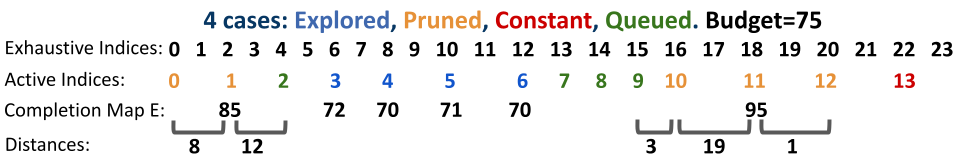}
   \caption{\textsc{InitAndPrune} example. Using proxy completions in $E$ for exhaustive thresholds 2 and 18, we prune active thresholds 0, 1, 10, 11, and 12 using active sample distances (for instance, $85 - 8 > 75$, so we prune active threshold 0, whereas $85 - 12 \leq 75$, so this test cannot prune active threshold 2. For simplicity, we show $E$ storing the sum of proxy completions (not $P_L$ and $P_R$); we omit pruning based on $\gamma$. From $\mathcal{S}$ (not shown), thresholds 22 and above are constant. We recursively construct subgraphs for active thresholds 3--6. \textsc{InitAndPrune} returns $[2,2]$ and $[7,9]$ (the remaining active threshold indices) for \textsc{EnumContFeature}.}
    \label{fig:subproblem-continuous}
\end{figure*}

\paragraph{Specifics of Pruning.}
For each continuous feature, \textsc{EnumContFeature} (Algorithm~\ref{alg:enumerate-continuous-feature}) retrieves a queue $Q$ of intervals over active threshold indices that remain to be explored. This queue is created by \textsc{InitAndPrune} (Algorithm~\ref{alg:initialize-continuous-feature-queue}). On the first visit to a subproblem, no proxy completions have been stored, and
Algorithm~\ref{alg:initialize-continuous-feature-queue} returns a range spanning all active threshold indices (outside of what $\mathcal{S}$ knew was constant). On later visits, it retrieves a map $E$, associated with the current subproblem, remaining depth, and continuous feature.  Each entry in $E$ maps a threshold index $t$ to the previously computed proxy objectives $P_L,P_R$ for its left and right subproblems. This structure is separate from the proxy algorithm's caching: $E$ provides information about the children subproblems that we have evaluated, not a solution to the subproblem. Algorithm~\ref{alg:initialize-continuous-feature-queue} uses these stored values to prune portions of the search space (Algorithm~\ref{alg:initialize-continuous-feature-queue}, lines 7-24);  Figure \ref{fig:subproblem-continuous} illustrates how different active threshold indices are ruled out to form $Q$.  Storing the proxy-completed objectives, rather than only whether each threshold was previously pruned, allows for work to be reused under different budgets. Additionally, $E$ is keyed by indices in the exhaustive threshold set, not by active-threshold positions, so it can be reused as additional thresholds become active. All subsequent logic refers to a threshold by its index in the active threshold set.

We track pruned or explored active thresholds as an ordered collection of disjoint index ranges, whose complement is the set of intervals that still need to be explored (line 25). \textsc{ExcludedRangeTracker} maintains this collection using a red-black tree of pairs $(a,b)$ ordered by $a$ and repeatedly merging a new range with any ranges that overlap or touch it.

When looping over the stored proxy completions (line 7), if we come across a threshold whose proxy-completed objective is now within budget, we go ahead and solve it for the first time or extend the subgraph; then it is no longer under consideration (lines 13-15). If the proxy-completed objective is outside the budget, we try to prune nearby thresholds (lines 16-22). We also update global bounds when a child attains the minimum possible cost $\gamma$: if the left child has cost $\gamma$, moving the threshold farther left cannot improve the optimal completions; similarly for the right child (lines 17-18).

We also prune thresholds neighboring any split whose proxy-completed objective exceeds the budget. For two threshold indices $s$ and $t$ in the same continuous-feature group, let $X_s$ and $X_t$ be the corresponding binary columns. Define their active-sample distance on $D$ as
\[
\mathrm{dist}_D(s,t)
=
\left|
\{i \in D : X_s(i)\neq X_t(i)\}
\right|
\]
This is the number of active samples (in the current subproblem) whose branch assignment changes when moving the threshold from $s$ to $t$. 
\begin{algorithm}[!t]
\caption{\textsc{InitAndPrune}$(G,D,c,d,\gamma,\varepsilon_{\textrm{abs}},\mathcal{S})$}
\label{alg:initialize-continuous-feature-queue}
\begin{algorithmic}[1]
\INPUT Current OR node $G$, subproblem $D$, continuous feature identifier $c$, depth $d$, budget $\varepsilon_{\textrm{abs}}$, and threshold registry $\mathcal{S}$ \COMMENT{Called in \textsc{EnumContFeature}}
\STATE $k \gets \textsc{SubproblemKey}(D,d)$
\STATE $E \gets \mathcal{C}_{\textrm{map}}(k,c)$ \COMMENT{Fetch the map of exhaustive indices to cached left and right proxy completions}
\STATE $(L,U) \gets \textsc{RestrictRange}(\mathcal{S},D,c)$ \COMMENT{Explore only active indices not known to be constant on $D$}
\STATE $\mathcal{R}_{\textrm{excluded}} \gets \textsc{ExcludedRangeTracker}()$ \COMMENT{Track ranges of active thresholds that are already handled or pruned}
\STATE \COMMENT{Process cached proxy completions in the desired range}
\FOR{\textbf{each} $t \in \textsc{Keys}(E)$}
    \STATE $q \gets \textsc{ActivePosition}(\mathcal{S},D,c,t)$ \COMMENT{$t$ is an exhaustive threshold-column index . Get its active index $q$.}
    \IF{$q<L$ \textbf{ or } $q>U$}
        \STATE \textbf{continue} \COMMENT{The threshold is outside search bounds}
    \ENDIF
    \STATE $(P_L,P_R) \gets E(t)$; \quad $P \gets P_L+P_R$
    \IF{$P \le \varepsilon_{\textrm{abs}}$}
        \STATE $\begin{aligned}
            &\textsc{AddOrExtendSplit}(G,D,t,d,\gamma,\varepsilon_{\textrm{abs}},
            \mathcal{S},P_L,P_R)
        \end{aligned}$ \COMMENT{Create or extend the child subgraphs for the current active thresholds}
        \STATE $\mathcal{R}_{\textrm{excluded}}.\textsc{MarkExplored}(q)$ \COMMENT{The threshold has already been evaluated; we mark it as explored}
    \ELSE
    \STATE \COMMENT{Moving the threshold farther left/right cannot improve if that side is already a pure leaf}
        \STATE \textbf{if } $P_L=\gamma$ \textbf{ then } $L \gets \max\{L,q+1\}$ 
        \STATE \textbf{if } $P_R=\gamma$ \textbf{ then } $U \gets \min\{U,q-1\}$ 
        \STATE $\Delta \gets P-\varepsilon_{\textrm{abs}}$ \COMMENT{Compute by how much the proxy completion exceeds the budget}
        \STATE \COMMENT{Closest active indices that are  $\geq \Delta$ samples away}
        \STATE $\ell \gets \textsc{LeftBoundary}(D,\mathcal{S},c,t,q-1,L,\Delta)$
        \STATE $r \gets \textsc{RightBoundary}(D,\mathcal{S},c,t,q+1,U,\Delta)$
       \STATE $\mathcal{R}_{\textrm{excluded}}.\textsc{PruneInterval}(\ell+1,r-1)$ \COMMENT{Prune the neighborhood that can't improve within budget}
    \ENDIF
\ENDFOR
\STATE $Q \gets \mathcal{R}_{\textrm{excluded}}.\textsc{Complement}(L,U)$ \COMMENT{Return intervals of active thresholds that still require exploration.}
\STATE \textbf{return} $(E,L,U,Q)$
\end{algorithmic}
\end{algorithm}

\begin{algorithm}[!t]
\caption{\textsc{EnumContFeature}
($G$,$D$,$c$,$d$,$\gamma$,$\varepsilon_{\textrm{abs}}$,$\mathcal{S})$}
\label{alg:enumerate-continuous-feature}
\label{alg}
\begin{algorithmic}[1]
\INPUT Current OR node $G$, subproblem $D$, continuous feature identifier $c$,
depth $d$, budget $\varepsilon_{\textrm{abs}}$, and threshold registry
$\mathcal{S}$

\STATE $\begin{aligned}
(E,L,U,Q) \gets{}&
\textsc{InitAndPrune}(G,D,c,d,\gamma,
\varepsilon_{\textrm{abs}},\mathcal{S})
\end{aligned}$

\STATE \COMMENT{Explore the remaining intervals}
\WHILE{$Q \neq \emptyset$}
\STATE $[i,j] \gets Q.\textsc{Pop}()$
\COMMENT{The endpoints are active threshold positions}

\STATE $i \gets \max\{i,L\}$; \quad
$j \gets \min\{j,U\}$

\IF{$i>j$}
    \STATE \textbf{continue}
\ENDIF

\STATE $m \gets \lfloor(i+j)/2\rfloor$

\COMMENT{See Appendix \ref{app:enumerate-continuous}. We evaluate the midpoint, update $E$ and $\mathcal{S}$ by reference, tighten $L$ and $U$, and prune and/or enqueue parts of the interval (into $Q$)}
\STATE $\begin{aligned}
(L,U) \gets{}&
\textsc{ProcessInterval}
(G,D,c,d,\gamma,\\
&\qquad\varepsilon_{\textrm{abs}},\mathcal{S},E,Q,L,U,i,j,m)
\end{aligned}$

\ENDWHILE
\end{algorithmic}
\end{algorithm}
Given a failed threshold, \textsc{RightBoundary} searches through the active thresholds to its right and returns the first whose distance from the failed threshold is at least $\Delta$. If no such threshold exists within the search interval, it returns the position immediately beyond the interval. \textsc{LeftBoundary} searches to the left symmetrically.

These routines implement the pruning implied by this assumption: if changing the branch assignment of $k$ samples can change the proxy-completed objective by at most $k$, then any threshold within distance less than $\Delta$ of a failed threshold must also exceed the budget. Accordingly, the routines move the interval endpoints just past all such thresholds. This pruning is exact for any proxy satisfying the assumption, including an optimal proxy over all thresholds or any subset of thresholds, as well as a proxy that simply predicts the majority-class leaf. These routines can be implemented by binary search, because $\mathrm{dist}_D$ is monotone when one threshold is fixed, and each distance computation can stop as soon as $\Delta$ differing active samples have been found.

After line 23, we have now exhausted all of the information we have cached about the subproblem. We return a queue of the remaining intervals to explore, formed by taking the complement of the intervals that have already been pruned or re-evaluated (within the bounds for thresholds). Algorithm~\ref{alg:enumerate-continuous-feature} processes this queue of intervals, evaluating the midpoint threshold of each interval as it is popped. \textsc{ProcessInterval} evaluates the midpoint threshold and its proxy completions. If the proxy-completed objective is within budget, it adds the split and requeues the neighboring left and right subintervals; otherwise, it applies the interval-pruning rules described above.

\paragraph{Proxy Algorithms with Continuous Features}

In Appendix \ref{app:licketysnip}, we describe our modifications to LicketySPLIT \citep{babbar2025near} for continuous features, creating LicketySNIP. We use interval-pruning techniques similar to those in  Algorithms \ref{alg:initialize-continuous-feature-queue} and \ref{alg:enumerate-continuous-feature} that are valid for optimal completions, but we apply them to greedy completions. We also discuss modifications to the greedy subroutine so that consecutive thresholds can be evaluated efficiently \citep{quinlan2014c45}.

\paragraph{Improvements without Caching.} In Appendix \ref{app:walk}, we describe algorithmic improvements when not caching proxy solutions. For example, once the Rashomon sets for the two children of a split have been computed, their minimum objectives can seed the iterative budget refinement for neighboring splits, allowing us to bypass the proxy-pruning test with provably no loss in quality (and a possible gain in quality).

\paragraph{Further Approximation with Proxy Algorithms.}

Although the proxy could also operate over continuous thresholds, restricting it to a fixed binarization still provides theoretical guarantees relative to existing  Rashomon set algorithms that require binarization. In particular, if the proxy
is optimal over the fixed binarization, then
Algorithm~\ref{alg:extend-subgraph-continuous} recovers a superset of the trees
returned by any method that enumerates the Rashomon set over that binarization (Appendix \ref{app:theorems}).

\paragraph{Anytime Algorithm.}

\begin{algorithm}[!t]
\caption{\textsc{AnytimeArborEnum}
\label{main:anytime}
$(d,\gamma,\varepsilon_{\textrm{mult}},\mathcal{B}_{\textrm{proxy}})$}
\label{alg:anytime-continuous-rset}
\begin{algorithmic}[1]
\INPUT Depth budget $d$, per-leaf penalty $\gamma$, Rashomon multiplier
$\varepsilon_{\textrm{mult}}$, and sorted list of proxy thresholds
$\mathcal{B}_{\textrm{proxy}}$

\STATE $\mathcal{B}_{\textrm{bin}} \gets$
sorted list of indices of ordinary binary features

\STATE $\mathcal{B}_{\textrm{initial}} \gets
\textsc{SortUnique}
(\mathcal{B}_{\textrm{bin}} \cup \mathcal{B}_{\textrm{proxy}})$

\STATE $\mathcal{S}_{\textrm{root}} \gets
\textsc{InitializeThresholdRegistry}(\mathcal{B}_{\textrm{initial}})$
\COMMENT{Initialize the active thresholds and continuous-feature bounds (no restrictions yet)}

\STATE $D_{\textrm{root}} \gets$
the bitvector containing all training samples

\STATE Restrict proxy algorithms to $\mathcal{B}_{\textrm{proxy}}$

\STATE $\varepsilon_{\textrm{abs}} \gets
(1+\varepsilon_{\textrm{mult}})
\textsc{Proxy}
(D_{\textrm{root}},d,\gamma,\mathcal{S}_{\textrm{root}})$
\COMMENT{Initialize the root budget}

\STATE \COMMENT{Initially solve using a small set of active thresholds}
\STATE $\begin{aligned}
G \gets{}&
\textsc{ArborEnum}
(G,D_{\textrm{root}},d,\gamma, \varepsilon_{\textrm{abs}},
\mathcal{S}_{\textrm{root}})
\end{aligned}$
\WHILE{\textbf{not}
$\textsc{AllThresholdsActive}(\mathcal{S}_{\textrm{root}})$}

    \STATE $\mathcal{B}_{\textrm{new}} \gets
    \textsc{SelectNewThresholds}(\mathcal{S}_{\textrm{root}})$
    \COMMENT{By default, select one threshold from each gap between adjacent active thresholds}

    \STATE $\textsc{ActivateThresholds}
    (\mathcal{S}_{\textrm{root}},\mathcal{B}_{\textrm{new}})$
    \COMMENT{Update the threshold registry by reference}

    \STATE \text{Clear }$\mathcal{I}_{G}$
    \COMMENT{Cached subgraphs may be incomplete for the expanded active threshold set}

    \STATE $\mathcal{V} \gets \emptyset$
    \COMMENT{Track graph nodes visited during this refinement pass}

    \STATE $\textsc{RefineGraph}
    (G,D_{\textrm{root}},d,
    \mathcal{S}_{\textrm{root}},\mathcal{V})$
    \COMMENT{Extend the graph to include newly active thresholds when feasible}
\ENDWHILE

\WHILE{\textbf{not}
$\textsc{IsProxyOptimal}(\textsc{Proxy})$}

    \STATE Increase proxy strength by $1$ and update caches
    \COMMENT{See Appendix~\ref{app:licketysnip}}

    \STATE $\mathcal{V} \gets \emptyset$

    \STATE $\textsc{RefineGraph}
    (G,D_{\textrm{root}},d,
    \mathcal{S}_{\textrm{root}},\mathcal{V})$
    \COMMENT{Revisit the graph using the stronger proxy to recover falsely pruned splits}
\ENDWHILE

\STATE \textbf{return} $G$
\end{algorithmic}
\end{algorithm}

\begin{table*}[!t]
\centering
\small
\setlength{\tabcolsep}{4pt}
\caption{
 Runtime on datasets using fully-continuous features (i.e., for existing methods, binarizing between every pair of unique values for each feature). $d=5;\lambda=0.02; \varepsilon=0.03$.
Time is reported by rounding to the nearest second. The approximate algorithms are shown in \textbf{bold}, along with the best runtime among them for each dataset; likewise, the optimal algorithms and the best runtime among them are \underline{underlined}.
 Mean and standard deviation are shown across 3 bootstraps. The number after each dataset name is $\sum_j (u_j - 1)$, where $u_j$ is the number of unique values of feature $j$.
Results for all datasets are shown in Appendix \ref{app:timing-memory-recall}. -- runs exceed 100-hours or 128GB memory on at least one bootstrap. ).}
\label{tab:exhaustive-runtime}
\resizebox{\textwidth}{!}{%
\begin{tabular}{lrrrrrrrr}
\toprule
Dataset
& \textbf{ArborEnum+LSR}
& \textbf{ArborEnum+SNIP}
& \textbf{ArborEnum+SNIP+GR}
& \textbf{PRAXIS}
& \textbf{RESPLIT}
& \underline{ArborEnum+OPT}
& \underline{SORTD}
& \underline{TreeFARMS} \\
\midrule
Abalone-6039 & 264 $\pm$ 401 & 2811 $\pm$ 4187 & \textbf{205 $\pm$ 312} & -- & -- & \underline{55174 $\pm$ 47867} & -- & -- \\
Adult-27237 & \textbf{240 $\pm$ 26} & 12132 $\pm$ 372 & 292 $\pm$ 16 & -- & -- & \underline{141689 $\pm$ 13771} & -- & -- \\
Bank-9530 & 15 $\pm$ 1 & 213 $\pm$ 5 & \textbf{6 $\pm$ 0} & -- & -- & \underline{20978 $\pm$ 2133} & -- & -- \\
Bike-279 & \textbf{8 $\pm$ 2} & 27 $\pm$ 5 & \textbf{8 $\pm$ 2} & 117 $\pm$ 25 & 3813 $\pm$ 306 & \underline{1380 $\pm$ 57} & 86351 $\pm$ 15523 & -- \\
Churn-16400 & \textbf{7 $\pm$ 6} & 644 $\pm$ 556 & 9 $\pm$ 9 & -- & -- & \underline{257505 $\pm$ 25437} & -- & -- \\
Credit-174581 & 74 $\pm$ 43 & 24850 $\pm$ 619 & \textbf{66 $\pm$ 22} & -- & -- & -- & -- & -- \\
Diamonds-2074 & 46 $\pm$ 3 & 313 $\pm$ 24 & \textbf{39 $\pm$ 3} & 77628 $\pm$ 6710 & -- & \underline{16616 $\pm$ 1024} & -- & -- \\
Helena-1540012 & 213 $\pm$ 17 & 286988 $\pm$ 12347 & \textbf{186 $\pm$ 11} & -- & -- & -- & -- & -- \\
\bottomrule
\end{tabular}%
}
{\footnotesize
LSR = LicketySPLIT-Restricted; SNIP = LicketySNIP; SNIP+GR = LicketySNIP with greedy restricted to a small binarization; OPT = optimal proxy.
}
\end{table*}

\textsc{AnytimeArborEnum} (Algorithm~\ref{alg:anytime-continuous-rset}) starts from a coarse threshold set and progressively activates additional thresholds, thereby reducing the covering radius of the active thresholds relative to all thresholds and, consequently, the discretization-induced optimality gap. We present the simpler case in which the proxy remains restricted to the initial binarization ($\mathcal{B}_{\textrm{proxy}}$), preserving the aforementioned guarantee relative to existing methods on the binarization. More generally, the proxy may access all thresholds. In this setting, whenever the proxy selects a split at a subproblem, we add that split to the subproblem’s active threshold set, ensuring that every proxy-certified tree remains recoverable (additional details in Appendix \ref{app:anytime}). This variant is truly anytime: it can begin with no active continuous thresholds and, as the proxy is strengthened to optimality, converges to the complete continuous-feature Rashomon set. We discuss proxy strength further in Appendix \ref{app:licketysnip}. In practice, this final step is optional, since our proxy algorithms attain near-perfect recall (Table \ref{tab:exhaustive-recall-summary}).

The initial pass runs Algorithm~\ref{alg:extend-subgraph-continuous} over an initial set of splits. This call returns a minimum-objective AND/OR graph; objective histograms are not populated until the algorithm terminates, since they will become stale. Subsequent rounds activate new thresholds and call \textsc{RefineGraph} (Appendix \ref{app:refine-graph}) to update the graph. For each existing split, \textsc{RefineGraph} recursively refines its children, then performs iterative budget refinement, as new splits may improve a child’s minimum objective.

\section{Evaluation}

We organize our evaluation around three research questions: \textbf{1) Exact: } when can we compute exact Rashomon sets on continuous features, \textbf{2) Approximate: } can our approximations accurately recover the Rashomon set while substantially expanding the range of problems for which Rashomon sets can be computed efficiently, and \textbf{3) Anytime: } can our anytime algorithm produce useful intermediate Rashomon sets while incurring minimal overhead when run to completion. We use 20 datasets that have at least one continuous feature; additional experimental details are available in Appendix \ref{app:computational-resources}. All methods are given a 100-hour timeout and 128GB of memory to compute a Rashomon set for one bootstrap of a dataset, except for our anytime algorithms, which we gave a 24-hour timeout and 32GB of memory. We parameterize the leaf penalty as $\gamma=\mathrm{round}(\lambda |D|)$, and choose $\lambda$.

\paragraph{Runtime.} Table~\ref{tab:exhaustive-runtime} compares our methods (with different proxy algorithms) to existing methods in terms of runtime across datasets. Memory usage is shown in Appendix \ref{app:timing-memory-recall}; we are more memory-efficient than existing methods on almost all datasets. We report additional values of $\lambda$ in Appendix \ref{app:timing-memory-recall}. We consider four proxies: an optimal decision tree algorithm obtained by extending LicketySNIP to higher lookaheads, LicketySNIP (SNIP), LicketySNIP with the greedy subroutine restricted to a small binarization (SNIP+GR), and LicketySPLIT run directly on the small binarization (LSR). The binarization for LSR and SNIP+GR is obtained from thresholds selected by a gradient-boosted tree ensemble. The four existing methods are given fully exhaustive binarizations, with thresholds placed between every pair of unique values for each continuous feature, so that they can enumerate the continuous-feature Rashomon set exactly. On Bike (which had the fewest binary features by far), our optimal method finishes $63 \times$ faster than SORTeD, the only optimal method to finish; TreeFARMS did not finish.

\begin{figure*}[!t]
    \centering
    \includegraphics[width=1\linewidth]{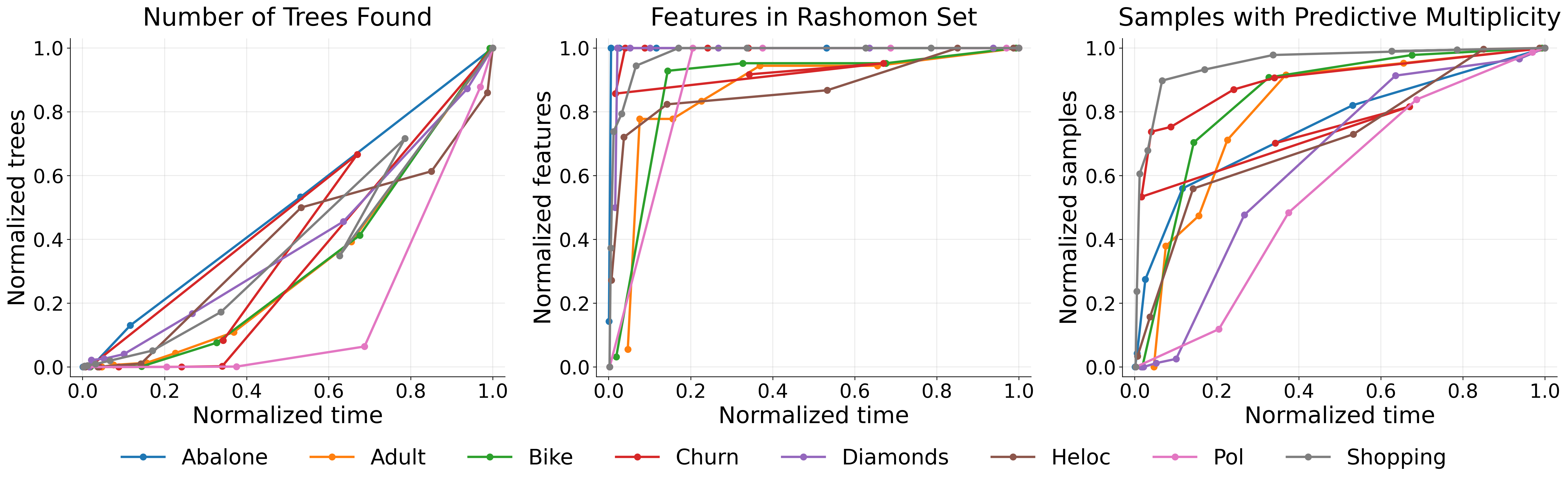}
    \caption{Anytime algorithm on 8 real-world datasets. We display three quantities about the Rashomon set at each stopping point: number of trees, the number of features present in the Rashomon set (any of the splits from a continuous feature counts as one feature), and the number of samples that receive conflicting predictions from Rashomon set members. The curve for a given dataset is normalized by its maximum value. $d=5;\lambda=0.005; \varepsilon=0.015$. }
    \label{fig:heloc-anytime}
\end{figure*}

\paragraph{Approximation Quality.} 

\begin{table}[!b]
\centering
\scriptsize
\setlength{\tabcolsep}{2.5pt}
\renewcommand{\arraystretch}{0.92}
\caption{
Recall summary across 20 datasets with fully continuous features, relative to the best method that finished. Entries are mean $\pm$ standard deviation across 3 bootstraps. Individual results appear in Appendix~\ref{app:timing-memory-recall}. $d=5$ and $\varepsilon=0.03$. \textbf{Bold}: $>0.99$; \underline{underline}: $>0.98$.
}
\label{tab:exhaustive-recall-summary}
\resizebox{\columnwidth}{!}{%
\begin{tabular}{llrrr}
\toprule
$\lambda$ & Statistic
& ArborEnum+LSR
& ArborEnum+SNIP
& ArborEnum+SNIP+GR \\
\midrule
0.02
& Min
& 0.667 $\pm$ 0.577
& \textbf{1.000 $\pm$ 0.000}
& \textbf{1.000 $\pm$ 0.000} \\
& Second
& 0.931 $\pm$ 0.120
& \textbf{1.000 $\pm$ 0.000}
& \textbf{1.000 $\pm$ 0.000} \\
& Median
& \textbf{1.000 $\pm$ 0.000}
& \textbf{1.000 $\pm$ 0.000}
& \textbf{1.000 $\pm$ 0.000} \\
\midrule
0.01
& Min
& 0.898 $\pm$ 0.176
& \textbf{0.994 $\pm$ 0.010}
& \underline{0.981 $\pm$ 0.032} \\
& Second
& 0.911 $\pm$ 0.114
& \textbf{1.000 $\pm$ 0.000}
& \underline{0.988 $\pm$ 0.019} \\
& Median
& \textbf{1.000 $\pm$ 0.000}
& \textbf{1.000 $\pm$ 0.000}
& \textbf{1.000 $\pm$ 0.000} \\
\midrule
0.005
& Min
& 0.880 $\pm$ 0.206
& 0.945 $\pm$ 0.096
& 0.945 $\pm$ 0.096 \\
& Second
& 0.949 $\pm$ 0.088
& \textbf{0.995 $\pm$ 0.009}
& \textbf{0.995 $\pm$ 0.009} \\
& Median
& \textbf{1.000 $\pm$ 0.000}
& \textbf{1.000 $\pm$ 0.000}
& \textbf{1.000 $\pm$ 0.000} \\
\bottomrule
\end{tabular}%
}
\end{table}

Table~\ref{tab:exhaustive-recall-summary} shows that using LicketySNIP as the proxy (with or without restricting the greedy subroutine to a smaller binarization, middle and right columns) achieves nearly-perfect recall across all datasets, while remaining substantially faster than the optimal proxy. The worst-case across all 20 datasets and three $\lambda$ values is still $\geq 94.5\%$. Restricting further to LicketySPLIT over a binarization (left column) is sometimes faster (and sometimes slower) but can incur moderate recall loss on several datasets. Overall, LicketySNIP+GR offers the best tradeoff for users willing to sacrifice some recall; when perfect recall and a certificate of optimality are required, the optimal proxy should be used.

\paragraph{Extending the Root Budget.}

\begin{table}[!b]
\centering
\scriptsize
\setlength{\tabcolsep}{2.5pt}
\caption{For every dataset on which ArborEnum+SNIP+GR does not achieve perfect recall (averaged across bootstraps), extending the root budget recovers the remaining trees with little additional runtime while remaining substantially faster than our optimal method. The timings are obtained by first solving with $\varepsilon_{\textrm{mult}}=0.03$ and then extending the subgraph. $\lambda=0.005$; $\lambda=0.01$ is shown in Appendix \ref{app:bigger-budget}.}
\label{tab:budget-extension-main}
\begin{tabular}{lrrrr}
\toprule
Dataset
& Recall
& ArborEnum+SNIP+GR\
& Expanded
& ArborEnum+OPT \\
\midrule
Bike
& $\textbf{.999} \!\to\! \textbf{1.000}$
& $191s$
& $227s$ ($\varepsilon=.0325$)
& $5158s$ \\
Pol
& $.945 \!\to\! \textbf{1.000}$
& $219s$
& $331s$ ($\varepsilon=.0375$)
& $98967s$ \\
Student
& $\textbf{.995} \!\to\! \textbf{1.000}$
& $16s$
& $21s$ ($\varepsilon=.0375$)
& $79s$ \\
\bottomrule
\end{tabular}
\end{table}

Because our subgraphs can be extended to a larger budget without repeating the subgraph computation, we can improve an approximate Rashomon set by repeatedly increasing the root budget without restarting the computation. SORTeD is an anytime algorithm in the root budget $\varepsilon_{\textrm{abs}}$, and such anytime algorithms have not been created previously for approximation algorithms. Table \ref{tab:budget-extension-main} shows that loosening the root budget can recover the remaining trees with limited additional runtime. 

This capability is useful in workflows where many candidate Rashomon sets are computed during development: after comparing them, a user can select one and invest additional computation to improve its approximation quality. Likewise, this capability can facilitate the selection of $\varepsilon_{\textrm{mult}}$: the graph can be extended incrementally across increasingly large values until a desired stopping criterion is reached.

\paragraph{Adding Thresholds Anytime.} Figure~\ref{fig:heloc-anytime} shows one variant of the anytime algorithm (stopping when all thresholds are added; not increasing proxy strength). We use LicketySNIP as a fully continuous proxy (for simplicity, we don't restrict greedy to a binarization). Enumeration begins without considering any continuous-feature thresholds and adds them over time. Each point enumerates the Rashomon set over a binarization of increasing complexity (possibly with some proxy thresholds included).  The results show that a coarse binarization provides an incomplete view of the Rashomon set over all thresholds. As the binarization is refined, downstream properties converge more quickly than does the number of trees: important features are typically identified early, although some datasets continue to gain features late in the run, while predictive multiplicity converges more slowly. Ideally, enumeration would always run to completion, but this is often infeasible. This flexibility of early-stopping comes at little additional cost: relative to running the corresponding non-anytime algorithm on the final set of thresholds, the anytime procedure incurs a median runtime overhead of only 2.7\%. Thus, the anytime variant can replace our non-anytime algorithms, running to completion when feasible and stopping early when necessary.

\section{Conclusion}

We introduced ArborEnum, a framework for enumerating decision-tree Rashomon sets using continuous features. ArborEnum provides exact, approximate, and anytime algorithms, allowing practitioners to choose an approach suited to their computational constraints. By combining continuous-threshold pruning with budget-independent subgraphs, our methods improve scalability without coarse binarization, while the anytime algorithm progressively considers more splits as time permits. Future work could extend our algorithms to Rashomon sets of piecewise-constant or piecewise-linear regression trees or modify the pruning to ensure recovery of rule lists. Another promising direction is to adapt the pruning conditions to obtain theoretical guarantees with respect to Rashomon sets of rule lists.

\bibliography{references}

\newpage

\appendix

\section*{Appendix Contents}

\appsectionentry{app:full-algorithms}{Full Algorithms}
\appsubsectionentry{app:implementation}{Implementation}
\appsubsectionentry{app:continuous-rset}{ArborEnum}
\appsubsectionentry{app:init-and-prune}{InitAndPrune}
\appsubsectionentry{app:enumerate-continuous}{EnumContFeature}
\appsubsectionentry{app:iterative-budget-refinement}
  {Iterative Budget Refinement}
\appsubsectionentry{app:refine-graph}{RefineGraph}
\appsubsectionentry{app:add-or-extend-split}{AddOrExtendSplit}
\appsubsectionentry{app:anytime}{Anytime Algorithm}
\appsubsectionentry{app:additional-methods}{Additional Methods}
\appsubsectionentry{app:andor-caching}{AND/OR Graph Caching}

\appsectionentry{app:theorems}{Theoretical Results}

\apptheorementry{thm:budget-independent}
  {Distinct Budgets for One Subproblem}

\apptheorementry{thm:model-set-duplication}
  {Distinct Objectives for One Subproblem}

\apptheorementry{thm:path-duplication}
  {Factorial Path Duplication}

\apptheorementry{thm:snapping}
  {Binarization Optimality Gap}

\apptheorementry{thm:rank-net}
  {Selecting Thresholds with Covering Radius Guarantee}

\apptheorementry{cor:rank-net-gap}
  {Optimality Gap induced by Threshold-Selection}

\apptheorementry{thm:rank-net-size}
  {Number of Thresholds for some Covering Radius}

\apptheorementry{thm:quantile-snapping}
  {Quantile Snapping Bound}

\apptheorementry{thm:rank-refinement-halves}
  {Midpoint Refinement Halves Covering Radius}

\apptheorementry{thm:fixed-binarization-superset}
  {Fixed-Binarization Superset Guarantee}

\appsectionentry{app:proxy-algorithms}{Proxy Algorithms}

\appsubsectionentry{app:licketysnip}
  {LicketySNIP and Proxy Strength}

\appsubsectionentry{app:walk}
  {Improvements without Proxy Caching}

\apptheorementry{thm:neighboring-threshold-walk}
  {Neighboring-Threshold Pruning-Test}

\appsectionentry{app:datasets}{Datasets}

\appsectionentry{app:experiments}{Experiments}

\appsubsectionentry{app:computational-resources}
  {Computational Resources}

\appsubsectionentry{app:timing-memory-recall}
  {Timing, Memory, and Recall}

\appsubsectionentry{app:anytime-overhead}
  {Anytime Overhead}

\appsubsectionentry{app:bigger-budget}
  {Improving Recall with a Bigger Budget}

\appsubsectionentry{app:budget-independent-results}
  {Budget-Independent Subgraphs}

\section{Full Algorithms}\label{app:full-algorithms}

This appendix gives the full pseudocode for the algorithms described in the main text. 

\subsection{Implementation}\label{app:implementation}

\paragraph{Subproblem representation.}
We represent each subproblem $D$ as a bitvector over the training samples, with one bit indicating whether each sample is active in the subproblem. Bitvectors are stored as arrays of 64-bit words, so intersections, differences, and sample counts can be computed efficiently using bitwise operations and popcount. Feature columns and class-label indicators are represented in the same way. To reduce memory usage, we identify each cached subproblem by a 64-bit fingerprint of its bitvector together with its remaining depth; this mapping is not strictly injective, but the probability of a collision over the bitvectors encountered (with the same remaining depth) in a run is astronomically small \citep{heile2026}.

\subsection{ArborEnum}\label{app:continuous-rset}

Algorithm~\ref{alg:extend-subgraph-continuous} gives the complete
\textsc{ArborEnum} procedure. Here, we expand the two helper routines that
handle feasible prediction leaves and ordinary binary features. Continuous
features are handled by \textsc{EnumContFeature}
(Algorithm~\ref{alg:enumerate-continuous-feature}), and feasible splits are
materialized or extended by \textsc{AddOrExtendSplit}
(Algorithm~\ref{alg:resolve-threshold}).

\paragraph{Feasible leaves.}
For each class label, \textsc{AddFeasibleLeaves} computes the objective of the
corresponding prediction leaf and adds it when it is within the current budget.
Because a cached subgraph may already contain leaves found under a smaller
budget, the routine adds only leaves that are not already stored in \(G\).

\begin{algorithm}[!h]
\caption{\textsc{AddFeasibleLeaves}%
\((G,D,\gamma,\varepsilon_{\textrm{abs}})\)}
\label{alg:add-feasible-leaves}
\begin{algorithmic}[1]
\INPUT Current OR node \(G\), subproblem \(D\), per-leaf penalty \(\gamma\),
and budget \(\varepsilon_{\textrm{abs}}\)
\FOR{\textbf{each} class label \(b\in\mathcal{Y}\)}
    \STATE \(\ell_b \gets
    \gamma + \left|\{(x_i,y_i)\in D:y_i\neq b\}\right|\)
    \IF{\(\ell_b\leq\varepsilon_{\textrm{abs}}\)
        \textbf{ and } \(G\) does not contain a leaf predicting \(b\)}
        \STATE \(\textsc{AddLeaf}(G,b,\ell_b)\)
        \COMMENT{Add the newly feasible prediction leaf}
    \ENDIF
\ENDFOR
\end{algorithmic}
\end{algorithm}

\paragraph{Faster leaf implementation.}
Algorithm~\ref{alg:add-feasible-leaves} is written for the general multiclass
setting, but its leaf objectives can often be computed more efficiently. For
example, in binary classification, let \(n=|D|\) and let \(n_1\) be the number
of positive examples in \(D\). Then \(n_0=n-n_1\), so the two possible leaf
objectives are obtained from a single class-count computation:
a leaf predicting \(0\) has objective \(\gamma+n_1\), whereas a leaf predicting
\(1\) has objective \(\gamma+n_0\). When \(D\) and the positive-label set are
stored as bitvectors, \(n_1\) can be computed with one bitwise intersection and
population count. Algorithm~\ref{alg:add-feasible-binary-leaves} gives this
specialized implementation.

\begin{algorithm}[!h]
\caption{\textsc{AddFeasibleBinaryLeaves}%
\((G,D,\gamma,\varepsilon_{\textrm{abs}})\)}
\label{alg:add-feasible-binary-leaves}
\begin{algorithmic}[1]
\INPUT Current OR node \(G\), subproblem \(D\), per-leaf penalty \(\gamma\),
and budget \(\varepsilon_{\textrm{abs}}\)
\STATE \(n \gets |D|\)
\STATE \(n_1 \gets |D \cap Y_1|\)
\COMMENT{One bitwise intersection and popcount}
\STATE \(n_0 \gets n-n_1\)
\STATE \(\ell_0 \gets \gamma+n_1\)
\COMMENT{Misclassified positives when predicting \(0\)}
\STATE \(\ell_1 \gets \gamma+n_0\)
\COMMENT{Misclassified negatives when predicting \(1\)}
\IF{\(\ell_0\leq\varepsilon_{\textrm{abs}}\)
    \textbf{ and } \(G\) does not contain a leaf predicting \(0\)}
    \STATE \(\textsc{AddLeaf}(G,0,\ell_0)\)
\ENDIF
\IF{\(\ell_1\leq\varepsilon_{\textrm{abs}}\)
    \textbf{ and } \(G\) does not contain a leaf predicting \(1\)}
    \STATE \(\textsc{AddLeaf}(G,1,\ell_1)\)
\ENDIF
\end{algorithmic}
\end{algorithm}

\paragraph{Binary features.}
For an ordinary binary feature \(j\), \textsc{EnumerateBinary} partitions the
current subproblem and ignores the feature if either child is empty. Otherwise,
it computes proxy objectives for the two children. If their sum is within the
parent budget, \textsc{AddOrExtendSplit} constructs or extends the corresponding
child subgraphs using the budget-refinement procedure described in
Appendix~\ref{app:add-or-extend-split}.

\begin{algorithm}[!h]
\caption{\textsc{EnumerateBinary}%
\((G,D,j,d,\gamma,\varepsilon_{\textrm{abs}},\mathcal{S})\)}
\label{alg:enumerate-binary-feature}
\begin{algorithmic}[1]
\INPUT Current OR node \(G\), subproblem \(D\), binary feature \(j\),
remaining depth \(d\), per-leaf penalty \(\gamma\), budget
\(\varepsilon_{\textrm{abs}}\), and threshold registry \(\mathcal{S}\)
\STATE \((D_L,D_R)\gets\textsc{Partition}(D,j)\)
\IF{\(D_L=\emptyset\) \textbf{ or } \(D_R=\emptyset\)}
    \STATE \textbf{return}
    \COMMENT{The feature is constant on \(D\)}
\ENDIF
\STATE \(P_L\gets\textsc{Proxy}(D_L,d-1,\gamma,\mathcal{S})\)
\STATE \(P_R\gets\textsc{Proxy}(D_R,d-1,\gamma,\mathcal{S})\)
\IF{\(P_L+P_R>\varepsilon_{\textrm{abs}}\)}
    \STATE \textbf{return}
    \COMMENT{The proxy completion exceeds the parent budget}
\ENDIF
\STATE \(\begin{aligned}
&\textsc{AddOrExtendSplit}
(G,D,j,d,\gamma,\varepsilon_{\textrm{abs}},\\
&\qquad\mathcal{S},P_L,P_R)
\end{aligned}\)
\COMMENT{Construct or extend the child subgraphs}
\end{algorithmic}
\end{algorithm}

\FloatBarrier

\subsection{InitAndPrune}\label{app:init-and-prune}

Algorithm~\ref{alg:initialize-continuous-feature-queue} gives the complete
\textsc{InitAndPrune} procedure. Here, we describe the implementation of
\textsc{ExcludedRangeTracker}, which records active threshold positions that
have already been explored or pruned. These positions are indices into the
ordered active thresholds of the current continuous feature: \(0,1,\ldots\).
Thus, the tracker operates directly on contiguous active-position indices and
does not require access to the threshold registry \(\mathcal{S}\).

The tracker stores disjoint, nonadjacent closed intervals \([a,b]\) in an
ordered map implemented as a red-black tree. Each tree entry is a pair
\((a,b)\), keyed and ordered by its left endpoint \(a\). This makes it so that
predecessor and successor intervals can be found in logarithmic time. When a new
excluded interval is inserted, the tracker merges it with every stored interval
that overlaps or touches it.

\textsc{MarkExplored} excludes one active threshold position and is implemented
as a call to \textsc{PruneInterval}. \textsc{PruneInterval} inserts an arbitrary
closed interval and restores the invariant that all stored intervals are
disjoint and nonadjacent. Here, \textsc{Left}(z) and \textsc{Right}(z) denote the left and right endpoints, respectively, of the interval stored at red-black-tree entry $z$. 

There are two cases to be aware of. 

\begin{itemize}
    \item There can be at most one interval to handle whose interval overlaps or touches the new interval on the left (by the invariant). That is, we merge the contiguous interval that overlaps or touches $\ell$, if it exists. There cannot be a chain of contiguous intervals by the invariant.
    \item There can be multiple intervals to the right of $\ell$ that become mergeable as $r$ expands. In particular, there may be single indices that all get absorbed by $[\ell, r]$ (i.e., they are inside the new interval but not contiguous: such as $[2,8]$ having $[4,4]$ and  $[6,6]$ inside)
    
\end{itemize}

\textsc{PruneInterval} first checks the interval immediately preceding $\ell$, since that is the only earlier interval that could overlap or touch the new interval (by the invariant). It then scans forward through successor intervals, repeatedly merging any interval whose left endpoint is at most $r+1$; each merge may increase $r$, which can cause the enlarged interval to reach additional successors. After all overlapping or adjacent intervals have been removed, the routine inserts the single merged interval $[\ell,r]$ into the red-black tree.

\textsc{Complement} scans the intervals from
left to right and returns the maximal unexplored intervals within the current
search bounds \([L,U]\). It scans the stored intervals from left to right while maintaining $q$, the first position that has not yet been covered or returned. That is, we have the invariant that (at every point in the scan) $q$ is the smallest position in \([L,U]\) that has not yet been covered by an excluded interval or added to $Q$. After adding a gap $[q,a-1]$, the update $q\gets\max\{q,b+1\}$ restores the invariant for the next iteration.

The implementations of \textsc{LeftBoundary} and
\textsc{RightBoundary}, which determine the intervals passed to
\textsc{PruneInterval}, are given in Appendix~\ref{app:additional-methods}.

\begin{algorithm}[!h]
\caption{\textsc{$\mathcal{R}_{\textrm{excluded}}$.MarkExplored}%
\((q)\)}
\label{alg:excluded-range-mark-explored}
\begin{algorithmic}[1]
\INPUT Excluded-range tracker \(\mathcal{R}_{\textrm{excluded}}\) and active
threshold position \(q\)
\STATE
\(\mathcal{R}_{\textrm{excluded}}.\textsc{PruneInterval}(q,q)\)
\COMMENT{Exclude the single explored position}
\end{algorithmic}
\end{algorithm}

\begin{algorithm}[!h]
\caption{$\mathcal{R}_{\textrm{excluded}}$.\textsc{PruneInterval}%
\((\ell,r)\)}
\label{alg:excluded-range-prune-interval}
\begin{algorithmic}[1]
\INPUT Excluded-range tracker \(\mathcal{R}_{\textrm{excluded}}\) and closed
interval \([\ell,r]\) of active threshold positions
\IF{\(\ell>r\)}
    \STATE \textbf{return}
\ENDIF
\STATE \(z\gets
\mathcal{R}_{\textrm{excluded}}.\textsc{UpperBound}(\ell)\)
\COMMENT{First stored interval whose left endpoint exceeds \(\ell\)}
\IF{\(z\) has a predecessor \(p\) \textbf{ and }
\(\textsc{Right}(p)+1\geq\ell\)}
    \STATE \(\ell\gets\textsc{Left}(p)\)
    \STATE \(r\gets\max\{r,\textsc{Right}(p)\}\)
    \STATE Delete \(p\) from \(\mathcal{R}_{\textrm{excluded}}\)
\ENDIF
\WHILE{\(z\neq\emptyset\) \textbf{ and }
\(\textsc{Left}(z)\leq r+1\)}
    \STATE \(r\gets\max\{r,\textsc{Right}(z)\}\)
    \STATE \(z\gets\) the successor of \(z\) after deleting \(z\)
\ENDWHILE
\STATE Insert \((\ell,r)\) into
\(\mathcal{R}_{\textrm{excluded}}\)
\COMMENT{Intervals remain disjoint and nonadjacent}
\end{algorithmic}
\end{algorithm}

\begin{algorithm}[!h]
\caption{\textsc{Complement}%
\((\mathcal{R}_{\textrm{excluded}},L,U)\)}
\label{alg:excluded-range-complement}
\begin{algorithmic}[1]
\INPUT Excluded-range tracker \(\mathcal{R}_{\textrm{excluded}}\) and current
search bounds \([L,U]\) over active threshold positions
\STATE \(Q\gets\emptyset\); \quad \(q\gets L\)
\FOR{\textbf{each} stored interval \((a,b)\) in increasing order of \(a\)}
    \IF{\(b<L\)}
        \STATE \textbf{continue}
    \ENDIF
    \IF{\(a>U\)}
        \STATE \textbf{break}
    \ENDIF
    \IF{\(q<a\)}
        \STATE \(Q.\textsc{Push}([q,\min\{a-1,U\}])\)
        \COMMENT{Add the unexplored gap preceding \([a,b]\)}
    \ENDIF
    \STATE \(q\gets\max\{q,b+1\}\)
    \IF{\(q>U\)}
        \STATE \textbf{break}
    \ENDIF
\ENDFOR
\IF{\(q\leq U\)}
    \STATE \(Q.\textsc{Push}([q,U])\)
\ENDIF
\STATE \textbf{return} \(Q\)
\end{algorithmic}
\end{algorithm}

\FloatBarrier

\subsection{\textsc{EnumContFeature}}
\label{app:enumerate-continuous}

Algorithm~\ref{alg:enumerate-continuous-feature} gives the complete
\textsc{EnumContFeature} procedure. It searches the intervals of active
threshold positions returned by \textsc{InitAndPrune}
(Algorithm~\ref{alg:initialize-continuous-feature-queue}). The queue \(Q\)
contains intervals that have not already been evaluated or pruned using cached
entries in the map \(E\). For each interval, the algorithm evaluates its
midpoint so that a failed proxy completion may prune thresholds on both sides.
The helper routine \textsc{ProcessInterval}, given in
Algorithm~\ref{alg:process-continuous-interval}, performs this evaluation and
updates \(E\), \(Q\), and the threshold registry \(\mathcal{S}\).

\paragraph{Threshold-support bounds.}
For each continuous feature \(c\), the threshold registry \(\mathcal{S}\)
stores the active thresholds and lower and upper bounds on the exhaustive
threshold indices that may still induce a nonconstant split. These bounds
summarize information that would otherwise be carried separately along every
path. For a threshold rule \(x_c\leq \nu_t\), an empty left child implies that
every smaller threshold is also constant on the current subproblem; an empty
right child implies the analogous fact for every larger threshold. Moreover, a
threshold that is constant on \(D\) remains constant on every descendant
subproblem \(D'\subseteq D\). Therefore, whenever
\textsc{ProcessInterval} discovers an empty child, it lazily records the
corresponding bound in \(\mathcal{S}\), rather than eagerly updating every
descendant or testing all thresholds in that direction.

The bounds stored in \(\mathcal{S}\) are expressed using exhaustive threshold
indices, so they remain valid if additional thresholds become active during the
anytime algorithm. \textsc{RestrictRange}, used by
Algorithm~\ref{alg:initialize-continuous-feature-queue}, converts these bounds
to the corresponding active-position interval \([L,U]\) for the current call.
Within \textsc{EnumContFeature}, \(L\) and \(U\) are tightened immediately when
new constant regions are discovered (we note that they can also be tightened for other reasons). Intervals already present in \(Q\) need
not be removed eagerly: when they are later popped,
Algorithm~\ref{alg:enumerate-continuous-feature} intersects them with the
current \([L,U]\) and discards them if the intersection is empty.

\paragraph{Cached proxy completions.}
The structure \(E\) is a map from exhaustive threshold indices to pairs of
cached child proxy objectives \((P_L,P_R)\). We use exhaustive indices as keys,
rather than active positions, because active positions may change when the
anytime algorithm activates new thresholds. \textsc{ProcessInterval} receives
the midpoint \(m\) as an active position and first maps it to its exhaustive
threshold index \(t\). If both children are nonempty, it computes and stores
\(E(t)=(P_L,P_R)\). Thus, later calls under a different budget or a refined
active threshold set can reuse the same proxy completions.

\makeatletter
\newcommand{\savealgline}[1]{\xdef#1{\arabic{ALC@line}}}
\newcommand{\restorealgline}[1]{\setcounter{ALC@line}{#1}}
\makeatother

\begin{algorithm}[!t]
\caption{\textsc{ProcessInterval}%
\((G,D,c,d,\gamma,\varepsilon_{\textrm{abs}},
\mathcal{S},E,Q,L,U,i,j,m)\)}
\label{alg:process-continuous-interval}
\begin{algorithmic}[1]
\INPUT Current OR node \(G\), subproblem \(D\), continuous feature \(c\),
remaining depth \(d\), per-leaf penalty \(\gamma\), budget
\(\varepsilon_{\textrm{abs}}\), threshold registry \(\mathcal{S}\), proxy
completion map \(E\), interval queue \(Q\), current bounds \([L,U]\), popped
interval \([i,j]\), and midpoint active position \(m\)

\STATE \(t\gets\textsc{ExhaustiveIndex}(\mathcal{S},c,m)\)
\COMMENT{Map the active position to its threshold-column index}
\STATE \((D_L,D_R)\gets\textsc{Partition}(D,t)\)

\IF{\(D_L=\emptyset\)}
    \STATE \(\textsc{RaiseLowerBound}(\mathcal{S},c,t+1)\)
    \COMMENT{Thresholds at or below \(t\) are constant on \(D\)}
    \STATE \(L\gets\max\{L,m+1\}\)
    \IF{\(m+1\leq j\)}
        \STATE \(Q.\textsc{Push}([m+1,j])\)
    \ENDIF
    \STATE \textbf{return} \((L,U)\)
\ENDIF

\IF{\(D_R=\emptyset\)}
    \STATE \(\textsc{LowerUpperBound}(\mathcal{S},c,t-1)\)
    \COMMENT{Thresholds at or above \(t\) are constant on \(D\)}
    \STATE \(U\gets\min\{U,m-1\}\)
    \IF{\(i\leq m-1\)}
        \STATE \(Q.\textsc{Push}([i,m-1])\)
    \ENDIF
    \STATE \textbf{return} \((L,U)\)
\ENDIF

\STATE \(\mathcal{S}_L\gets\mathcal{S}\); \quad
\(\mathcal{S}_R\gets\mathcal{S}\)
\STATE \(\textsc{LowerUpperBound}(\mathcal{S}_L,c,t-1)\)
\STATE \(\textsc{RaiseLowerBound}(\mathcal{S}_R,c,t+1)\)
\STATE \(P_L\gets\textsc{Proxy}(D_L,d-1,\gamma,\mathcal{S}_L)\)
\COMMENT{Using \(\mathcal{S}_L\) only exploits known constant thresholds}
\STATE \(P_R\gets\textsc{Proxy}(D_R,d-1,\gamma,\mathcal{S}_R)\)
\STATE \(E(t)\gets(P_L,P_R)\)
\COMMENT{Update the map by reference}
\STATE \(P\gets P_L+P_R\)

\makeatletter
\savealgline{\ProcessIntervalSavedLine}
\makeatother
\end{algorithmic}
\end{algorithm}

\begin{algorithm}[!t]
\ContinuedFloat
\caption{\textsc{ProcessInterval} (continued)}
\begin{algorithmic}[1]
\makeatletter
\restorealgline{\ProcessIntervalSavedLine}
\makeatother

\IF{\(P\leq\varepsilon_{\textrm{abs}}\)}
    \STATE \(\begin{aligned}
    &\textsc{AddOrExtendSplit}
    (G,D,t,d,\gamma,\varepsilon_{\textrm{abs}},\\
    &\qquad\mathcal{S},P_L,P_R)
    \end{aligned}\)
    \IF{\(i\leq m-1\)}
        \STATE \(Q.\textsc{Push}([i,m-1])\)
    \ENDIF
    \IF{\(m+1\leq j\)}
        \STATE \(Q.\textsc{Push}([m+1,j])\)
    \ENDIF
    \STATE \textbf{return} \((L,U)\)
\ENDIF

\STATE \COMMENT{The proxy completion is outside the budget; tighten bounds when one child is already a pure leaf}
\IF{\(P_L=\gamma\)}
    \STATE \(L\gets\max\{L,m+1\}\)
\ENDIF
\IF{\(P_R=\gamma\)}
    \STATE \(U\gets\min\{U,m-1\}\)
\ENDIF

\STATE \(\Delta\gets P-\varepsilon_{\textrm{abs}}\)
\STATE \(\ell\gets
\textsc{LeftBoundary}(D,\mathcal{S},c,t,m-1,
\max\{i,L\},\Delta)\)
\STATE \(r\gets
\textsc{RightBoundary}(D,\mathcal{S},c,t,m+1,
\min\{j,U\},\Delta)\)
\COMMENT{Positions in \([\ell+1,r-1]\) are pruned}

\IF{\(i\leq\min\{\ell,U\}\)}
    \STATE \(Q.\textsc{Push}([i,\min\{\ell,U\}])\)
\ENDIF
\IF{\(\max\{r,L\}\leq j\)}
    \STATE \(Q.\textsc{Push}([\max\{r,L\},j])\)
\ENDIF
\STATE \textbf{return} \((L,U)\)
\end{algorithmic}
\end{algorithm}

If the midpoint split is feasible (within budget), both neighboring subintervals remain
potentially feasible and are placed back into \(Q\). If it is infeasible,
\textsc{LeftBoundary} and \textsc{RightBoundary} identify the nearest positions
on each side whose active-sample distance from \(t\) is at least
\(\Delta=P-\varepsilon_{\textrm{abs}}\); the positions strictly between those
boundaries cannot improve enough to enter the budget. Their implementations are
given in Appendix~\ref{app:additional-methods}.

\FloatBarrier

\subsection{Iterative Budget Refinement}
\label{app:iterative-budget-refinement}

Algorithms~\ref{alg:graph-solve-ordered} and~\ref{alg:graph-solve-siblings} describe how we solve the two child subgraphs of a feasible split under a shared parent budget. The key issue is that the budget available to one child depends on the best objective attainable by the other child. That is, we want all solutions to the left subproblem that can be paired with the best solution in the right subproblem and remain within the parent budget (and vice versa). If we only know that the right child can be completed with objective $P_2$, then we can only prove that the left child needs to be solved up to budget $\varepsilon_{\mathrm{abs}}-P_2$. Once the left child is solved, its true minimum objective may be smaller than its initial proxy cost, which increases the budget available to the right child. This can in turn increase the budget available to the left child, so the process alternates until neither child receives a larger budget (this occurs when a side does not improve on its best solution). \textsc{GraphSolveOrdered} is very similar to iterative budget refinement in \citet{heile2026}.

\begin{algorithm}[!h]
\caption{\small$
\begin{aligned}
\textsc{GraphSolveOrdered}(&G_1,G_2,D_1,D_2,d,\gamma,
\varepsilon_{\textrm{abs}},\\
&P_1,P_2,\mathcal{S}_1,\mathcal{S}_2)
\end{aligned}
$}
\label{alg:graph-solve-ordered}
\begin{algorithmic}[1]
\REQUIRE Left/right subgraphs after relabeling, left/right datasets after relabeling,
remaining depth $d$, regularization $\gamma$, parent budget $\varepsilon_{\textrm{abs}}$,
initial proxy/minimum costs $P_1,P_2$, and threshold registries
$\mathcal{S}_1,\mathcal{S}_2$, $\mathrm{Budget}(\emptyset)=0$

\STATE $\varepsilon_1^{(\textrm{new})} \gets \varepsilon_{\textrm{abs}} - P_2$

\WHILE{$\varepsilon_1^{(\textrm{new})} > \textsc{Budget}(G_1)$}
  \STATE $\textsc{ArborEnum}
  (G_1,D_1,d,\gamma,\varepsilon_1^{(\textrm{new})},\mathcal{S}_1)$
  \COMMENT{Solve or extend the first subgraph}

  \STATE $\varepsilon_2^{(\textrm{new})} \gets
  \varepsilon_{\textrm{abs}} - G_1.\textit{min\_objective}$

  \IF{$\varepsilon_2^{(\textrm{new})} > \textsc{Budget}(G_2)$}
    \STATE $\textsc{ArborEnum}
    (G_2,D_2,d,\gamma,\varepsilon_2^{(\textrm{new})},\mathcal{S}_2)$
    \COMMENT{Solve or extend the second subgraph}

    \STATE $\varepsilon_1^{(\textrm{new})} \gets
    \varepsilon_{\textrm{abs}} - G_2.\textit{min\_objective}$
  \ENDIF
\ENDWHILE
\end{algorithmic}
\end{algorithm}

\FloatBarrier

The continuous-feature pruning rules and the anytime algorithm require a wrapper around the procedure in \citet{heile2026}. In particular, because the anytime algorithm revisits subproblems after activating additional thresholds, one child subgraph may improve while the other remains unchanged. Therefore, the order in which the two children are extended can matter.

\textsc{GraphSolveSiblings} (Algorithm~\ref{alg:graph-solve-siblings}) wraps the left and right children of an actual split. If both child subgraphs are empty, it simply calls \textsc{GraphSolveOrdered}, and both sides will be solve at least once. If the split has already been encountered, the routine uses the existing child budgets, stored minimum objectives, and new proxy costs to determine whether either child can be extended under the parent budget. When extension is needed, it starts with the child whose newly available budget has increased.

This ordering is important during anytime refinement. After new thresholds are activated, \textsc{RefineGraph} may improve the minimum objective of one child but not the other. If one child's minimum objective improves, then the budget available to its sibling increases. We therefore first solve the sibling whose budget increased. This gives that sibling an opportunity to improve its own minimum objective, which may in turn increase the budget available to the first child. The ordered solve then alternates until neither child can be extended further.

\begin{algorithm}[!h]
\caption{\small$
\begin{aligned}
\textsc{GraphSolveSiblings}(&G_L,G_R,D_L,D_R,d,\gamma,\varepsilon_{\textrm{abs}},\\
&P_L,P_R,\mathcal{S}_L,\mathcal{S}_R)
\end{aligned}
$}
\label{alg:graph-solve-siblings}
\begin{algorithmic}[1]
\REQUIRE Existing left/right subgraphs $G_L,G_R$ which are either both empty or both non-empty,
left/right datasets $D_L,D_R$, remaining depth $d$, regularization $\gamma$, parent budget
$\varepsilon_{\textrm{abs}}$, proxy costs $P_L,P_R$, and threshold registries
$\mathcal{S}_L,\mathcal{S}_R$

\IF{$G_L=\emptyset$ and $G_R=\emptyset$}
  \STATE $\begin{aligned}
  \textsc{GraphSolveOrdered}(&G_L,G_R,D_L,D_R,d,\gamma,
  \varepsilon_{\textrm{abs}},\\
  &P_L,P_R,\mathcal{S}_L,\mathcal{S}_R)
  \end{aligned}$

\ELSE
 \STATE $\varepsilon_L \gets \textsc{Budget}(G_L)$
  \COMMENT{$G_L$ has already been solved up to this budget}
  \STATE $\varepsilon_R \gets \textsc{Budget}(G_R)$
  \COMMENT{$G_R$ has already been solved up to this budget}

  \STATE $\widehat{P}_L \gets
  \min\{P_L,G_L.\textit{min\_objective}\}$
  \STATE $\widehat{P}_R \gets
  \min\{P_R,G_R.\textit{min\_objective}\}$

  \STATE $\varepsilon_L^{(\textrm{new})} \gets
  \varepsilon_{\textrm{abs}} - \widehat{P}_R$
  \STATE $\varepsilon_R^{(\textrm{new})} \gets
  \varepsilon_{\textrm{abs}} - \widehat{P}_L$

  \IF{$\varepsilon_L^{(\textrm{new})} \le \varepsilon_L$ and
      $\varepsilon_R^{(\textrm{new})} \le \varepsilon_R$}
    \STATE \textbf{return}
  \ENDIF

  \IF{$\varepsilon_L^{(\textrm{new})} > \varepsilon_L$}
    \STATE $\begin{aligned}
    \textsc{GraphSolveOrdered}(&G_L,G_R,D_L,D_R,d,\gamma,\\
    &\varepsilon_{\textrm{abs}},\widehat{P}_L,\widehat{P}_R,
    \mathcal{S}_L,\mathcal{S}_R)
    \end{aligned}$
  \ELSE
    \STATE $\begin{aligned}
    \textsc{GraphSolveOrdered}(&G_R,G_L,D_R,D_L,d,\gamma,\\
    &\varepsilon_{\textrm{abs}},\widehat{P}_R,\widehat{P}_L,
    \mathcal{S}_R,\mathcal{S}_L)
    \end{aligned}$
  \ENDIF
\ENDIF

\end{algorithmic}
\end{algorithm}

\FloatBarrier

\subsection{RefineGraph}
\label{app:refine-graph}

\textsc{RefineGraph} (Algorithm~\ref{alg:refine-graph-dfs}) updates an existing minimum-objective AND/OR graph after the active threshold set has been enlarged by the anytime algorithm. The input graph already encodes feasible trees for an earlier, smaller set of active thresholds. The goal of \textsc{RefineGraph} is not to rebuild this graph from scratch, but to recursively extend it so that it is valid for the new active threshold set.

The routine proceeds in post-order. For each split already stored at \(G\), it
copies the current threshold registry into child-specific registries
\(\mathcal{S}_L\) and \(\mathcal{S}_R\). If the split is a threshold \(t\) of
continuous feature \(c\), the left child satisfies \(x_c\leq\nu_t\), so
\textsc{LowerUpperBound} restricts the upper bound of \(\mathcal{S}_L\) to
\(t-1\). Similarly, the right child satisfies \(x_c>\nu_t\), so
\textsc{RaiseLowerBound} restricts the lower bound of \(\mathcal{S}_R\) to
\(t+1\). The routine then recursively refines both children using their
respective registries.

After the children have been refined, an existing binary split is passed to
\textsc{AddOrExtendSplit}, since improved child minimum objectives may permit
additional iterative budget refinement. Existing continuous splits need not be
handled separately here: the subsequent call to \textsc{EnumContFeature}
processes all active thresholds, including thresholds already stored in \(G\),
using their cached proxy completions. The threshold bounds are already stored
in \(\mathcal{S}\) and will be used later.

\begin{algorithm}[h]
\caption{\textsc{RefineGraph}%
\((G,D,d,\mathcal{S},\mathcal{V})\)}
\label{alg:refine-graph-dfs}
\begin{algorithmic}[1]
\INPUT Existing OR node \(G\), subproblem bitvector \(D\), remaining depth
\(d\), threshold registry \(\mathcal{S}\), and set \(\mathcal{V}\) of OR nodes
already refined during this pass

\IF{\(G=\emptyset\)}
    \STATE \textbf{return}
\ENDIF

\STATE \(\varepsilon_{\textrm{abs}}\gets\textsc{Budget}(G)\)

\IF{\(G\in\mathcal{V}\)}
    \STATE \textbf{return}
    \COMMENT{\(G\) was already refined during this pass}
\ENDIF

\STATE \(\mathcal{V}\gets\mathcal{V}\cup\{G\}\)

\IF{\(d\leq 0\) \textbf{ or }
\(\varepsilon_{\textrm{abs}}<2\gamma\)}
    \STATE \textbf{return}
    \COMMENT{No split can fit within the remaining budget}
\ENDIF

\STATE \(\mathcal{A}\gets\textsc{Splits}(G)\)
\COMMENT{Snapshot the splits present when this call begins}

\FOR{\textbf{each split node} \(s\in\mathcal{A}\)}
    \STATE \(t\gets\textsc{Feature}(s)\)
    \STATE \((D_L,D_R)\gets\textsc{Partition}(D,t)\)
    \STATE \(\mathcal{S}_L\gets\mathcal{S}\); \quad
    \(\mathcal{S}_R\gets\mathcal{S}\)

    \IF{\(t\) is a threshold of continuous feature \(c\)}
        \STATE
        \(\textsc{LowerUpperBound}(\mathcal{S}_L,c,t-1)\)
        \COMMENT{Thresholds at or above \(t\) are constant on \(D_L\)}
        \STATE
        \(\textsc{RaiseLowerBound}(\mathcal{S}_R,c,t+1)\)
        \COMMENT{Thresholds at or below \(t\) are constant on \(D_R\)}
    \ENDIF

    \STATE \(\textsc{RefineGraph}
    (s.\textit{left},D_L,d-1,\mathcal{S}_L,\mathcal{V})\)

    \STATE \(\textsc{RefineGraph}
    (s.\textit{right},D_R,d-1,\mathcal{S}_R,\mathcal{V})\)

    \IF{\(t\) is an ordinary binary feature}
        \STATE
        \(P_L\gets\textsc{Proxy}(D_L,d-1,\gamma,\mathcal{S}_L)\)
        \STATE
        \(P_R\gets\textsc{Proxy}(D_R,d-1,\gamma,\mathcal{S}_R)\)

        \STATE \(\begin{aligned}
        &\textsc{AddOrExtendSplit}
        (G,D,t,d,\gamma,\varepsilon_{\textrm{abs}},\\
        &\qquad\mathcal{S},P_L,P_R)
        \end{aligned}\)
    \ENDIF
\ENDFOR

\FOR{\textbf{each continuous feature} \(c\)}
    \STATE \(\begin{aligned}
    &\textsc{EnumContFeature}
    (G,D,c,d,\gamma,\varepsilon_{\textrm{abs}},\mathcal{S})
    \end{aligned}\)
    \COMMENT{Add newly active feasible thresholds and extend existing ones}
\ENDFOR

\end{algorithmic}
\end{algorithm}

\FloatBarrier

\subsection{AddOrExtendSplit}
\label{app:add-or-extend-split}

\textsc{AddOrExtendSplit} (Algorithm~\ref{alg:resolve-threshold}) is the routine that materializes a creates or extends a given split at an OR node. It partitions the current subproblem,
constructs the threshold registries for the two children, and retrieves the
existing child subgraphs when the split is already stored in \(G\).

For a continuous threshold \(t\) of feature \(c\), the left child satisfies
\(x_c\leq \nu_t\), so every threshold of feature \(c\) with exhaustive index at
least \(t\) is constant on \(D_L\). We therefore tighten the upper bound in
\(\mathcal{S}_L\) to \(t-1\). Symmetrically, the right child satisfies
\(x_c>\nu_t\), so every threshold with index at most \(t\) is constant on
\(D_R\), and we tighten the lower bound in \(\mathcal{S}_R\) to \(t+1\).
These are the child-specific bounds later used by
\textsc{RestrictRange}. For an ordinary binary feature, both child registries are unchanged
copies of \(\mathcal{S}\).

The two children are then passed to \textsc{GraphSolveSiblings}, which solves or extends them under the shared parent budget. If both child subgraphs are nonempty after this refinement, the split is valid in the Rashomon graph. If the split is new, it is added to $G$ with its left and right child subgraphs. If the split already exists, the routine updates the minimum objective stored at $G$ using the improved child objectives. \textsc{AddSPLIT} \citep{heile2026} already handles this minimum objective propagation, so we only explicitly handle it in the case that we are extending a split.

\begin{algorithm}[!h]
\caption{\textsc{AddOrExtendSplit}%
\((G,D,t,d,\gamma,\varepsilon_{\textrm{abs}},\mathcal{S},P_L,P_R)\)}
\label{alg:resolve-threshold}
\begin{algorithmic}[1]
\INPUT Current OR node \(G\), subproblem \(D\), split feature or threshold
\(t\), remaining depth \(d\), per-leaf penalty \(\gamma\), parent budget
\(\varepsilon_{\textrm{abs}}\), threshold registry \(\mathcal{S}\), and child
proxy objectives \(P_L,P_R\)

\STATE \((D_L,D_R)\gets\textsc{Partition}(D,t)\)
\STATE \(\mathcal{S}_L\gets\mathcal{S}\); \quad
\(\mathcal{S}_R\gets\mathcal{S}\)

\IF{\(t\) is a threshold of continuous feature \(c\)}
    \STATE \(\textsc{LowerUpperBound}(\mathcal{S}_L,c,t-1)\)
    \COMMENT{Thresholds at or above \(t\) are constant on \(D_L\)}
    \STATE \(\textsc{RaiseLowerBound}(\mathcal{S}_R,c,t+1)\)
    \COMMENT{Thresholds at or below \(t\) are constant on \(D_R\)}
\ENDIF

\IF{\(G\) already contains split \(t\)}
    \STATE Let \((G_L,G_R)\) be the existing children of split \(t\)
    \STATE \(\textit{is\_new}\gets\textbf{false}\)
\ELSE
    \STATE \(G_L\gets\emptyset\); \quad \(G_R\gets\emptyset\)
    \STATE \(\textit{is\_new}\gets\textbf{true}\)
\ENDIF

\STATE \(\begin{aligned}
\textsc{GraphSolveSiblings}(&G_L,G_R,D_L,D_R,d-1,\gamma,
\varepsilon_{\textrm{abs}},\\
&P_L,P_R,\mathcal{S}_L,\mathcal{S}_R)
\end{aligned}\)

\IF{\(G_L=\emptyset\) \textbf{ or } \(G_R=\emptyset\)}
    \STATE \textbf{return}
    \COMMENT{The split has no feasible pair of child subtrees}
\ENDIF

\IF{\(\textit{is\_new}\)}
    \STATE \(\textsc{AddSplit}(G,t,G_L,G_R)\)
\ELSE
    \STATE \(G.\textit{min\_objective}\gets
    \min\{G.\textit{min\_objective},
    G_L.\textit{min\_objective}+G_R.\textit{min\_objective}\}\)
\ENDIF
\end{algorithmic}
\end{algorithm}

\FloatBarrier

\subsection{Anytime Algorithm}
\label{app:anytime}

\textsc{AnytimeArborEnum}
(Algorithm~\ref{alg:anytime-continuous-rset}) progressively constructs a
Rashomon graph over increasingly large sets of active thresholds. The goal of this section is to provide an overview of the method and additional details including refining the proxy. The algorithm begins
with the ordinary binary features and an initial proxy threshold set
\(\mathcal{B}_{\textrm{proxy}}\). These thresholds are used to initialize the
root threshold registry \(\mathcal{S}_{\textrm{root}}\), which stores both the
currently active thresholds and the continuous-feature bounds. The proxy is
restricted to \(\mathcal{B}_{\textrm{proxy}}\), and the first call to
\textsc{ArborEnum} constructs a minimum-objective AND/OR graph over the
initial active threshold set.

The algorithm then repeatedly activates additional continuous thresholds.
After each refinement,
\(\textsc{ActivateThresholds}(\mathcal{S}_{\textrm{root}},
\mathcal{B}_{\textrm{new}})\) updates the threshold registry by reference (to add the new thresholds), and
\textsc{RefineGraph} revisits the existing graph. This recursively refines
child subgraphs, reruns iterative budget refinement where necessary, and searches
for continuous splits that were unavailable in earlier rounds but are now
active . We clear the canonical subgraph index \(\mathcal{I}_G\) between
threshold-refinement rounds because a cached subgraph that was complete for the
previous active threshold set is not necessarily complete for the enlarged
set. The graph itself is preserved and extended in place. The threshold-to-proxy
completion maps and the proxy algorithm's own cache are also retained, so
previously computed proxy completions remain available. As subgraphs are
revisited, \(\mathcal{I}_G\) is repopulated and again provides reuse within the
new active threshold set.

Once all thresholds are active, the algorithm may optionally strengthen the
proxy until it becomes optimal. This phase is optional because increasing proxy
strength may be substantially more expensive than threshold refinement. After
each increase in proxy strength, proxy caches are updated as described in
Appendix~\ref{app:licketysnip}, the threshold-to-proxy completion maps whose
values depend on the previous proxy are cleared or recomputed, and
\(\mathcal{I}_G\) is cleared because cached completeness guarantees were
established using the previous proxy. The existing Rashomon graph is not
discarded. Instead, \textsc{RefineGraph} revisits and extends it using the
stronger proxy. Thus, early stopping returns the best graph constructed so far,
whereas running both phases to completion recovers the full continuous-feature
Rashomon set when the proxy becomes optimal.

Algorithm~\ref{alg:anytime-continuous-rset} presents the simpler variant in
which the proxy is restricted to \(\mathcal{B}_{\textrm{proxy}}\), while the
active enumeration thresholds stored in \(\mathcal{S}_{\textrm{root}}\) form a
superset of \(\mathcal{B}_{\textrm{proxy}}\). We can relax this restriction by
allowing the proxy to use a larger threshold set and return both the objective
of its certified tree and that tree's root split for the current subproblem and
depth. We then activate this root split locally at the corresponding OR node
and evaluate it immediately. This adds at most one proxy-selected threshold per
subproblem and ensures that the proxy-certified tree remains recoverable even
when its root split is not yet globally active. Consequently, the anytime
algorithm may begin with no active continuous thresholds, add proxy-selected
thresholds locally as needed, and still progressively activate the remaining
thresholds globally until all thresholds are available.

There is one important thing to note that may not be immediately clear. During refinement, a binary feature may not have been used previously. But, now with an expanded active set, it may need to be used. This can only happen in our framework if a minimum objective improves in the sibling subproblem. Because the proxy is fixed at some granularity, whether fully continuous features or some binarization, it is not able to give a different value. If one child subgraph improves its minimum objective, the budget available to the sibling increases. \textsc{RefineGraph} therefore calls \textsc{AddOrExtendSplit} on each existing binary split after recursively refining its children. This invokes iterative budget refinement, which may extend either child under the newly available budget. Because extending a child calls \textsc{ArborEnum}, all of that child’s candidate splits are considered again, including binary splits that were previously outside the child’s budget but are now feasible. \textsc{ArborEnum} will extend the subgraph to the bigger budget while reconsidering all binary features that are not yet in the graph, reconsidering all old active continuous thresholds that are not yet in the graph, and considering all continuous thresholds that just became active.

\begin{algorithm}[!h]
\caption{\textsc{AnytimeArborEnum}%
\((d,\gamma,\varepsilon_{\textrm{mult}},
\mathcal{B}_{\textrm{proxy}})\)}
\label{alg:anytime-continuous-rset-full}
\begin{algorithmic}[1]
\INPUT Depth budget \(d\), per-leaf penalty \(\gamma\), Rashomon multiplier
\(\varepsilon_{\textrm{mult}}\), and sorted list of proxy thresholds
\(\mathcal{B}_{\textrm{proxy}}\)

\STATE \(\mathcal{B}_{\textrm{bin}}\gets\)
sorted list of indices of ordinary binary features

\STATE \(\mathcal{B}_{\textrm{initial}}\gets
\textsc{SortUnique}
(\mathcal{B}_{\textrm{bin}}\cup\mathcal{B}_{\textrm{proxy}})\)

\STATE \(\mathcal{S}_{\textrm{root}}\gets
\textsc{InitializeThresholdRegistry}
(\mathcal{B}_{\textrm{initial}})\)
\COMMENT{Initialize active thresholds and unrestricted feature bounds}

\STATE \(D_{\textrm{root}}\gets\)
the bitvector containing all training samples

\STATE Restrict proxy algorithms to
\(\mathcal{B}_{\textrm{proxy}}\)

\STATE \(\varepsilon_{\textrm{abs}}\gets
(1+\varepsilon_{\textrm{mult}})
\textsc{Proxy}
(D_{\textrm{root}},d,\gamma,\mathcal{S}_{\textrm{root}})\)
\COMMENT{Initialize the root budget}

\STATE \(\begin{aligned}
G\gets{}&
\textsc{ArborEnum}
(G,D_{\textrm{root}},d,\gamma,\varepsilon_{\textrm{abs}},\\
&\qquad\mathcal{S}_{\textrm{root}})
\end{aligned}\)

\WHILE{\textbf{not }
\(\textsc{AllThresholdsActive}(\mathcal{S}_{\textrm{root}})\)}

    \STATE \(\mathcal{B}_{\textrm{new}}\gets
    \textsc{SelectNewThresholds}(\mathcal{S}_{\textrm{root}})\)
    \COMMENT{By default, select one threshold in the gap between two active indices (for each pair of consecutive active indices)}

    \STATE \(\textsc{ActivateThresholds}
    (\mathcal{S}_{\textrm{root}},\mathcal{B}_{\textrm{new}})\)
    \COMMENT{Update the root registry by reference}

    \STATE Clear \(\mathcal{I}_G\)
    \COMMENT{Cached subgraphs may be incomplete for the enlarged active set}

    \STATE \(\mathcal{V}\gets\emptyset\)

    \STATE \(\textsc{RefineGraph}
    (G,D_{\textrm{root}},d,
    \mathcal{S}_{\textrm{root}},\mathcal{V})\)
    \COMMENT{Extend the existing graph using the newly active thresholds}
\ENDWHILE

\WHILE{\textbf{not }
\(\textsc{IsProxyOptimal}(\textsc{Proxy})\)}

    \STATE Increase proxy strength by \(1\)

    \STATE Clear each threshold-to-proxy completion map $\mathcal{C}_\textrm{map}$ 
    \COMMENT{Their stored objectives were computed using the previous proxy}

    \STATE Clear \(\mathcal{I}_G\)
    \COMMENT{Cached completeness guarantees used the previous proxy}

    \STATE \(\mathcal{V}\gets\emptyset\)

    \STATE \(\textsc{RefineGraph}
    (G,D_{\textrm{root}},d,
    \mathcal{S}_{\textrm{root}},\mathcal{V})\)
    \COMMENT{Revisit the existing graph using the stronger proxy}
\ENDWHILE

\STATE \textbf{return} \(G\)
\end{algorithmic}
\end{algorithm}

\FloatBarrier

\subsection{Additional Methods}
\label{app:additional-methods}

The boundary routines are implemented by binary search (with a small amount of initial probing) and use the active thresholds stored in \(\mathcal{S}\).
After a failed threshold, they find the closest active position that can
possibly recover from the gap \(\Delta\), using
\(O(\log |\mathcal{A}_c|)\) calls to \textsc{FarEnough}, where
\(\mathcal{A}_c\) is the ordered list of active exhaustive threshold indices for
feature \(c\). Depending on the active-threshold spacing, however, the nearest
active threshold may already be far enough away. Therefore, each routine first
tests the nearest two active positions in its search direction and returns
immediately if either has active-sample distance at least \(\Delta\); otherwise,
it applies binary search to the remaining interval. Each call to
\textsc{FarEnough} stops as soon as \(\Delta\) differing active samples have
been found.

We store each subproblem \(D\) and exhaustive threshold column \(X_t\) as packed
bitvectors. The \(i\)th bit indicates whether sample \(i\) is active in \(D\)
or satisfies threshold \(t\), respectively. These bitvectors are stored as arrays of 64-bit machine words, allowing each bitwise operation to process 64 samples at once; \textsc{Popcount} returns the number of set bits in each word.

\begin{algorithm}[!h]
\caption{\textsc{RightBoundary}%
\((D,\mathcal{S},c,t,i,b,\Delta)\)}
\label{alg:right-boundary}
\begin{algorithmic}[1]
\INPUT Subproblem bitvector \(D\), threshold registry \(\mathcal{S}\),
continuous feature \(c\), failed exhaustive threshold index \(t\), search
interval \([i,b]\) over active positions, and gap \(\Delta\)

\IF{\(i>b\)}
    \STATE \textbf{return} \(b+1\)
\ENDIF

\FOR{\textbf{each} \(p\in\{i,\min\{i+1,b\}\}\)}
    \STATE \(s\gets\textsc{ExhaustiveIndex}(\mathcal{S},c,p)\)
    \IF{\(\textsc{FarEnough}(D,t,s,\Delta)\)}
        \STATE \textbf{return} \(p\)
    \ENDIF
\ENDFOR

\STATE \(\ell\gets\min\{i+2,b+1\}\); \quad
\(r\gets b\); \quad \(q\gets b+1\)

\WHILE{\(\ell\leq r\)}
    \STATE \(m\gets\lfloor(\ell+r)/2\rfloor\)
    \STATE \(s\gets\textsc{ExhaustiveIndex}(\mathcal{S},c,m)\)
    \IF{\(\textsc{FarEnough}(D,t,s,\Delta)\)}
        \STATE \(q\gets m\); \quad \(r\gets m-1\)
    \ELSE
        \STATE \(\ell\gets m+1\)
    \ENDIF
\ENDWHILE

\STATE \textbf{return} \(q\)
\end{algorithmic}
\end{algorithm}

\begin{algorithm}[!h]
\caption{\textsc{LeftBoundary}%
\((D,\mathcal{S},c,t,j,a,\Delta)\)}
\label{alg:left-boundary}
\begin{algorithmic}[1]
\INPUT Subproblem bitvector \(D\), threshold registry \(\mathcal{S}\),
continuous feature \(c\), failed exhaustive threshold index \(t\), search
interval \([a,j]\) over active positions, and gap \(\Delta\)

\IF{\(a>j\)}
    \STATE \textbf{return} \(a-1\)
\ENDIF

\FOR{\textbf{each} \(p\in\{j,\max\{j-1,a\}\}\)}
    \STATE \(s\gets\textsc{ExhaustiveIndex}(\mathcal{S},c,p)\)
    \IF{\(\textsc{FarEnough}(D,s,t,\Delta)\)}
        \STATE \textbf{return} \(p\)
    \ENDIF
\ENDFOR

\STATE \(\ell\gets a\); \quad
\(r\gets\max\{j-2,a-1\}\); \quad \(q\gets a-1\)

\WHILE{\(\ell\leq r\)}
    \STATE \(m\gets\lfloor(\ell+r)/2\rfloor\)
    \STATE \(s\gets\textsc{ExhaustiveIndex}(\mathcal{S},c,m)\)
    \IF{\(\textsc{FarEnough}(D,s,t,\Delta)\)}
        \STATE \(q\gets m\); \quad \(\ell\gets m+1\)
    \ELSE
        \STATE \(r\gets m-1\)
    \ENDIF
\ENDWHILE

\STATE \textbf{return} \(q\)
\end{algorithmic}
\end{algorithm}

\begin{algorithm}[!h]
\caption{\textsc{FarEnough}%
\((D,s,t,\Delta)\)}
\label{alg:far-enough}
\begin{algorithmic}[1]
\INPUT Subproblem bitvector \(D\), exhaustive threshold indices \(s,t\), and
gap \(\Delta\)

\IF{\(\Delta\leq 0\)}
    \STATE \textbf{return} \(\textbf{true}\)
\ENDIF

\STATE \(z\gets 0\)

\FOR{\textbf{each machine word} \(w\)}
    \STATE \(z\gets z+
    \textsc{Popcount}\!\left(
    D_w\wedge(X_{s,w}\oplus X_{t,w})
    \right)\)
    \IF{\(z\geq\Delta\)}
        \STATE \textbf{return} \(\textbf{true}\)
    \ENDIF
\ENDFOR

\STATE \textbf{return} \(\textbf{false}\)
\end{algorithmic}
\end{algorithm}

We also provide pseudocode for calculating the number of samples that exhibit predictive multiplicity and the number of features used in the Rashomon graph. See Algorithms \ref{alg:count-predictive-multiplicity}, \ref{alg:reachable-predictions}, \ref{alg:count-graph-features}, and \ref{alg:collect-graph-features}.

\begin{algorithm}[!t]
\caption{\textsc{CountSamplesWithMultiplePredictions}$(G,X)$}
\label{alg:count-predictive-multiplicity}
\begin{algorithmic}[1]
\INPUT Root OR node $G$ of the Rashomon graph and training samples $X$
\STATE $m \gets 0$
\FOR{\textbf{each sample} $x_i \in X$}
    \STATE $\mathcal{Y}_i \gets \textsc{ReachablePredictions}(G,x_i)$
    \IF{$|\mathcal{Y}_i| \ge 2$}
        \STATE $m \gets m+1$
    \ENDIF
\ENDFOR
\STATE \textbf{return} $m$
\end{algorithmic}
\end{algorithm}

\begin{algorithm}[!t]
\caption{\textsc{ReachablePredictions}$(G,x)$}
\label{alg:reachable-predictions}
\begin{algorithmic}[1]
\INPUT OR node $G$ and sample $x$
\STATE $\mathcal{Y}_{\mathrm{reachable}} \gets \emptyset$
\FOR{\textbf{each leaf choice} $\ell$ stored in $G$}
    \STATE $\mathcal{Y}_{\mathrm{reachable}}
        \gets
        \mathcal{Y}_{\mathrm{reachable}}
        \cup
        \{\textsc{Prediction}(\ell)\}$
\ENDFOR
\FOR{\textbf{each split choice} $s$ stored in $G$}
    \IF{$x$ satisfies the split condition of $s$}
        \STATE $G' \gets \textsc{LeftChild}(s)$
    \ELSE
        \STATE $G' \gets \textsc{RightChild}(s)$
    \ENDIF
    \STATE $\mathcal{Y}_{\mathrm{reachable}}
        \gets
        \mathcal{Y}_{\mathrm{reachable}}
        \cup
        \textsc{ReachablePredictions}(G',x)$
    \IF{$|\mathcal{Y}_{\mathrm{reachable}}| \ge 2$}
        \STATE \textbf{return} $\mathcal{Y}_{\mathrm{reachable}}$
        \COMMENT{The sample already has conflicting predictions}
    \ENDIF
\ENDFOR
\STATE \textbf{return} $\mathcal{Y}_{\mathrm{reachable}}$
\end{algorithmic}
\end{algorithm}

\begin{algorithm}[!t]
\caption{\textsc{CountGraphFeatures}$(G)$}
\label{alg:count-graph-features}
\begin{algorithmic}[1]
\INPUT Root OR node $G$ of the Rashomon graph
\STATE $\mathcal{F} \gets \emptyset$
\STATE $\mathcal{V} \gets \emptyset$
\STATE $\textsc{CollectGraphFeatures}(G,\mathcal{F},\mathcal{V})$
\STATE \textbf{return} $|\mathcal{F}|$
\end{algorithmic}
\end{algorithm}

\begin{algorithm}[!t]
\caption{\textsc{CollectGraphFeatures}$(G,\mathcal{F},\mathcal{V})$}
\label{alg:collect-graph-features}
\begin{algorithmic}[1]
\INPUT OR node $G$, set of encountered features $\mathcal{F}$, and set of visited OR nodes $\mathcal{V}$
\IF{$G=\emptyset$ \textbf{ or } $G\in\mathcal{V}$}
    \STATE \textbf{return}
\ENDIF
\STATE $\mathcal{V} \gets \mathcal{V}\cup\{G\}$
\FOR{\textbf{each split choice} $s$ stored in $G$}
    \STATE $j \gets \textsc{OriginalFeature}(\textsc{SplitFeature}(s))$
    \COMMENT{Map all thresholds of one continuous feature to the same feature}
    \STATE $\mathcal{F} \gets \mathcal{F}\cup\{j\}$
    \STATE $\textsc{CollectGraphFeatures}(\textsc{LeftChild}(s),\mathcal{F},\mathcal{V})$
    \STATE $\textsc{CollectGraphFeatures}(\textsc{RightChild}(s),\mathcal{F},\mathcal{V})$
\ENDFOR
\end{algorithmic}
\end{algorithm}

We also provide a more efficient implementation of \textsc{GetKthTreeWithObjective} than was present in \citet{heile2026}. For full context, see Algorithm 12-14 in \citet{heile2026}. 

\begin{algorithm}[!t]
\caption{\textsc{GetKthTreeWithObjective}$(G,z,k)$}
\label{alg:get-kth-tree-with-objective}
\begin{algorithmic}[1]
\INPUT OR node $G$, target objective $z$, and zero-indexed position $k$ among trees rooted at $G$ with objective $z$
\STATE $\textsc{BuildHistogramsPost}(G)$
\COMMENT{Ensure all objective histograms are available}

\STATE \COMMENT{Enumerate feasible leaves before splits}
\FOR{\textbf{each} leaf $(b,\ell)\in\textsc{Leaves}(G)$}
    \IF{$\ell=z$}
        \IF{$k=0$}
            \STATE \textbf{return} $\textsc{MakeLeaf}(b)$
        \ENDIF
        \STATE $k\gets k-1$
    \ENDIF
\ENDFOR

\STATE \COMMENT{Enumerate split trees in stored order}
\FOR{\textbf{each} split $s\in\textsc{Splits}(G)$}
    \STATE $G_L\gets\textsc{Left}(s)$;\quad
           $G_R\gets\textsc{Right}(s)$
    \STATE $c\gets 0$
    \COMMENT{Number of objective-$z$ trees encountered under this split}

    \FOR{\textbf{each} $(z_L,n_L)\in\textsc{Histogram}(G_L)$}
        \STATE $z_R\gets z-z_L$
        \STATE $n_R\gets
        \textsc{HistogramCount}(G_R,z_R)$
        \COMMENT{Use binary search in the sorted histogram to find the number of trees with objective $z_R$}

        \IF{$n_R>0$}
            \STATE $n\gets n_Ln_R$
            \COMMENT{Number of trees in this Cartesian-product block}

            \IF{$k<c+n$}
                \STATE $q\gets k-c$
                \STATE $k_L\gets\left\lfloor q/n_R\right\rfloor$
                \STATE $k_R\gets q\bmod n_R$
                \STATE $T_L\gets
                \textsc{GetKthTreeWithObjective}(G_L,z_L,k_L)$
                \STATE $T_R\gets
                \textsc{GetKthTreeWithObjective}(G_R,z_R,k_R)$
                \STATE \textbf{return}
                $\textsc{MakeSplit}(\textsc{Feature}(s),T_L,T_R)$
            \ENDIF

            \STATE $c\gets c+n$
        \ENDIF
    \ENDFOR

    \STATE $k\gets k-c$
    \COMMENT{Skip all objective-$z$ trees under this split}
\ENDFOR

\end{algorithmic}
\end{algorithm}

\FloatBarrier

\subsection{AND/OR Graph Caching}
\label{app:andor-caching}

\begin{figure*}[!h]
    \centering
    \includegraphics[width=1.0\linewidth]{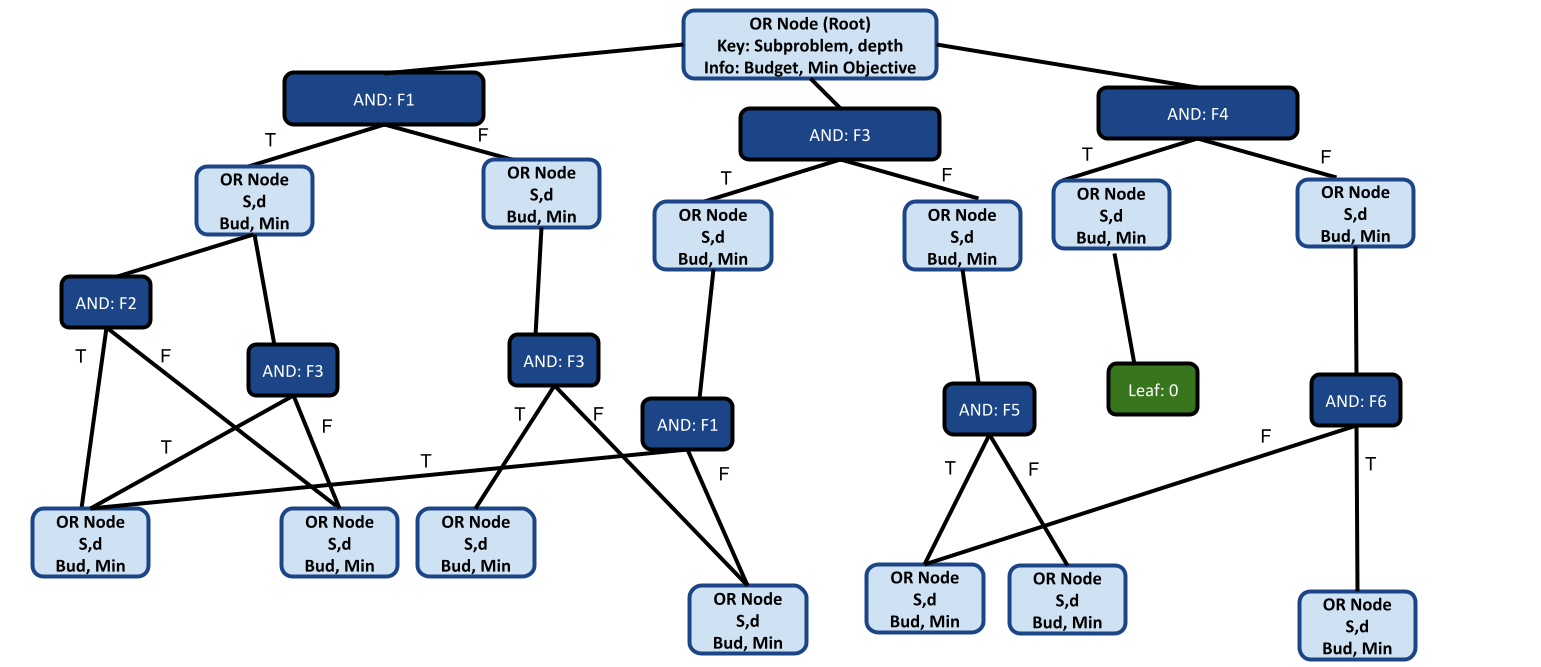}
    \caption{An example graph structure for encoding a Rashomon set. The OR nodes at the bottom of this figure have more split/leaf choices connected to them. We now state what must be true about these features (F1, F2, F3, F4, F5, and F6) for this graph to be built the way it is. It does not matter whether these are truly binary features or a threshold of a continuous feature. 
    \textbf{Assumption 1: } F2 and F3 are the same if F1 is True. Thus, the nodes on the far left connect to the same children (and they need the same budget) 
    \textbf{Assumption 2: }  !F3 \& F5 gives the same bitvector as !F4 \& !F6. They may need the same budget; they also may not need the same budget. The OR node encodes solutions for the larger of the two budgets, so that it works for both. \textbf{Remaining Remark: } The remaining shared children are due to conjunctions of literals being invariant to permutation.}
    \label{fig:tiedgraph}
\end{figure*}

Without caching Rashomon subgraphs, the enumeration
builds a trie over sequences of split decisions. That is, each root-to-node path
corresponds to an ordered sequence of splits encountered by the recursion. This
representation is order-dependent: two different split sequences may reach the
same subproblem, but they are still represented by different nodes because they
arise from different paths in the recursion.

A first improvement is to cache Rashomon subgraphs by subproblem, remaining
depth, and budget. This changes the data structure from an acyclic
AND/OR graph that is a trie into an acyclic
AND/OR graph that isn't a trie. The graph is still acyclic because every split decreases the
remaining depth and restricts the active sample set, but an OR node may now have
multiple parents. Conceptually, this is simply duplicate elimination: whenever
two recursive paths reach the same subproblem with the same remaining depth and
budget, we store one canonical OR node and point both parents to it. We note that this caching covers more than just permutations of split choices because we use cache based on what samples are in the subproblem. That set of samples can be reached in many different ways. 

We push this idea further by making the cache independent of the budget. Instead
of storing separate subgraphs for the same subproblem and remaining depth under
different budgets, we store a single canonical OR node for each pair
$(D,d)$. This node represents the largest budget under which that subproblem has
so far been expanded. If the same subproblem is later reached with a budget that
is no larger than the stored budget, we return the existing node. If it is
reached with a larger budget, we extend the existing node in place. Thus, at no point do
we maintain more than one subgraph for the same subproblem-depth pair (i.e., this is not a deduplication step at the end of the algorithm, but the algorithm never maintains more than one at a time).

This does not lose information because any smaller-budget Rashomon set can be
carved out of the larger-budget subgraph during extraction. In fact, in the approximation regime, this can lead to more trees recovered, because approximation algorithms return a subset of trees with budget $\varepsilon_{\textrm{abs}}$ at each subproblem. The larger graph may
contain split or leaf choices whose objectives exceed a smaller requested budget,
we claim that these choices are can be ignored when extracting trees under that smaller budget.

It remains to explain why this is compatible with the objective histograms used
for indexing trees. For each OR node, we store a histogram
\[
    H_G = \{(o,c)\},
\]
where $o$ is an achievable objective value for a subtree rooted at $G$, and $c$
is the number of such subtrees with objective $o$. Leaf choices contribute one
unit of mass to the bucket corresponding to their leaf objective. A split
contributes by combining the histograms of its two children. If the left and
right child histograms contain entries $(o_L,c_L)$ and $(o_R,c_R)$, respectively,
then the split contributes
\[
    (o_L + o_R,\; c_L c_R)
\]
to the parent histogram, provided that
\[
    o_L + o_R \leq \varepsilon_G,
\]
where $\varepsilon_G$ is the budget of the parent OR node. That is, the propagation step is a budget-filtered cross product, of the
two child histograms.

Because histograms are built after the graph has been expanded, there are no issues with them becoming stale. We also remark that while we are storing one OR node where other representations store multiple, this is not true at the root node. The root node has a single budget and its histogram  is filtered exactly to the Rashomon budget of interest. 

In summary, internal OR nodes may store more information than their parent needs. However, when that parent combines the child histograms, it applies its own budget filter to the child objective pairs. Therefore, even if a child
histogram contains entries enabled by a larger budget, only child subtrees that
fit within the parent's remaining budget contribute to the parent's histogram. This means that tasks like "finding the 50th tree in the Rashomon set", which map to finding the 7th tree with objective 112 are unchanged, if we get to a node that has more information than we need, that doesn't effect the indexing operations. 

Thus, indexing operations are unchanged. For example, finding the 50th tree in the Rashomon set may reduce to finding the 7th tree with objective 112 in a child subgraph; this is the type of recursive indexing used by \citet{heile2026} to enumerate trees in sorted order. The procedure identifies the split containing the desired tree, then recurses on the left and right child subgraphs, asking for the $L_1$th left subtree with objective $L_2$ and the $R_1$th right subtree with objective $R_2$.

The presence of additional trees in the cached child subgraph does not affect this query, because those extra trees have objectives above the budget relevant to the parent query. Indexing only scans histogram entries with the target objective or smaller, and these entries are guaranteed to be fully represented in the cached graph: the cached graph has been expanded to at least the budget needed for the current query. Therefore, the larger cached subgraph behaves exactly like the smaller-budget subgraph for sorted-order indexing.

\FloatBarrier

\section{Theoretical Results}
\label{app:theorems}

\begin{figure}[!b]
    \centering
    \includegraphics[width=1.0\linewidth]{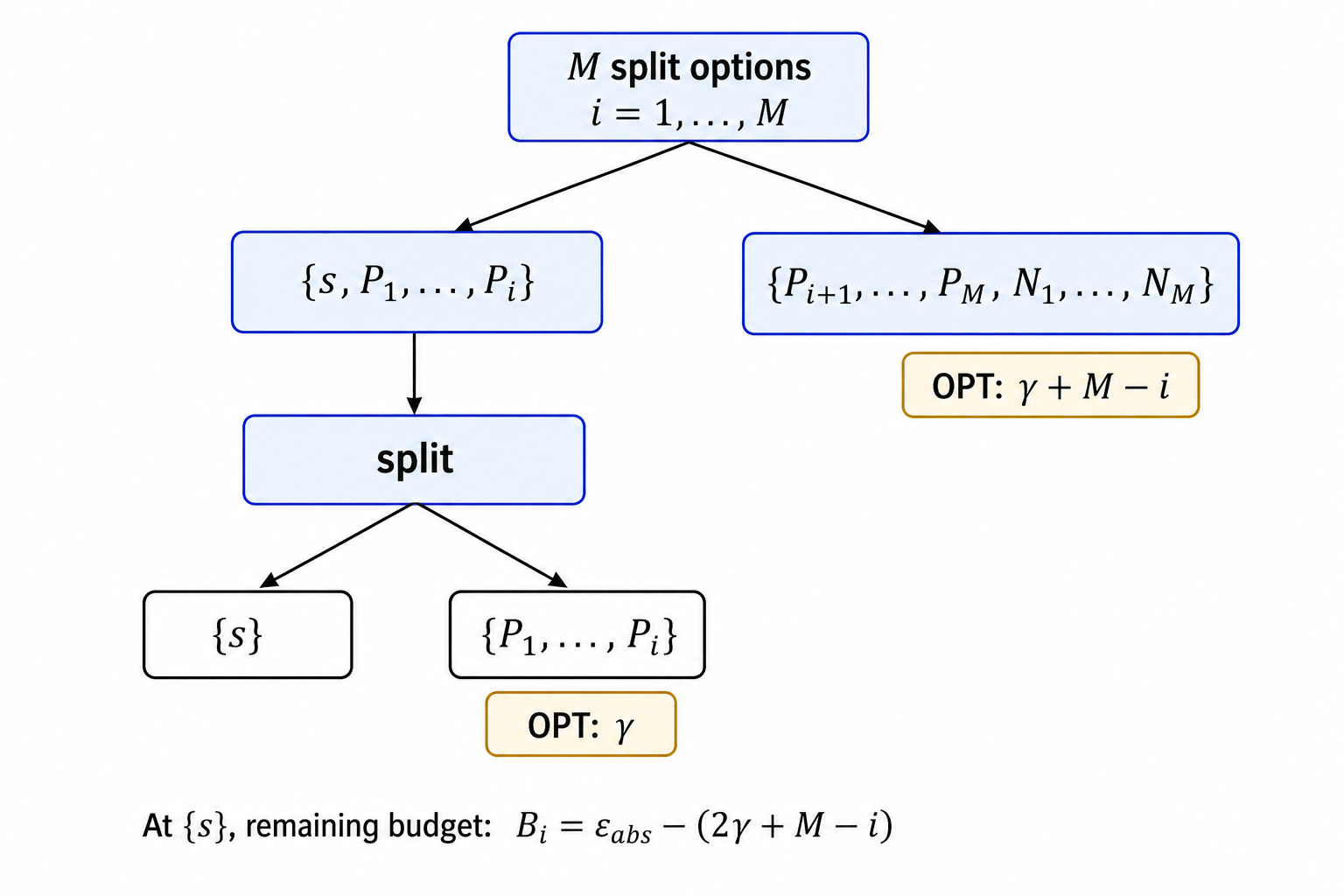}
    \caption{An example showing a dataset where the subproblem $\{ \text{s} \}$ is reached with $M$ different budgets. This is accomplished by using $M$ different splits at the root (they can be viewed as being part of the same continuous feature). There are $2M+1$ samples. $P_i$ are  points with positive labels, $N_i$ are points with negative labels, $s$ is a special point. $\varepsilon_{\text{abs}}$  is the budget at the root. We assume an optimal proxy. }
    \label{fig:morebudgets}
\end{figure}

\begin{theorem}[Distinct Budgets for One Subproblem]
\label{thm:budget-independent}
Fix depth budget \(d=2\). For every \(M\), there exists a binary classification dataset with \(n=2M+1\) samples and \(M\) binary features such that the same subproblem is reached with \(M=\Omega(n)\) distinct remaining budgets. Thus, a budget-dependent AND/OR graph that keys nodes by \((D,d,B)\) may store \(\Omega(n)\) OR nodes for a single canonical subproblem \((D,d)\), while a budget-independent representation stores only one.
\end{theorem}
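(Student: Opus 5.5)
The plan is an explicit construction on a single ordered axis, so that the $M$ binary features are nested (thresholds of one continuous feature, as in Figure~\ref{fig:morebudgets}). The same subproblem $\{s\}$ will be reached at remaining depth $0$ along $M$ different root-to-node paths. On each path, the budget handed to $\{s\}$ by iterative budget refinement (Algorithm~\ref{alg:graph-solve-ordered}, with an optimal proxy) is a different number. Since all these paths share the key $(\{s\},0)$, a $(D,d,B)$-keyed graph stores $M$ OR nodes for it, while the budget-independent index $\mathcal{I}_G$ of Algorithm~\ref{alg:extend-subgraph-continuous} stores one. With $n=2M+1$ we have $M=(n-1)/2=\Omega(n)$.

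\textbf{Construction.} Order the samples along a line as $s, P_1, N_1, P_2, N_2, \ldots$, possibly arranged so that the errors grow in a controlled way, and let $F_i$ be the threshold rule sending $s$ together with the first $c_i$ subsequent points to the left. Reserve $F_1$ to isolate $\{s\}$ alone, i.e.\ $c_1=0$. For $i\ge 2$, the root split on $F_i$ produces a left child $D_i=\{s\}\cup A_i$ at remaining depth $1$. Inside $D_i$, splitting on $F_1$ yields the pair $(\{s\},A_i)$ at remaining depth $0$. Root split $F_1$ produces $\{s\}$ at remaining depth $1$, which has a different key, so it does not interfere. To get $M$ rather than $M-1$ distinct budgets, I would either add one extra path or adjust indices; this only shifts constants and is a minor fix.

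\textbf{Computing the budgets.} With an optimal proxy, $D_i$ receives budget $\varepsilon_i=\varepsilon_{\mathrm{abs}}-\mathrm{OPT}(R_i,1)$, where $R_i$ is the right child of $F_i$. The node $\{s\}$ under $D_i$ then receives
\[
\beta_i=\varepsilon_i-\mathrm{OPT}(A_i,0)=\varepsilon_{\mathrm{abs}}-\mathrm{OPT}(R_i,1)-\bigl(\gamma+\mathrm{err}(A_i)\bigr).
\]
I would choose the labels so that $\mathrm{OPT}(R_i,1)+\mathrm{err}(A_i)$ is strictly monotone in $i$; each step should add one more minority-label point to $A_i$ while keeping $R_i$ solvable by one split at constant cost. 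Alternatively, $\mathrm{OPT}(R_i,1)$ may change in a controlled way, as long as the sum stays strictly monotone. This makes the $\beta_i$ pairwise distinct.

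\textbf{Feasibility.} I would then pick $\gamma$ and the root budget $\varepsilon_{\mathrm{abs}}$ (or $\varepsilon_{\mathrm{mult}}$ relative to the optimum) large enough that every relevant test passes. Every root split $F_i$ must satisfy the proxy test $P_L+P_R\le\varepsilon_{\mathrm{abs}}$. Every $D_i$ must have $\varepsilon_i\ge 2\gamma$, so that the split on $F_1$ is attempted. Every $\beta_i$ must be at least the leaf cost $\gamma$ of $\{s\}$, so that $\{s\}$ is genuinely instantiated as an OR node with that budget rather than pruned. The root optimum must also not collapse so low that these inequalities fail.

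The main obstacle is this last balancing: making the $\beta_i$ span $M$ distinct values while all of them remain $\ge\gamma$ and all root splits stay within budget. I would first try a linear error profile, with $\mathrm{err}(A_i)=i-1$ and $\mathrm{OPT}(R_i,1)$ constant, and set $\varepsilon_{\mathrm{abs}}\approx\mathrm{const}+M+2\gamma$. I would then check that the depth-2 optimum is small enough that this is a legitimate (multiplicative) Rashomon budget, and enlarge $\varepsilon_{\mathrm{mult}}$ if needed.
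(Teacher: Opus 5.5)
Your skeleton matches the paper's proof: one ordered axis, a singleton $\{s\}$ reached at remaining depth $0$ under $M$ different root thresholds, budgets obtained by subtracting sibling optima as in iterative budget refinement, and $\gamma$ large so that every child at depth $\le 1$ is best served by a leaf. Isolating $s$ with a threshold $F_1$ instead of the paper's auxiliary binary feature $z$ is a harmless variant; on the paper's data it produces exactly the same second-level split, $\{s\}$ versus $\{P_1,\dots,P_i\}$. The off-by-one in the feature count is also present in the paper, which uses $M$ thresholds plus $z$.

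The gap is that you never fix the labels, and that choice is essentially the entire content of the proof. Worse, the order you do write down fails. In the large-$\gamma$ regime your formula becomes $\beta_i=\varepsilon_{\mathrm{abs}}-2\gamma-\bigl(\mathrm{err}(A_i)+\mathrm{err}(R_i)\bigr)$. With the interleaved order $s,P_1,N_1,P_2,N_2,\dots$, this error sum equals $M$ for a cut after some $N_k$ and $M-1$ for a cut after some $P_k$. So you get at most two distinct budgets, however $\varepsilon_{\mathrm{abs}}$ is tuned: the growth of $\mathrm{err}(A_i)$ is cancelled by the decrease of $\mathrm{err}(R_i)$.

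What you need is for every point crossing between consecutive cuts to push $\mathrm{err}(A_i)+\mathrm{err}(R_i)$ in the same direction, i.e.\ a same-label block where the cuts live. The paper's construction is as follows:
\begin{itemize}
\item Take $y(s)=0$ and the order $s,P_1,\dots,P_M,N_1,\dots,N_M$.
\item Place the cuts inside the positive block, so $A_i=\{P_1,\dots,P_i\}$ is pure and $\mathrm{OPT}(R_i,1)=\gamma+M-i$.
\item Choose $\gamma>M$ and $\varepsilon_{\mathrm{abs}}=4\gamma$.
\end{itemize}
This gives $\beta_i=2\gamma-M+i$, which are pairwise distinct and all $\ge\gamma$. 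Your feasibility conditions check directly: $\mathrm{OPT}(L_i,1)+\mathrm{OPT}(R_i,1)=2\gamma+1+M-i\le 4\gamma$, and $\varepsilon_i=3\gamma-M+i\ge 2\gamma$.

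Your preferred linear profile is the mirror image. Keep the same order but put the cuts inside the negative block. Then $A_i=\{P_1,\dots,P_M,N_1,\dots,N_{i-1}\}$ has error $i-1$, $R_i$ is pure, and $\beta_i=\varepsilon_{\mathrm{abs}}-2\gamma-(i-1)$. Taking $\gamma>M$ and $\varepsilon_{\mathrm{abs}}\ge 3\gamma+M-1$ makes every test you list pass.

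Either instantiation closes the argument. Without one, the ``balancing'' you flag as the main obstacle is exactly the step that remains unproved.
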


\begin{proof}
Let \(\gamma\) denote the leaf penalty, and choose \(\gamma > M\). We consider an optimal Rashomon set algorithm. It may or may not use bounds for continuous features. 
Construct a dataset consisting of one special point \(s\), positive points \(P_1,\ldots,P_M\), and negative points \(N_1,\ldots,N_M\). Thus \(n=2M+1\). 
Assign labels \[ y(s)=0, \qquad y(P_i)=1, \qquad y(N_i)=0. \] Use a continuous feature \(x\) whose ordering is \[ s, P_1, P_2,\ldots,P_M, N_1,\ldots,N_M. \] 
For each \(i=1,\ldots,M\), let the threshold \(x \le \tau_i\) split the data into \[ L_i = \{s,P_1,\ldots,P_i\}, \qquad R_i = \{P_{i+1},\ldots,P_M,N_1,\ldots,N_M\}. \] 
Also include a binary feature \(z\) satisfying \[ z(s)=1, \qquad z(P_i)=z(N_i)=0. \]

Therefore, after taking threshold \(x \le \tau_i\), a split on \(z\) inside \(L_i\) produces the two children 
\[ \{s\} \qquad\text{and}\qquad \{P_1,\ldots,P_i\}. \]
Hence, for every \(i\), the same subproblem \(S=\{s\}\) is reached after two splits. 
It remains to show that the remaining budget at this same subproblem depends on \(i\). 
Let the root budget \(\varepsilon_{\mathrm{abs}}\) be large enough that all these trees are feasible; for concreteness, take \[ \varepsilon_{\mathrm{abs}} = 4\gamma . \] 
Since \(\gamma > M\), no depth-one split inside \(R_i\) is cheaper than predicting a single leaf. 
Thus \[ \operatorname{Opt}(R_i,1) = \gamma + (M-i), \]
because \(R_i\) contains \(M-i\) positive points and \(M\) negative points, so the minority class has size \(M-i\). 
The other sibling, \(\{P_1,\ldots,P_i\}\), is pure, so \[ \operatorname{Opt}(\{P_1,\ldots,P_i\},0) = \gamma . \]

Therefore, the budget remaining for the shared subproblem \(S=\{s\}\) along the path corresponding to threshold \(i\) is \[ B_i = \varepsilon_{\mathrm{abs}} - \operatorname{Opt}(R_i,1) - \operatorname{Opt}(\{P_1,\ldots,P_i\},0). \]

Substituting the two expressions above gives \[ B_i = 4\gamma - (\gamma + M-i) - \gamma = 2\gamma - M + i. \] 

It remains to verify that the
leaf \(S=\{s\}\) fits within each remaining budget. Since \(S\) is pure and has
remaining depth zero,
\[
    \operatorname{Opt}(S,0)=\gamma.
\]
Using that \(\gamma > M\), then for every \(i=1,\ldots,M\),
\[
    B_i
    =
    2\gamma-M+i
    \ge
    \gamma.
\]

Thus \(B_1,B_2,\ldots,B_M\) are all distinct and achievable within the fixed root budget.
A budget-dependent graph may therefore create the \(M\) distinct OR nodes \[ (S,0,B_1), (S,0,B_2), \ldots, (S,0,B_M), \]

even though the active sample set and remaining depth are identical in all cases. 
A budget-independent graph instead creates one canonical OR node \((S,0)\) and extends it as larger budgets are encountered. 
Since \(n=2M+1\), this is a \(\Omega(n)\)-factor reduction in OR-node count for this subproblem. On \(n\) samples, the mistakes term takes at most \(n+1\) values, and the number of leaves is at most \(n\) (we do not consider leaves with zero-support in our algorithms, consistent with \citet{arslan2025sorted, heile2026, xin2022exploring}). 
\end{proof}

The previous theorem is related (but not exactly equivalent) to how \citet{arslan2025sorted} stores Rashomon sets. Their implementation uses a BranchTracker together with a UB parameter, which is analogous to our objective budget. From inspecting SORTeD, we find that the representation is not fully budget-independent. It avoids duplicate cached entries when an existing cached UB dominates the requested UB; however, if the same subproblem is later requested with a larger UB, SORTeD constructs a new tracker state rather than extending or merging the old one in place. This theorem does not directly apply to SORTeD, but, something else does. Thus, SORTeD supports restricting a previously computed solution set, but not expanding it. Consequently, repeatedly loosening the root budget forces previously solved subproblems to be recomputed under larger descendant budgets (without a natural "extend" operation), without the opportunity for incremental subgraph reuse.

We now connect to TreeFARMS of \citet{xin2022exploring}, showing the analogous bound except for their model sets. TreeFARMS allows $\gamma$ to be non-integer, so beyond this lower bound (which holds ever for integer $\gamma$), there may be even more duplication.

\begin{theorem}[Distinct Objectives for One Subproblem]
\label{thm:model-set-duplication}

Fix depth budget $d=1$. For every $M$, there exists a binary classification dataset with $n=2M$ samples and $M$ binary features such that a single subproblem-depth pair $S$ has $M=\Omega(n)$ distinct achievable objective values among depth-one trees. Therefore, a model-set representation that identifies instances by $(S,\mathrm{objective})$, as in TreeFARMS, will store $\Omega(n)$ distinct model-set instances for the same subproblem-depth pair $S$. A budget-independent representation that keys the subproblem only by $S$ and stores all objective realizations beneath that node stores only one canonical subproblem node.
\end{theorem}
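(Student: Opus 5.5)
We will give an explicit construction, in the spirit of the proof of Theorem~\ref{thm:budget-independent}. The aim is a single subproblem $S$ whose depth-one trees realize $M$ distinct objective values. Take $n=2M$ samples: positives $P_1,\ldots,P_M$ (label $1$) and negatives $N_1,\ldots,N_M$ (label $0$). Let $S$ be the whole dataset at remaining depth $d=1$. For $i=1,\ldots,M$, define a binary feature $f_i$ (equivalently, a threshold of one continuous feature with ordering $P_1,\ldots,P_M,N_1,\ldots,N_M$) by
\[
f_i=1 \text{ on } \{P_1,\ldots,P_i\}, \qquad f_i=0 \text{ on } \{P_{i+1},\ldots,P_M,N_1,\ldots,N_M\}.
\]
Each $f_i$ is nonconstant on $S$, so it induces a genuine depth-one tree with two leaves and no empty leaves.

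Next we compute the objective of the depth-one tree splitting on $f_i$ with majority-class leaves. The left leaf $\{P_1,\ldots,P_i\}$ is pure and costs $\gamma$. The right leaf contains $M-i$ positives and $M$ negatives, so predicting $0$ gives $M-i$ mistakes. The objective is therefore
\[
2\gamma+(M-i), \qquad i=1,\ldots,M,
\]
which takes the $M$ pairwise distinct values $2\gamma,2\gamma+1,\ldots,2\gamma+M-1$. This holds for any $\gamma$, including integer $\gamma$. We do not need these trees to be optimal, only achievable. Still, to argue that all of them can actually appear in a Rashomon set, we note a concrete choice: take $\gamma$ and the budget with $\varepsilon_{\mathrm{abs}}\ge 2\gamma+M-1$, e.g.\ $\varepsilon_{\mathrm{abs}}=2\gamma+M$. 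Then every one of these trees is within budget, and the optimum $2\gamma$ is attained at $i=M$.

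Finally, we connect this to the representation. TreeFARMS indexes model-set instances by (subproblem, objective), so it creates a separate instance for each of the $M$ distinct objective values at the same $(S,1)$. That gives $M=n/2=\Omega(n)$ instances. By contrast, the budget-independent index $\mathcal{I}_G$ keys only on $(S,d)$, as in lines 1--3 of Algorithm~\ref{alg:extend-subgraph-continuous}. It therefore stores one OR node whose split choices carry all these objectives, and the objective histogram separates them only at extraction time.

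The only delicate point is making precise what ``identifies instances by $(S,\mathrm{objective})$'' means for TreeFARMS. We will phrase the claim as a lower bound on distinct keys, since each achievable objective within budget yields a nonempty model set at that key. The remaining arithmetic is routine.
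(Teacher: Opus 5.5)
Your proposal is correct and matches the paper's proof essentially line for line. It uses the same nested prefix features (your $f_i$, the paper's $z_i$, equal to $1$ exactly on $P_1,\ldots,P_i$), computes the same objective $2\gamma+(M-i)$ from a pure left leaf and a right leaf with $M-i$ minority points, and picks the same budget of at least $2\gamma+M-1$ so that all $M$ trees are included. One small aside: your remark that $2\gamma$ is the optimum holds only among split trees, or when $\gamma\le M$, since the single leaf costs $\gamma+M$; as you note, only achievability matters, so this does not affect the argument.
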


\begin{proof}
Let $\gamma$ denote the leaf penalty. Construct a binary classification dataset with positive points
\[
    P_1,\ldots,P_M
\]
and negative points
\[
    N_1,\ldots,N_M.
\]
Thus $n=2M$. Let the subproblem that we refer to in the theorem statement be the full support
\[
    S=\{P_1,\ldots,P_M,N_1,\ldots,N_M\}.
\]

We do not continue to remind the reader that $S$ is paired with a depth budget of 1; we instead take it to be implied.

For each $i=1,\ldots,M$, define a binary feature $z_i$ by
\[
    z_i(P_j)=1 \quad\text{if and only if } j\le i,
    \qquad
    z_i(N_j)=0 \quad\text{for all } j.
\]
Splitting $S$ on $z_i$ gives the two children
\[
    L_i=\{P_1,\ldots,P_i\},
    \qquad
    R_i=\{P_{i+1},\ldots,P_M,N_1,\ldots,N_M\}.
\]
The left child $L_i$ is pure positive, so its leaf misclassification cost is zero. The right child $R_i$ contains $M-i$ positives and $M$ negatives, so its optimal leaf prediction is negative and its misclassification cost is $M-i$. Therefore, the objective of the depth-one tree that splits on $z_i$ is
\[
    \mathrm{Obj}_i = (M-i) + 2\gamma,
\]
because the tree has two leaves. As $i$ ranges from $1$ to $M$, the values
\[
    2\gamma+M-1,\quad 2\gamma+M-2,\quad \ldots,\quad 2\gamma
\]
are all distinct. Hence the same canonical subproblem $S$ has $M$ distinct achievable objective values within depth budget $d=1$.

Now choose the Rashomon threshold large enough to include all these trees, for example
\[
     \ge 2\gamma+M-1.
\]
Then all $M$ depth-one trees belong to the Rashomon set. We can also include single-leaf trees with either a positive or negative predicting leaf (depending on the budget).
\end{proof}

We now connect to PRAXIS of \citet{heile2026}, showing a much stronger bound because there is no subgraph caching, only caching of proxy solutions.

\begin{theorem}[Factorial Path Duplication]
\label{thm:path-duplication}
For every depth \(d\), there exists a binary classification dataset with \(d\)
binary features such that, if the Rashomon budget contains all depth-\(d\) trees,
then every depth zero subproblem is represented at least \(d!\) times in the representation from \citet{heile2026}, but only once in a canonical AND/OR graph keyed by active sample
set and remaining depth.
\end{theorem}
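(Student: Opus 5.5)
We plan to use the simplest dataset on which every ordering of the \(d\) features produces a distinct recursion path: take \(d\) binary features \(z_1,\ldots,z_d\), and let the samples be all \(2^d\) points of \(\{0,1\}^d\). Assign any labels; for concreteness, use the parity of the bit vector, so that no nontrivial subproblem is pure. Every conjunction of \(d\) literals, one per feature, selects exactly one point. Hence every split is nontrivial at every depth, and the subproblem reached after fixing the values of all \(d\) features is a singleton \(\{x\}\). If we instead want the statement to cover every depth-zero subproblem reached, including those after fewer splits, we restrict attention to subproblems reached at remaining depth zero. At remaining depth zero, the subproblem is determined by the literal assignment fixed along the path.

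The key steps proceed as follows. First, fix a Rashomon budget \(\varepsilon_{\mathrm{abs}}\) at least the largest objective of any depth-\(d\) tree on this dataset, for example \(2^d + \gamma 2^d\). Then every proxy test passes, and \textsc{PRAXIS} explores every split at every node whose feature is still nonconstant. Second, recall from the description of \citet{heile2026} in Appendix~\ref{app:andor-caching} that, without subgraph caching, the representation is a trie over ordered split sequences: each root-to-node path is its own node. Third, count the paths to a fixed depth-zero subproblem \(\{x\}\). The features are split in some order \(\pi\in S_d\), each taking the branch consistent with \(x\). Distinct permutations give distinct split sequences and therefore distinct trie nodes, which yields at least \(d!\) copies. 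We must also check that each such path is actually generated. At a node where features \(\pi(1),\ldots,\pi(k)\) have been fixed, the remaining features are nonconstant on the current subproblem, which is a subcube of dimension \(d-k\ge 1\). Both children are nonempty, the depth budget allows another split, and the budget is not exceeded by the choice of \(\varepsilon_{\mathrm{abs}}\), so the split is enumerated. Finally, in the canonical graph of Algorithm~\ref{alg:extend-subgraph-continuous}, the index \(\mathcal{I}_G\) is keyed by \((D,d)\), as in lines 1--3. All \(d!\) paths therefore resolve to the same OR node \((\{x\},0)\), so it is stored exactly once.

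We expect the main obstacle to be justifying the claim that the representation of \citet{heile2026} really duplicates these nodes and does not collapse them. That representation caches proxy values keyed by subproblem, but it does not cache Rashomon subgraphs. The argument must make explicit that proxy caching reuses only numbers and never shares OR nodes, and that each recursive call allocates a fresh subgraph. We would cite the trie description and note that recursion from scratch (``repeatedly recurses from scratch'') creates a new node per call. A secondary subtlety is the budget condition: we must ensure that iterative budget refinement gives each child a budget large enough to continue splitting. Choosing \(\varepsilon_{\mathrm{abs}}\) so large that every child budget exceeds every achievable subtree objective makes this automatic.
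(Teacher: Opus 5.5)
Your proposal is correct and follows the same route as the paper's proof: the dataset is the full cube $\{0,1\}^d$, each of the $d!$ feature orderings isolates a fixed point $v$, the path-based trie of \citet{heile2026} creates one node per ordering, and the $(D,d)$-keyed index collapses these to a single OR node; your extra checks (every split is nontrivial on a subcube, and child budgets stay large enough under iterative refinement) make explicit what the paper only asserts. One small cleanup: remaining depth zero is reached only after exactly $d$ splits, necessarily on distinct features since a repeated feature gives an empty child, so your aside about depth-zero subproblems ``reached after fewer splits'' is vacuous and every depth-zero subproblem is one of the singletons you count.
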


\begin{proof}
Let the dataset contain one sample for each binary vector in \(\{0,1\}^d\), with
features \(x_1,\ldots,x_d\). Take the Rashomon budget large enough that every
depth-\(d\) tree is feasible.

Fix any sample \(v=(v_1,\ldots,v_d)\in\{0,1\}^d\), and let
\[
    S_v = \{x\in\{0,1\}^d : x_j=v_j \text{ for all } j=1,\ldots,d\}.
\]
This identifies a subregion containing only \(v\). For any permutation
\(\pi\) of \(\{1,\ldots,d\}\), consider the path that splits first on
\(x_{\pi(1)}\), then on \(x_{\pi(2)}\), and so on, always following the branch
\[
    x_{\pi(k)} = v_{\pi(k)}.
\]
After all \(d\) splits, the active sample set is
\[
    \{x : x_{\pi(1)}=v_{\pi(1)},\ldots,x_{\pi(d)}=v_{\pi(d)}\}
    =
    S_v.
\]
Thus the same depth-zero subproblem \((S_v,0)\) is reached by all \(d!\) feature
orderings.

The representation in \citet{heile2026} caches proxy solutions based on caching a bitvector representing what samples are in the subproblem, but does not cache AND/OR subgraphs. Thus, the only way that AND/OR subgraphs are attached is by directly following the recursion path, which is ordered. That means that the representation in \citet{heile2026} is stored in memory as a path-based trie and identifies subproblems by the ordered sequence of splits used
to reach them, even if the bitvector-based caching of subproblems allows for quick construction of a subgraph for the same problem. Thus, under the representation from \citet{heile2026}, these \(d!\) orderings create \(d!\) distinct nodes for
the same subproblem \((S_v,0)\). In contrast, a canonical AND/OR graph identifies
subproblems by their active sample set and remaining depth. Therefore, all of
these paths point to a single OR node for \((S_v,0)\). Since \(v\) was arbitrary,
the claim holds for every depth-zero subproblem.
\end{proof}

The following results characterize what happens as a continuous feature is binarized using increasingly many thresholds: as the selected thresholds better approximate the full set of continuous splits, the worst-case optimality gap decreases according to Theorem~\ref{thm:snapping}. Theorems~\ref{thm:rank-net}--\ref{thm:quantile-snapping} give concrete ways to select thresholds within each feature while controlling this approximation error and, in some cases, the number of thresholds required. We also study how to refine an existing threshold set by adding new thresholds, directly connecting these guarantees to our anytime algorithm. In particular, Theorem~\ref{thm:rank-refinement-halves} shows that one midpoint-refinement round reduces the covering radius from $\delta$ to at most $\lceil \delta/2 \rceil$.

\begin{theorem}[Binarization Optimality Gap]
\label{thm:snapping}
Let $D=\{(x_i,y_i)\}_{i=1}^n$ be a binary classification dataset with continuous
features. Consider decision trees of depth at most $d$ with objective
\[
    \mathrm{Obj}(T,D,\gamma)
    =
    \mathrm{misclassifications}(T;D)
    +
    \gamma |\mathrm{Leaves}(T)|.
\]
Let $\mathcal{H}_{\mathrm{cont}}$ denote the set of all continuous threshold
splits and let $\mathcal{H}_{\mathrm{bin}}$ denote a restricted set of binary
threshold columns.

Assume that $\mathcal{H}_{\mathrm{bin}}$ satisfies the following Hamming
snapping condition: for every continuous threshold split
$h\in\mathcal{H}_{\mathrm{cont}}$, there exists a binary threshold split
$\hat h\in\mathcal{H}_{\mathrm{bin}}$ such that
\[
    d_H(h,\hat h)
    :=
    \left|\{i : h(x_i)\neq \hat h(x_i)\}\right|
    \le \delta .
\]
Then
\[
    \mathrm{OPT}_{\mathrm{bin}}(D,d)
    -
    \mathrm{OPT}_{\mathrm{cont}}(D,d)
    \le
    (2^d-1)\delta,
\]
where $\mathrm{OPT}_{\mathrm{cont}}(D,d)$ is the optimal objective over
depth-$d$ trees using arbitrary continuous thresholds, and
$\mathrm{OPT}_{\mathrm{bin}}(D,d)$ is the optimal objective over depth-$d$
trees using only splits from $\mathcal{H}_{\mathrm{bin}}$.
\end{theorem}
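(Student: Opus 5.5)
We take an optimal continuous tree $T^\ast$ and snap every internal split to a binarized one, keeping the tree shape and the leaf labels fixed. This gives a tree $\hat T$ that uses only splits from $\mathcal{H}_{\mathrm{bin}}$, has the same number of leaves, and hence the same penalty term $\gamma|\mathrm{Leaves}|$. Then
\[
\mathrm{OPT}_{\mathrm{bin}}(D,d)-\mathrm{OPT}_{\mathrm{cont}}(D,d)\le \mathrm{Obj}(\hat T,D,\gamma)-\mathrm{Obj}(T^\ast,D,\gamma),
\]
and it suffices to bound how much the misclassification count can grow under snapping.

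\textbf{Step 1: count internal nodes.} A tree of depth at most $d$ has at most $2^d-1$ internal nodes. For each internal node $v$ with continuous split $h_v$, choose $\hat h_v\in\mathcal{H}_{\mathrm{bin}}$ with $d_H(h_v,\hat h_v)\le\delta$; this is guaranteed by the Hamming snapping condition. Note that $d_H$ is measured over the whole dataset $D$, not just the samples reaching $v$. That direction is favorable: the disagreement restricted to any subset is still at most $\delta$.

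\textbf{Step 2: charge each changed prediction to a node.} Call sample $i$ \emph{disturbed} if there is some internal node $v$ with $h_v(x_i)\neq\hat h_v(x_i)$. Suppose $i$ is not disturbed. Then, by induction down the root-to-leaf path, $i$ follows the same route in $\hat T$ as in $T^\ast$. It therefore lands in the same leaf and receives the same label, so its misclassification status does not change. Each disturbed sample adds at most one misclassification. Charging it to any node where it disagrees gives at most $\sum_v d_H(h_v,\hat h_v)\le(2^d-1)\delta$ by a union bound.

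\textbf{Main obstacle.} The one subtle point is the induction in Step 2. We need that whenever $i$'s path in $\hat T$ coincides with its path in $T^\ast$ up to node $v$, agreement of $h_v$ and $\hat h_v$ on $x_i$ sends $i$ to the same child. This is immediate because the tree shapes are identical. A disturbed sample may disagree only at nodes it never reaches in $T^\ast$, in which case the union bound simply overcounts, which is harmless.

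We must also rule out two technicalities. First, snapping may create an empty leaf; this is harmless because the objective is computed with the same leaf count, and we may keep or prune that leaf, and pruning only lowers the objective. Second, the ``trees of depth at most $d$'' class is closed under this operation, since $\hat T$ has the same depth as $T^\ast$. Combining the steps gives $\mathrm{Obj}(\hat T)\le\mathrm{OPT}_{\mathrm{cont}}+(2^d-1)\delta$, which yields the bound.
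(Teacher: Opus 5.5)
Your proof is correct and takes the same approach as the paper: snap each of the at most $2^d-1$ internal splits of an optimal continuous tree $T^\star$ to a nearby binarized split, keep the topology and leaf labels so the penalty $\gamma|\mathrm{Leaves}|$ is unchanged, and union-bound the extra misclassifications by $(2^d-1)\delta$. Your ``disturbed sample'' argument and your remark about empty leaves spell out more explicitly what the paper argues by replacing one split at a time.
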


\begin{proof}
Let $T^\star$ be an optimal depth-$d$ continuous-threshold tree. We construct a
tree $\widehat T$ with the same topology and the same leaf predictions as
$T^\star$, but with each internal split replaced by its snapped binary split.

A depth-$d$ binary tree has at most $2^d-1$ internal nodes. By the Hamming
snapping assumption, each internal split of $T^\star$ can be replaced by a
binary split that disagrees with it on at most $\delta$ training samples. When
one split is replaced, only samples whose branch assignment changes at that
split can possibly change their final leaf assignment. Therefore, replacing one
split can change the prediction of at most $\delta$ training samples, and hence
can increase the number of misclassifications by at most $\delta$.

Applying this argument to every internal node and taking a union bound, replacing
all internal splits increases the number of misclassifications by at most
\[
    (2^d-1)\delta.
\]
The snapped tree $\widehat T$ has the same topology as $T^\star$, so it has the
same number of leaves. Hence the leaf penalty is unchanged:
\[
    \gamma |\mathrm{Leaves}(\widehat T)|
    =
    \gamma |\mathrm{Leaves}(T^\star)|.
\]
Thus
\[
    \mathrm{Obj}(\widehat T;D)
    \le
    \mathrm{Obj}(T^\star;D) + (2^d-1)\delta.
\]
Since $\widehat T$ is feasible for the binarized problem,
\[
    \mathrm{OPT}_{\mathrm{bin}}(D,d)
    \le
    \mathrm{Obj}(\widehat T;D).
\]
Combining this with
$\mathrm{Obj}(T^\star;D)=\mathrm{OPT}_{\mathrm{cont}}(D,d)$ gives
\[
    \mathrm{OPT}_{\mathrm{bin}}(D,d)
    -
    \mathrm{OPT}_{\mathrm{cont}}(D,d)
    \le
    (2^d-1)\delta.
\]
\end{proof}

\begin{algorithm}[t]
  \caption{Greedy threshold selection algorithm}
  \label{alg:rank-net}
  \begin{algorithmic}[1]
    \REQUIRE Attainable split ranks $R_j = \{r_1 < \cdots < r_m\}$, radius $\delta$
    \STATE $B_j \gets \emptyset$, \quad $i \gets 1$
    \WHILE{$i \le m$}
      \STATE $a \gets r_i$
      \STATE Let $q$ be the largest index such that $r_q \le a + \delta$
      \STATE $B_j \gets B_j \cup \{r_q\}$
      \STATE Set $i$ to the smallest index such that $r_i > r_q + \delta$
    \ENDWHILE
    \RETURN $B_j$
  \end{algorithmic}
\end{algorithm}

 Algorithm \ref{alg:rank-net} provides one way to choose different thresholds for a continuous feature such that every split on a midpoint of two unique values is within $\delta$ of a chosen threshold. This is a standard greedy algorithm that appears in scheduling or covering problems. As we show in Theorem \ref{thm:rank-net}, it is optimal because we are working in 1D (handling one continuous feature at a time).
 
\begin{theorem}[Selecting Thresholds with Covering Radius Guarantee]
\label{thm:rank-net}
  Fix a continuous feature $j$ and let
  \[
    R_j = \{r_1 < \cdots < r_m\}
  \]
  be its attainable split ranks, i.e., the cumulative sample counts after each
  distinct feature value.  For any integer $\delta \ge 0$, Algorithm~\ref{alg:rank-net}
  returns a set $B_j \subseteq R_j$ such that
  \[
    \max_{r \in R_j} \min_{b \in B_j} |r - b| \le \delta.
  \]
  Moreover, among subsets of $R_j$ with the above covering radius guarantee (at most $\delta$),
  the algorithm uses the minimum possible number of selected thresholds.
\end{theorem}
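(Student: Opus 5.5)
We argue directly from the loop invariant of Algorithm~\ref{alg:rank-net}, treating the ranks as a sorted list of integers and the problem as covering points on a line by selected points. Let $a_1<a_2<\cdots$ denote the values of $a=r_i$ at the start of successive iterations and $b_1<b_2<\cdots$ the selected elements $r_q$. The first step is well-definedness and termination. Since $r_i\le r_i+\delta$, the largest index $q$ with $r_q\le a+\delta$ exists and satisfies $q\ge i$. The next start index is strictly larger than $q$, so $i$ strictly increases and the loop halts after at most $m$ iterations.

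\textbf{Covering.} We show by induction that when iteration $k$ begins at index $i$, every $r_{i'}$ with $i'<i$ is within $\delta$ of some already selected $b_{k'}$. In iteration $k$ the algorithm selects $b_k=r_q$ with $a_k\le b_k\le a_k+\delta$. Every $r$ with $a_k\le r\le b_k$ satisfies $|r-b_k|\le b_k-a_k\le\delta$. Every $r$ with $b_k<r\le b_k+\delta$ also satisfies $|r-b_k|\le\delta$. The next start index is the first $r_i>b_k+\delta$, so every rank skipped in this iteration lies in $[a_k,b_k+\delta]$ and is covered. When the loop ends, the invariant gives the bound $\max_{r}\min_{b}|r-b|\le\delta$.

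\textbf{Minimality.} We use the standard disjoint-witness argument. The starts $a_1<a_2<\cdots<a_K$ are elements of $R_j$, and the algorithm selects exactly $K$ points. We claim $a_{k+1}-a_k>2\delta$. This holds because $a_{k+1}>b_k+\delta$, and $b_k$ is the largest rank at most $a_k+\delta$. There are two cases. If $b_k\ge a_k$, then $a_{k+1}>a_k+\delta$. Moreover, since $a_{k+1}$ is a rank exceeding $b_k$ and $b_k$ is the largest rank at most $a_k+\delta$, we get $a_{k+1}>a_k+\delta$ again, which alone does not give $2\delta$. So we argue differently. Suppose some $b\in R_j$ covers both $a_k$ and $a_{k+1}$. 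Then $b\le a_k+\delta$, hence $b\le b_k$ by the maximality of $q$. Also $b\ge a_{k+1}-\delta>b_k$, which is a contradiction. Hence no feasible set can use one point to cover two distinct starts. Therefore any $B\subseteq R_j$ with covering radius at most $\delta$ has $|B|\ge K=|B_j|$.

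The main subtlety is this last step. The witnesses must be shown pairwise non-co-coverable specifically by points of $R_j$, and a purely metric $2\delta$ gap does not hold. The argument therefore relies on the maximality of $q$ within $R_j$ rather than on distances alone. The non-adjacent pairs $a_k,a_{k'}$ with $k'>k+1$ follow the same way, since $a_{k'}\ge a_{k+1}$.
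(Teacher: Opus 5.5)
Your proof is correct, but it establishes minimality by a different route than the paper.

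The paper uses an exchange argument. At the leftmost uncovered rank $a$, any feasible set must cover $a$ with an element of $R_j\cap(-\infty,a+\delta]$, hence with a rank no larger than the greedy choice $b$. Swapping that element for $b$ loses no coverage to the right, and the argument recurses on the uncovered suffix.

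You instead give a packing lower bound. The greedy starts $a_1<\cdots<a_K$ lie in $R_j$ and no element of $R_j$ can cover two of them. A common cover $b$ of $a_k$ and $a_{k'}$ with $k<k'$ would satisfy $b\le a_k+\delta$, hence $b\le b_k$ by maximality of $q$, while also $b\ge a_{k'}-\delta\ge a_{k+1}-\delta>b_k$. So any feasible $B\subseteq R_j$ needs at least $K=|B_j|$ elements.

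Both arguments rest on the same fact: every element of $R_j$ within $\delta$ of the current start is at most the greedy choice. The paper uses it to justify the swap; you use it to forbid double coverage. Your version yields an explicit certificate of optimality. It also avoids the suffix induction and the bookkeeping of which element of an optimal solution gets exchanged at each stage. The exchange argument, in turn, shows more directly how any optimal solution can be transformed into the greedy one. You additionally prove the covering property with an explicit loop invariant, which the paper only asserts.

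One cleanup: delete the sentence claiming $a_{k+1}-a_k>2\delta$, together with the case split. The case $b_k\ge a_k$ always holds, since $q\ge i$. As you yourself observe, the $2\delta$ claim is false in general: for $R_j=\{1,4\}$ and $\delta=2$ the greedy starts are $1$ and $4$. Your final argument does not depend on it.
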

 
\begin{proof}
  Let $a$ be the leftmost uncovered attainable rank.  Any selected rank that
  covers $a$ must lie in $R_j \cap (-\infty, a + \delta]$.  The algorithm
  chooses the largest such rank, say $b$.  This choice covers all attainable
  ranks up to $b + \delta$.  No feasible solution can cover $a$ using a
  selected rank to the right of $b$, because $b$ is the largest attainable
  rank within distance $\delta$ of $a$.  Therefore, replacing the first
  selected rank of any optimal solution by $b$ cannot decrease the set of
  ranks covered to the right.  After removing the ranks covered by $b$, the
  same argument applies recursively to the remaining suffix.  Thus the greedy
  algorithm is optimal.  The covering property follows directly from the
  construction.
\end{proof}

\begin{theorem}[Optimality Gap induced by Threshold-Selection]
\label{cor:rank-net-gap}
  For each continuous feature $j$, construct $B_j$ using
  Algorithm~\ref{alg:rank-net} with radius $\delta$, and let
  $\mathcal{H}_{\mathrm{bin}}$ be the corresponding selected threshold
  columns.  Then
  \[
    \mathrm{OPT}_{\mathrm{bin}}(D, d)
    -
    \mathrm{OPT}_{\mathrm{cont}}(D, d)
    \le (2^d - 1)\,\delta.
  \]
  Consequently, choosing
  \[
    \delta \le \frac{\epsilon\,}{2^d - 1}
  \]
  gives a objective gap of at most $\epsilon$.
\end{theorem}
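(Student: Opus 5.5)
This follows by combining Theorem~\ref{thm:rank-net} with Theorem~\ref{thm:snapping}; the only real work is translating the rank-space covering guarantee into the Hamming snapping condition.

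First, I would fix the setup. Take a continuous threshold split $h\in\mathcal{H}_{\mathrm{cont}}$ on feature $j$, i.e., a rule $x_j\le\nu$. On the training data, only the set $\{i: x_{ij}\le\nu\}$ matters. I identify that set with its rank $r(h)=|\{i:x_{ij}\le\nu\}|$, the cumulative sample count after the largest distinct value $\le\nu$. Thus every continuous split on feature $j$ induces the same training partition as a split at some attainable rank $r\in R_j$. The degenerate ranks $0$ and $n$ are constant splits; I would either include them in $R_j$ or note they can be replaced by any split with a leaf-equivalent effect. The cleanest choice is to include them so that Theorem~\ref{thm:rank-net} covers them too.

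Second, I would prove the key identity: for two threshold splits on the same feature with ranks $r\le r'$, the Hamming distance equals $r'-r$. The reason is that the columns are nested, since $x_j\le\nu\implies x_j\le\nu'$. So the samples on which the two splits disagree are exactly those with $\nu<x_{ij}\le\nu'$, and there are $r'-r$ of them. Hence $d_H(h,\hat h)=|r(h)-r(\hat h)|$.

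Third, I would invoke Theorem~\ref{thm:rank-net} to obtain, for each $r\in R_j$, some $b\in B_j$ with $|r-b|\le\delta$. Let $\hat h$ be the selected threshold column corresponding to $b$. By the identity above, $d_H(h,\hat h)\le\delta$, so $\mathcal{H}_{\mathrm{bin}}$ satisfies the Hamming snapping condition of Theorem~\ref{thm:snapping}. Ordinary binary features, if present, are kept unchanged and snap to themselves with distance $0$. Theorem~\ref{thm:snapping} then gives $\mathrm{OPT}_{\mathrm{bin}}-\mathrm{OPT}_{\mathrm{cont}}\le(2^d-1)\delta$. The consequence follows immediately by substituting $\delta\le\epsilon/(2^d-1)$. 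Because $\delta$ is an integer in Algorithm~\ref{alg:rank-net}, one takes $\delta=\lfloor\epsilon/(2^d-1)\rfloor$.

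The main obstacle is the careful handling of the rank identification. Specifically, the argument must show that every continuous split, including those at the boundaries, corresponds to some attainable rank. It must also show that the Hamming distance is measured over all $n$ training samples, so that the rank difference is exactly the count of samples strictly between the thresholds. Nesting of the columns makes this direct.
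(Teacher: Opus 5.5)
Your proposal is correct and follows the paper's proof essentially step for step: map each continuous split to an attainable rank, use nestedness of the threshold columns to get $d_H(h,\hat h)=|r-b|$, invoke Theorem~\ref{thm:rank-net} to obtain the Hamming snapping condition, and conclude with Theorem~\ref{thm:snapping}. Your extra remarks on unchanged binary features, integrality of $\delta$, and the degenerate ranks are details the paper leaves implicit; for the degenerate ranks, prefer dropping globally constant splits from an optimal tree over enlarging $R_j$, since enlarging $R_j$ changes the $B_j$ that Algorithm~\ref{alg:rank-net} returns.
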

 
\begin{proof}
  Every continuous threshold on feature $j$ is equivalent on the training
  data to one attainable rank $r \in R_j$, since tied feature values cannot
  be separated.  By Theorem~\ref{thm:rank-net}, there exists a selected rank
  $b \in B_j$ with $|r - b| \le \delta$.  The corresponding threshold columns
  differ on exactly the samples whose ranks lie between $r$ and $b$, so their
  Hamming distance is at most $\delta$.  Thus $\mathcal{H}_{\mathrm{bin}}$
  satisfies the Hamming snapping condition of Theorem~\ref{thm:snapping},
  and the bound $(2^d - 1)\,\delta$ follows immediately. 
\end{proof}

\begin{theorem}[Number of Thresholds for some Covering Radius]
\label{thm:rank-net-size}
Let \(R_j=\{r_1<\cdots<r_m\}\) be the attainable split ranks for feature \(j\).
Algorithm~\ref{alg:rank-net} selects at most
\[
    |B_j|
    \le
    \left\lceil \frac{r_m-r_1+1}{2\delta+1}\right\rceil
    \le
    \left\lceil \frac{n}{2\delta+1}\right\rceil
\]
thresholds.
\end{theorem}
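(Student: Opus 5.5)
The plan is a packing argument on the selected ranks. I will show that consecutive selections made by Algorithm~\ref{alg:rank-net} are spaced at least $2\delta+1$ apart. Then I will count how many such points fit, which requires tracking where the first and last selections can lie relative to $r_1$ and $r_m$.

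\textbf{Step 1: the greedy invariant.} Let the iterations produce anchors $a_1<a_2<\cdots<a_k$ (the leftmost uncovered rank at each iteration) and selections $b_1,\dots,b_k$, so $|B_j|=k$. By the update rule, $a_{t+1}$ is the smallest rank with $a_{t+1}>b_t+\delta$. Since ranks are integers, this gives $a_{t+1}\ge b_t+\delta+1$. Also $a_t\le b_t\le a_t+\delta$.

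\textbf{Step 2: spacing of the anchors.} From Step 1 I get $a_{t+1}\ge a_t+\delta+1$. That alone yields only the weaker count $(r_m-r_1)/(\delta+1)$, so I need the finer statement that anchors are spaced by at least $2\delta+1$. The selection $b_t$ is the largest rank $\le a_t+\delta$, and $a_{t+1}$ is a rank exceeding $b_t$. Hence $a_{t+1}>a_t+\delta$, and every rank in $(b_t,a_t+\delta]$ is absent. Combining with $a_{t+1}>b_t+\delta$ gives only $a_{t+1}\ge b_t+\delta+1$, which does not immediately produce $2\delta+1$ when $b_t<a_t+\delta$.

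I would handle this through the intervals $I_t=[a_t,\,a_t+2\delta]$. Each $I_t$ contains $2\delta+1$ integer positions. The key claim is that these intervals are pairwise disjoint, i.e.\ $a_{t+1}\ge a_t+2\delta+1$. This is the step I expect to be the obstacle. In the case $b_t=a_t+\delta$ it is immediate. Otherwise, the ranks strictly between $b_t$ and $a_t+\delta$ are missing. If necessary, I would fall back on a pure counting statement: the intervals $[b_t-\delta,\,b_t+\delta]\cap[r_1,r_m]$ are disjoint because $b_{t+1}\ge a_{t+1}>b_t+\delta$ forces $b_{t+1}-\delta>b_t$. That version is still not quite $2\delta+1$ apart, so I would need to exploit gaps in $R_j$ and attach the empty stretch $(b_t,a_{t+1})$ of length at least $\delta$ to block $t$.

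\textbf{Step 3: counting.} Once the disjoint blocks of size $2\delta+1$ are established, with the first starting at $r_1=a_1$ and the last block's anchor $a_k\le r_m$, I can bound the span. The first $k-1$ blocks occupy $(k-1)(2\delta+1)$ positions before $a_k$, so
\[
(k-1)(2\delta+1)+1\le r_m-r_1+1 .
\]
This gives $k\le\lceil (r_m-r_1+1)/(2\delta+1)\rceil$. Finally, $r_1\ge 1$ and $r_m\le n$ give $r_m-r_1+1\le n$, which yields the second inequality. As a sanity check, and as an alternative route to the first bound, I would compare with the optimality in Theorem~\ref{thm:rank-net}. Any feasible cover bounds $k$, and the explicit cover of the integer interval $[r_1,r_m]$ by centers spaced $2\delta+1$ apart has the stated size. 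The catch is that those centers must be snapped to attainable ranks, and I would check whether snapping preserves coverage.
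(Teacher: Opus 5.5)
The step you flagged, $a_{t+1}\ge a_t+2\delta+1$, cannot be proved. It is false whenever $R_j$ has gaps, i.e.\ ties in feature $j$. The theorem fails with it, so no choice of blocks will close your Step~2.

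Take $\delta=1$ and $n=8$ samples whose distinct values $v_1<\dots<v_5$ have multiplicities $1,2,2,2,1$, so $R_j=\{1,3,5,7\}$. (If the constant cut at $8$ is kept, $B_j=\{1,3,5,8\}$ and the same comparison holds.)
\begin{itemize}
\item Attainable ranks are $2>\delta$ apart, so each selection covers only itself and Algorithm~\ref{alg:rank-net} returns $|B_j|=4$.
\item The claimed bounds are $\lceil (r_m-r_1+1)/(2\delta+1)\rceil=\lceil 7/3\rceil=3$ and $\lceil n/(2\delta+1)\rceil=\lceil 8/3\rceil=3$, both exceeded.
\item Here $a_2=3=a_1+\delta+1$, short of $a_1+2\delta+1$.
\end{itemize}
Your fallback disjointness claim is also incorrect. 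The inequality $b_{t+1}-\delta>b_t$ does not give $b_{t+1}-\delta>b_t+\delta$; the windows $[0,2]$ and $[2,4]$ around $b_1=1$, $b_2=3$ share $2$. Attaching the empty stretch $(b_t,a_{t+1})$ to block $t$ gains nothing either. That stretch already contains the missing ranks $(b_t,a_t+\delta]$, so block $t$ is just $[a_t,a_{t+1})$, of length $a_{t+1}-a_t$, which can equal $\delta+1$.

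Your closing worry about snapping is exactly the hole in the paper's own one-line proof. Theorem~\ref{thm:rank-net} gives optimality only among subsets of $R_j$. The comparison cover of $[r_1,r_m]$ by windows of $2\delta+1$ integers uses centers that need not be attainable. An attainable rank inside such a window covers only part of it; above, no attainable rank covers both $1$ and $3$.

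What is provable is what your Step~1 already gives. From $a_1=r_1$, $a_k\le r_m$ and $a_{t+1}\ge a_t+\delta+1$ you get $|B_j|\le\lceil (r_m-r_1+1)/(\delta+1)\rceil\le\lceil n/(\delta+1)\rceil$. The example shows this is tight.

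The stated $2\delta+1$ bound does hold in the tie-free case $R_j=\{r_1,r_1+1,\dots,r_m\}$, which is the prefix-rank setting of Theorems~\ref{thm:quantile-snapping} and~\ref{thm:rank-refinement-halves}. For $t<k$ we must have $a_t+\delta<r_m$, since otherwise $b_t=r_m$ and the loop ends. Hence $a_t+\delta\in R_j$ and $b_t=a_t+\delta$. Your immediate case then yields $a_{t+1}\ge a_t+2\delta+1$, and your Step~3 count goes through unchanged.
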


\begin{proof}
Each selected rank covers all attainable ranks within distance \(\delta\), i.e.,
an interval of at most \(2\delta+1\) integer ranks. Algorithm~\ref{alg:rank-net}
is optimal by Theorem~\ref{thm:rank-net}, so it uses no more thresholds than
are needed to cover the full integer interval \([r_1,r_m]\) by intervals of
length \(2\delta+1\). This requires at most
\[
    \left\lceil \frac{r_m-r_1+1}{2\delta+1}\right\rceil
\]
intervals. Since \(r_m-r_1+1\le n\), the second bound follows.
\end{proof}

\begin{theorem}[Quantile Snapping Bound]
\label{thm:quantile-snapping}
  Fix a continuous feature $j$ and a deterministic ordering of training samples
  by this feature, breaking ties arbitrarily.  Let
  $\mathcal{H}_{\mathrm{rank}}$ be the set of all prefix threshold columns in
  this ordering (cuts between distinct values and within tied values).  For an integer $K \ge 1$, select the $K$ quantile ranks
  \[
    b_q = \left\lfloor \frac{q\,n}{K+1} \right\rceil, \qquad q = 1, \ldots, K,
  \]
  and let $\mathcal{H}_{\mathrm{bin}}$ be the corresponding threshold columns.
  Then every $h \in \mathcal{H}_{\mathrm{rank}}$ has a selected
  $\hat{h} \in \mathcal{H}_{\mathrm{bin}}$ with
  \[
    d_H(h, \hat{h}) \le \left\lceil \frac{n}{K+1} \right\rceil.
  \]
  Consequently, Theorem~\ref{thm:snapping} gives
  \[
    \mathrm{OPT}_{\mathrm{bin}}(D, d)
    - \mathrm{OPT}_{\mathrm{rank}}(D, d)
    \le (2^d - 1) \left\lceil \frac{n}{K+1} \right\rceil.
  \]
\end{theorem}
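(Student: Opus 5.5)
Identify each prefix column $h\in\mathcal{H}_{\mathrm{rank}}$ with its rank $r\in\{0,1,\ldots,n\}$: the number of samples sent left in the fixed ordering. The plan is to reduce the Hamming distance between two prefix columns to a difference of ranks, bound the rank gap to the nearest selected quantile, and then invoke Theorem~\ref{thm:snapping}.

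\textbf{Step 1: Hamming distance equals rank difference.} Two prefix columns with ranks $r$ and $r'$ disagree exactly on the samples in ordered positions $\min\{r,r'\}+1,\ldots,\max\{r,r'\}$. Hence
\[
d_H(h,\hat h)=|r-r'|.
\]

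\textbf{Step 2: Bound the gaps between consecutive quantile ranks.} Set $b_0:=0$ and $b_{K+1}:=n$; these two endpoints are not selected and are handled in Step 3. Let $g=n/(K+1)$, so $b_q=\lfloor qg\rceil$ for $q=1,\ldots,K$, while $b_0=0\cdot g$ and $b_{K+1}=(K+1)g$ hold exactly. Each selected point therefore lies within $1/2$ of $qg$. Consequently every consecutive gap satisfies
\[
b_{q+1}-b_q\le g+1.
\]
Since the gap is an integer, this sharpens to $b_{q+1}-b_q\le\lfloor g\rfloor+1$. Moreover, when an interior gap is measured, both endpoints are rounded from $qg$ and $(q+1)g$, which are exactly $g$ apart. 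For such gaps I would use the monotonicity of rounding to get the cleaner bound $b_{q+1}-b_q\le\lceil g\rceil$.

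\textbf{Step 3: Bound the distance to the nearest selected rank.} Take any rank $r$.
\begin{itemize}
\item If $r\le b_1$, the nearest selected rank is $b_1$, and $|r-b_1|\le b_1\le\lfloor g\rceil\le\lceil g\rceil$.
\item If $r\ge b_K$, then $|r-b_K|\le n-b_K$. Because $Kg=n-g$, we get $b_K\ge\lfloor n-g\rceil$, so $n-b_K\le\lceil g\rceil$ after checking the rounding.
\item If $r$ lies between $b_q$ and $b_{q+1}$ with $1\le q<K$, then $r$ is within half of that gap of one of the two endpoints. This is comfortably at most $\lceil g\rceil$.
\end{itemize}

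\textbf{Main obstacle: the endpoint cases with the rounding convention.} The boundary cases $r=0$ and $r=n$ are the delicate part, because the bound there rests entirely on how $\lfloor\cdot\rceil$ rounds. For $r=n$, the requirement $n-\lfloor Kn/(K+1)\rceil\le\lceil n/(K+1)\rceil$ holds because $\lfloor x\rceil\ge\lfloor x\rfloor$. Then $n-\lfloor n-g\rfloor=\lceil g\rceil$, which I would verify carefully. For $r=0$, $\lfloor g\rceil\le\lceil g\rceil$ is immediate.

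\textbf{Step 4: Apply the snapping theorem.} Steps 1--3 establish the Hamming snapping condition of Theorem~\ref{thm:snapping} with $\delta=\lceil n/(K+1)\rceil$, where that theorem's continuous split class is replaced by $\mathcal{H}_{\mathrm{rank}}$. Its proof uses only that each split of the optimal tree can be snapped, so it carries over verbatim. This yields the stated bound
\[
\mathrm{OPT}_{\mathrm{bin}}(D,d)-\mathrm{OPT}_{\mathrm{rank}}(D,d)\le(2^d-1)\left\lceil\frac{n}{K+1}\right\rceil.
\]
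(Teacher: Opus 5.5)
Your proposal is correct and follows essentially the same route as the paper. You identify prefix columns with ranks so that $d_H$ becomes $|r-r'|$, show that the quantile ranks cut the rank range into pieces of length at most $\lceil n/(K+1)\rceil$, and then invoke Theorem~\ref{thm:snapping} with $\delta=\lceil n/(K+1)\rceil$. The differences are cosmetic: you also allow the constant ranks $0$ and $n$ (the paper uses $\{1,\ldots,n-1\}$, which your argument covers a fortiori), and you verify the rounding at the two boundary gaps explicitly, whereas the paper simply asserts the interval-length bound.
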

 
\begin{proof}
  A prefix-threshold column is determined by a prefix size $r \in \{1, \ldots, n-1\}$.
  The Hamming distance between prefix columns of ranks $r$ and $s$ is exactly
  $|r - s|$, since they differ precisely on the samples between the two
  prefix endpoints.  The $K$ quantile ranks divide $\{1, \ldots, n-1\}$ into
  $K+1$ intervals of length at most $\lceil n/(K+1) \rceil$, so every $r$ has
  a quantile rank $b_q$ with $|r - b_q| \le \lceil n/(K+1) \rceil$.
  Applying Theorem~\ref{thm:snapping} with $\delta = \lceil n/(K+1) \rceil$
  gives the claimed bound.
\end{proof}
 
Theorem \ref{thm:quantile-snapping} assumes a fixed tie-breaking order.
In practice we use value thresholds $x_j \le t$ at empirical quantile values,
deduplicate, and discard constant columns. In the absence of ties this coincides with the rank
construction above. For Theorem \ref{thm:rank-refinement-halves}, we make the same assumption.

\begin{theorem}[Midpoint Refinement Halves Covering Radius]
\label{thm:rank-refinement-halves}
  Fix a continuous feature \(j\) and fix a deterministic ordering of the
  training samples by this feature, breaking ties arbitrarily.  Let
  \[
    R_j^{\mathrm{rank}} = \{1,\ldots,n-1\}
  \]
  denote the set of nonconstant prefix ranks in this ordering (i.e., each
  \(r\in R_j^{\mathrm{rank}}\) determines the prefix-threshold column that
  assigns the first \(r\) samples to one branch and the remaining samples to
  the other; ranks that split tied values are allowed).  For
  \(B \subseteq R_j^{\mathrm{rank}}\), define its covering radius by
  \[
    \mathrm{rad}(B)
    =
    \max_{r\in R_j^{\mathrm{rank}}}
    \min_{b\in B} |r-b|.
  \]
  Suppose \(\mathrm{rad}(B) \le \delta\).  Form \(B^+\) by adding to \(B\):
  for every adjacent pair \(b<b'\) in \(B\), the nearest integer rank to
  \((b+b')/2\); for the left boundary interval \([1,b_{\min}]\), the nearest
  integer rank to \((1+b_{\min})/2\); and for the right boundary interval
  \([b_{\max},n-1]\), the nearest integer rank to \((b_{\max}+n-1)/2\).
  Then
  \[
    \mathrm{rad}(B^+) \le \left\lceil \frac{\delta}{2}\right\rceil.
  \]
\end{theorem}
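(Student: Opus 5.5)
The plan is a purely one-dimensional counting argument on the integer ranks $\{1,\ldots,n-1\}$. We may assume $B\neq\emptyset$, since otherwise $\mathrm{rad}(B)$ is undefined. Write $B=\{b_1<\cdots<b_k\}$. First I will turn the hypothesis $\mathrm{rad}(B)\le\delta$ into gap bounds:
\[
b_1-1\le\delta,\qquad n-1-b_k\le\delta,\qquad b_{i+1}-b_i\le 2\delta+1 .
\]
The first two hold because ranks $1$ and $n-1$ must each be within $\delta$ of some element of $B$, and the nearest such elements are $b_1$ and $b_k$. The third holds because if $b_{i+1}-b_i\ge 2\delta+2$, the rank $b_i+\delta+1$ lies strictly between $b_i$ and $b_{i+1}$ and is at distance at least $\delta+1$ from every element of $B$, a contradiction.

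Next I will bound the radius of $B^+$ separately on each of the intervals $[1,b_1]$, $[b_i,b_{i+1}]$, and $[b_k,n-1]$; these intervals cover all ranks. For an interior interval, set $g=b_{i+1}-b_i$ and let $m$ be the added rank nearest to $(b_i+b_{i+1})/2$. With either rounding, the two pieces $m-b_i$ and $b_{i+1}-m$ are $\lfloor g/2\rfloor$ and $\lceil g/2\rceil$ in some order, so each has length at most $\lceil (2\delta+1)/2\rceil=\delta+1$. A rank inside a segment of length $L$ whose two endpoints both lie in $B^+$ is within $\lfloor L/2\rfloor$ of one of them. Hence every rank in this interval is within $\lfloor(\delta+1)/2\rfloor$ of $B^+$, and a parity check shows $\lfloor(\delta+1)/2\rfloor=\lceil\delta/2\rceil$ for every integer $\delta\ge 0$.

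The boundary intervals need a slightly different argument, and I expect them to be the main point requiring care. Here only one endpoint belongs to $B$, and it is not automatic that $1$ itself lies in $B^+$. Take the left interval and set $h=b_1-1\le\delta$, with $m$ the rank nearest to $(1+b_1)/2$. A rank $r\in[1,m]$ is within $m-r\le m-1\le\lceil h/2\rceil\le\lceil\delta/2\rceil$ of $m$. A rank $r\in[m,b_1]$ lies between two points of $B^+$, so it is within $\lfloor (b_1-m)/2\rfloor\le\lceil h/2\rceil$ of $B^+$. The right boundary interval is symmetric.

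To finish, I will handle the degenerate cases $b_1=1$, $b_k=n-1$, and adjacent $b_i,b_{i+1}$. In these cases the "added" rank coincides with an existing element, so $B^+$ simply has fewer new points, and the same bounds still hold. Taking the maximum over all intervals then gives $\mathrm{rad}(B^+)\le\lceil\delta/2\rceil$.
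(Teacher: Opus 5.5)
Your proposal is correct and follows essentially the same route as the paper. Both arguments derive the gap bounds $b_{i+1}-b_i\le 2\delta+1$ (the paper phrases this as $\lfloor (b'-b)/2\rfloor\le\delta$) and $b_{\min}-1\le\delta$ from the covering hypothesis. Both then split each interior interval at the rounded midpoint into pieces of length at most $\delta+1$, giving radius $\lfloor(\delta+1)/2\rfloor=\lceil\delta/2\rceil$, and bound the boundary intervals by $\lceil (b_{\min}-1)/2\rceil\le\lceil\delta/2\rceil$. Your explicit handling of the piece $[1,m]$ (where $1$ need not lie in $B^+$) and of the degenerate cases is somewhat more careful than the paper's, but the argument is the same.
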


\begin{proof}
  We prove that every rank \(r\in R_j^{\mathrm{rank}}\) is within distance
  at most \(\lceil \delta/2\rceil\) of some rank in \(B^+\).

  \textbf{Interior intervals.}
  Let \(b<b'\) be adjacent ranks in \(B\), and consider the integer interval
  \([b,b']\).  Since \(B\) has covering radius at most \(\delta\), every
  integer rank in \([b,b']\) is within distance \(\delta\) of either \(b\) or
  \(b'\).  Therefore
  \[
    \left\lfloor \frac{b'-b}{2} \right\rfloor \le \delta.
  \]
 Let
\[
  \mu = \operatorname{round}\!\left(\frac{b+b'}{2}\right),
\]
where \(\operatorname{round}\) returns the nearest integer (breaking
ties arbitrarily). The largest gap between consecutive ranks in
  \(\{b,\mu,b'\}\) is at most
  \[
    \left\lceil \frac{b'-b}{2} \right\rceil .
  \]
  Hence every integer rank in \([b,b']\) is within distance at most
  \[
    \left\lfloor
      \frac{1}{2}\left\lceil \frac{b'-b}{2}\right\rceil
    \right\rfloor
  \]
  of one of \(b,\mu,b'\).  Since
  \[
    \left\lfloor \frac{b'-b}{2}\right\rfloor \le \delta,
  \]
  the overall term is at most \(\lceil \delta/2\rceil\).  Thus, every rank in
  the interior interval \([b,b']\) is covered by \(B^+\) with radius at most
  \(\lceil \delta/2\rceil\).

  \textbf{Boundary intervals.}
  For the left boundary, the assumed covering guarantee implies
  \[
    b_{\min}-1 \le \delta,
  \]
  since \(b_{\min}\) is the closest selected rank in \(B\) to the leftmost
  rank \(1\).  Let
  \[
  \mu_L = \operatorname{round}\!\left(\frac{1+b_{\min}}{2}\right)
\]
  be the integer rank added in the left boundary interval.  Then every rank in
  \([1,b_{\min}]\) is within distance at most
  \[
    \left\lceil \frac{b_{\min}-1}{2}\right\rceil
    \le
    \left\lceil \frac{\delta}{2}\right\rceil
  \]
  of either \(\mu_L\) or \(b_{\min}\).

  The right boundary is identical. 
  Finally, every rank in \(R_j^{\mathrm{rank}}\) lies either on a boundary or between two adjacent ranks.  Each such
  interval is covered by \(B^+\) with radius at most
  \(\lceil \delta/2\rceil\).  Therefore
  \[
    \mathrm{rad}(B^+) \le \left\lceil \frac{\delta}{2}\right\rceil.
  \]
\end{proof}

\begin{theorem}[Fixed-Binarization Superset Guarantee]
\label{thm:fixed-binarization-superset}
Using a proxy restricted to a fixed binarization, if the proxy is optimal over
that binarization, then our continuous-threshold algorithm recovers a superset
of the trees returned by exact Rashomon set enumeration over that binarization.
\end{theorem}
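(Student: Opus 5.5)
The plan is to prove, by induction on the remaining depth $d$, that every tree returned by exact enumeration over the fixed binarization $\mathcal{H}_{\mathrm{bin}}$ with root budget $\varepsilon_{\mathrm{abs}}$ is encoded in the graph our algorithm builds. Both procedures use the same root budget: it is $(1+\varepsilon_{\mathrm{mult}})$ times the proxy value, and an optimal proxy over $\mathcal{H}_{\mathrm{bin}}$ returns $\textsc{Optimal}$ over that binarization. The inductive claim is a local statement about OR nodes. Let $(D,d)$ be reached with budget $B$, and let $T$ be a tree over $\mathcal{H}_{\mathrm{bin}}$ of depth at most $d$ with $\mathrm{Obj}(T,D,\gamma)\le B$. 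Then after \textsc{ArborEnum}$(G,D,d,\gamma,B,\mathcal{S})$ returns, $T$ can be extracted from $G$ under budget $B$. Budget-independent caching only helps here. If the cached node has budget $\ge B$, extraction carves out the smaller budget (Appendix~\ref{app:andor-caching}), so I only need the claim for the largest budget at which the node was solved.

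The base case is handled by \textsc{AddFeasibleLeaves}, which adds every leaf with objective at most $B$. The same routine covers the case $B<2\gamma$, because then every feasible tree is a leaf. For the inductive step, let $T$ have root split $h\in\mathcal{H}_{\mathrm{bin}}$ and subtrees $T_L,T_R$, so that $\mathrm{Obj}(T_L)+\mathrm{Obj}(T_R)\le B$. The first thing to show is that $h$ is never pruned. I would go through each pruning rule in turn.

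The binary test $P_L+P_R>B$ cannot fire, because an optimal proxy gives $P_L\le \mathrm{Obj}(T_L)$ and $P_R\le\mathrm{Obj}(T_R)$. The constant-threshold bounds in $\mathcal{S}$ only remove splits with an empty child, and $T$ contains no such split. The pure-leaf bound ($P_L=\gamma$ implies $L\gets q+1$) removes a threshold only on the side where the proxy completion cannot improve. Those removed thresholds are themselves infeasible, since their completion value is at least the failed value; I still need to argue this briefly.

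The interval pruning by \textsc{LeftBoundary}/\textsc{RightBoundary} is the main obstacle. It relies on the robustness condition that moving $k$ samples changes the proxy-completed objective by at most $k$. For a proxy optimal over $\mathcal{H}_{\mathrm{bin}}$, I would prove this directly. Take a threshold $s$ that is not in $\mathcal{H}_{\mathrm{bin}}$ and moves $k$ samples relative to $t$. Splitting at $s$ and completing each child with its optimal binarization subtree differs from the corresponding completion at $t$ on only $k$ samples, so each sample changes the misclassification count by at most one. Restricting the proxy makes $P_L(s)+P_R(s)$ a quantity over the binarized children of $s$. That completion is a valid tree over $\mathcal{H}_{\mathrm{bin}}\cup\{s\}$ applied to those children, so the exchange argument of Theorem~\ref{thm:snapping} applied to a single split gives $|P(s)-P(t)|\le \mathrm{dist}_D(s,t)$. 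Hence any threshold within distance $<\Delta$ of a failed one also fails, and in particular the binarization split $h$ survives.

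Once $h$ is added through \textsc{AddOrExtendSplit}, iterative budget refinement in \textsc{GraphSolveOrdered} solves the left child with budget at least $B-\mathrm{OPT}(D_R)\ge \mathrm{Obj}(T_L)$. The right child is handled the same way, using the minimum objectives the solves compute, and those minima are at most the proxy values. The loop terminates only when neither budget increases further, so both children are solved with budgets at least $\mathrm{Obj}(T_L)$ and $\mathrm{Obj}(T_R)$ respectively. The induction hypothesis then places $T_L$ and $T_R$ in the child subgraphs. The histogram cross product filtered by the parent budget keeps $T$, because $\mathrm{Obj}(T_L)+\mathrm{Obj}(T_R)\le B$. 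Applying the claim at the root gives the superset statement.
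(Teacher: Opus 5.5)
Your proposal is correct and follows the same route as the paper's proof: an optimal proxy over the fixed binarization satisfies the robustness conditions behind every pruning rule, so no split whose binarized optimal completion fits the budget is ever pruned, and the binarized Rashomon set is fully enumerated. The paper cites ConTree for robustness and then states the conclusion; you instead prove the pieces explicitly. These are the one-split exchange bound $|P(s)-P(t)|\le\mathrm{dist}_D(s,t)$, subset-monotonicity of optimal completions for the pure-leaf bound, and the induction through iterative budget refinement and budget-independent caching, which makes your write-up an elaboration of the paper's argument rather than a different one.
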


\begin{proof}
    Because the proxy is optimal over some binarization, it satisfies the robustness conditions we assume in our pruning. See \citet{brița2025optimal} for details of this fact. Therefore, we will never prune a split that could actually be completed by the proxy to be within budget. This means that we consider optimal completions over the binarization on every split in the binarization. That is, we fully enumerate the Rashomon set over binary features. We also evaluate binarized optimal completions on other thresholds. These thresholds imply we could return more trees than we would by finding a Rashomon set over the binarization.
\end{proof}

There are many interesting theoretical guarantees in \citet{heile2026}, including Theorem~3.5, Corollary~3.6, Theorem~A.3, and Corollary~A.6. These all extend to continuous features whenever the proxy does not violate the continuous-feature pruning conditions. This holds, for example, for a majority-leaf proxy or an optimal proxy restricted to any subset of split options. In particular, we would like to highlight that if the proxy's optimality gap is maximized at the root, we recovers the full Rashomon set without additional slack. In the worst case, the proxy may be optimal at the root and attain its full gap only on a descendant subproblem; multiplying the root budget by $c$ still guarantees full recovery (where $c$ is the worst-case optimality gap). Although we are not aware of formal approximation algorithms for decision tree optimization when considering the same split choices, our preceding continuous-feature results bound the gap between optimal trees with and without discretization. Thus, an optimal proxy over a carefully chosen subset of thresholds yields both computational savings and an explicit recovery guarantee. In practice, we do find using a near-optimal proxy over an extended feature set is even better than this option.

\FloatBarrier

\FloatBarrier

\section{Proxy Algorithms}
\label{app:proxy-algorithms}

\subsection{LicketySNIP and Proxy Strength}
\label{app:licketysnip}

In this section, we provide pseudocode for the LicketySNIP proxy algorithm (Algorithm~\ref{alg:licketysnip}). The parameter $\ell$ controls the proxy strength. When $\ell=0$, LicketySNIP reduces to a fast greedy completion. When $\ell=1$, it roughly chooses the best split using greedy completions, fixes that split, and then recurses on the two child subproblems. In other words, it recursively chooses splits whose quality is estimated by greedy completions. This $\ell=1$ setting is the default algorithm we refer to as LicketySNIP. When $\ell=2$, the algorithm instead chooses the best split using LicketySNIP($\ell=1$) completions, and so on.

The reason we say ``roughly'' is that we apply the same neighborhood-pruning idea when evaluating these cheap split completions. Thus, if one threshold has a poor proxy completion, we choose to prune nearby thresholds, assuming the greedy completions are robust. 

This process creates a hierarchy of proxy algorithms; more details and provable cache reuse are given in \citet{heile2026}. For example, if we first use greedy completions as our proxy and then upgrade to LicketySNIP($\ell=1$), the greedy-completion caches can be reused when solving the $\ell=1$ subproblems. The same reuse continues as we increase $\ell$, until the proxy eventually becomes exact. This lookahead parameter $\ell$ is exactly what we mean by proxy strength in the anytime algorithm. As a result, LicketySNIP is naturally amenable to staged refinement: we can begin with cheap proxy evaluations, cache their results, and then increase $\ell$ to obtain stronger certificates while reusing the work already performed.

Unlike the threshold-to-proxy completion maps used by the Rashomon set algorithm, the map \(E\) in \textsc{BestContinuousSplitSNIP} is local to a single continuous-feature search and is not cached across calls. Persisting \(E\) is valuable for Rashomon set enumeration because the same subproblem--depth pair may be revisited many times as the anytime algorithm activates additional thresholds or as iterative budget refinement increases the budget given to it. In the proxy optimization, by contrast, once a subproblem and depth have been solved, they need not be revisited. This is why we do not persist $E$. However, maintaining a temporary \(E\) during the current queue search remains useful because the proxy incumbent \(B^\star\) can improve as better thresholds are found. A threshold evaluated earlier may therefore become a stronger pruning certificate later: as \(B^\star\) decreases, the gap \(P_L(t)+P_R(t)-B^\star\) increases, allowing a larger neighborhood around \(t\) to be pruned without recomputing its proxy completions. Accordingly, whenever an interval is popped from \(Q\), \textsc{ShrinkFromEvaluated} reapplies the evaluations accumulated in the local map \(E\) using the current incumbent, after which \(E\) is discarded when the feature search terminates.

\begin{algorithm}[!t]
\caption{\textsc{LicketySNIP}$(D,d,\ell,\gamma,\mathcal{S})$}
\label{alg:licketysnip}
\begin{algorithmic}[1]
\REQUIRE Subproblem $D$, remaining depth $d$, lookahead $\ell$, per-leaf penalty $\gamma$,
and threshold registry $\mathcal{S}$

\IF{$d=0$}
    \STATE \textbf{return} $\textsc{LeafObj}(D,\gamma)$
\ENDIF

\IF{$\ell=0$}
    \STATE \textbf{return} $\textsc{GreedyContinuous}(D,d,\gamma,\mathcal{S})$
\ENDIF

\IF{$d=1$}
    \STATE \textbf{return} $\textsc{ExactStumpContinuous}(D,\gamma,\mathcal{S})$
\ENDIF

\STATE $\ell \gets \min\{\ell,d-1\}$

\IF{$(D,d,\ell)$ is in the LicketySNIP cache}
    \STATE \textbf{return} cached value
\ENDIF

\STATE $L_{\mathrm{leaf}} \gets \textsc{LeafObj}(D,\gamma)$
\IF{$L_{\mathrm{leaf}} \le 2\gamma$}
    \STATE Cache and \textbf{return} $L_{\mathrm{leaf}}$
\ENDIF

\STATE $B \gets L_{\mathrm{leaf}}$
\COMMENT{Current best proxy objective}
\STATE $s^\star \gets \bot$
\COMMENT{Best split found so far}

\FOR{\textbf{each ordinary binary feature} $j$}
    \STATE $(D_L,D_R) \gets \textsc{Partition}(D,j)$
    \IF{$D_L=\emptyset$ \textbf{or} $D_R=\emptyset$}
        \STATE \textbf{continue}
    \ENDIF

    \STATE $P_L \gets \textsc{LicketySNIP}(D_L,d-1,\ell-1,\gamma,\mathcal{S})$
    \STATE $P_R \gets \textsc{LicketySNIP}(D_R,d-1,\ell-1,\gamma,\mathcal{S})$

    \IF{$P_L+P_R < B$}
        \STATE $B \gets P_L+P_R$
        \STATE $s^\star \gets j$
    \ENDIF
\ENDFOR

\FOR{\textbf{each continuous feature group} $J=[a,b)$}
    \STATE $[a',b') \gets \textsc{RestrictRange}(\mathcal{S},D,J)$
    \IF{$a' \ge b'$}
        \STATE \textbf{continue}
    \ENDIF

    \STATE $(B_J,s_J) \gets
    \textsc{BestContinuousSplitSNIP}(D,d,\ell-1,\gamma,\mathcal{S},[a',b'),B)$

    \IF{$B_J < B$}
        \STATE $B \gets B_J$
        \STATE $s^\star \gets s_J$
    \ENDIF
\ENDFOR

\STATE $A \gets L_{\mathrm{leaf}}$

\IF{$s^\star \neq \bot$}
    \STATE $(D_L,D_R) \gets \textsc{Partition}(D,s^\star)$
    \STATE $\mathcal{S}_L \gets \mathcal{S}$; \quad
    $\mathcal{S}_R \gets \mathcal{S}$
    \IF{$s^\star$ is a threshold of continuous feature $c$}
        \STATE $\textsc{LowerUpperBound}(\mathcal{S}_L,c,s^\star-1)$
        \STATE $\textsc{RaiseLowerBound}(\mathcal{S}_R,c,s^\star+1)$
    \ENDIF

    \STATE $Q_L \gets \textsc{LicketySNIP}(D_L,d-1,\ell,\gamma,\mathcal{S}_L)$
    \STATE $Q_R \gets \textsc{LicketySNIP}(D_R,d-1,\ell,\gamma,\mathcal{S}_R)$

    \STATE $A \gets \min\{A,Q_L+Q_R,B\}$
\ENDIF

\STATE Cache $A$ for $(D,d,\ell)$
\STATE \textbf{return} $A$
\end{algorithmic}
\end{algorithm}

\begin{algorithm}[!t]
\caption{\textsc{BestContinuousSplitSNIP}$(D,d,\ell,\gamma,\mathcal{S},[a,b),B)$}
\label{alg:best-continuous-split-snip}
\begin{algorithmic}[1]
\REQUIRE Subproblem $D$, remaining depth $d$, child lookahead $\ell$, penalty $\gamma$,
threshold registry $\mathcal{S}$, threshold interval $[a,b)$, and incumbent objective $B$

\STATE $B^\star \gets B$; \quad $t^\star \gets \bot$
\STATE $E \gets \emptyset$
\COMMENT{$E[t]=(P_L(t),P_R(t))$ stores evaluated proxy completions}
\STATE $Q \gets \{[a,b-1]\}$
\STATE $L \gets a$; \quad $U \gets b-1$

\WHILE{$Q \neq \emptyset$}
    \STATE Remove an interval $[i,j]$ from $Q$

    \STATE $(i,j) \gets
    \textsc{ShrinkFromEvaluated}(E,D,i,j,L,U,B^\star)$

    \STATE $i \gets \max\{i,L\}$; \quad $j \gets \min\{j,U\}$

    \IF{$i>j$}
        \STATE \textbf{continue}
    \ENDIF

    \STATE $m \gets \lfloor(i+j)/2\rfloor$
    \STATE $(D_L,D_R) \gets \textsc{Partition}(D,m)$

    \IF{$D_L=\emptyset$}
        \STATE $L \gets \max\{L,m+1\}$
        \STATE Add $[m+1,j]$ to $Q$
        \STATE \textbf{continue}
    \ENDIF

    \IF{$D_R=\emptyset$}
        \STATE $U \gets \min\{U,m-1\}$
        \STATE Add $[i,m-1]$ to $Q$
        \STATE \textbf{continue}
    \ENDIF

    \STATE $\mathcal{S}_L \gets \mathcal{S}$; \quad
    $\mathcal{S}_R \gets \mathcal{S}$
    \STATE $\textsc{LowerUpperBound}(\mathcal{S}_L,c,m-1)$
    \STATE $\textsc{RaiseLowerBound}(\mathcal{S}_R,c,m+1)$

    \STATE $P_L \gets \textsc{LicketySNIP}(D_L,d-1,\ell,\gamma,\mathcal{S}_L)$
    \STATE $P_R \gets \textsc{LicketySNIP}(D_R,d-1,\ell,\gamma,\mathcal{S}_R)$
    \STATE $E[m] \gets (P_L,P_R)$
    \STATE $P \gets P_L+P_R$

    \IF{$P < B^\star$}
        \STATE $B^\star \gets P$
        \STATE $t^\star \gets m$
    \ENDIF

    \IF{$P_L=\gamma$}
        \STATE $L \gets \max\{L,m+1\}$
        \COMMENT{Farther-left thresholds cannot improve the left child}
    \ENDIF

    \IF{$P_R=\gamma$}
        \STATE $U \gets \min\{U,m-1\}$
        \COMMENT{Farther-right thresholds cannot improve the right child}
    \ENDIF

    \IF{$P \ge B^\star$}
        \STATE $\Delta \gets \max\{1,P-B^\star\}$
        \STATE $u \gets \textsc{TightenUpper}(D,m,i,m-1,\Delta)$
        \STATE $v \gets \textsc{TightenLower}(D,m,m+1,j,\Delta)$

        \IF{$i \le u$}
            \STATE Add $[i,u]$ to $Q$
        \ENDIF
        \IF{$v \le j$}
            \STATE Add $[v,j]$ to $Q$
        \ENDIF
    \ELSE
        \STATE Add $[i,m-1]$ and $[m+1,j]$ to $Q$
    \ENDIF
\ENDWHILE

\STATE \textbf{return} $(B^\star,t^\star)$
\end{algorithmic}
\end{algorithm}

Lines 4-8 of Algorithm \ref{alg:shrink-from-evaluated} uses a pruning technique that has not been previously used in this work. Given a subtree that handles splits on the left, a subtree that handles splits on the right, and haven't handled the middle, and this is already over the incumbent solution, you can prune the entirity of the middle. While we do have both left and right proxy completions if we have one of them, this is particularly effective because we can avoid needing to compute active sample distances. Thus, even if this does not help most of the time, it is a cheap check that can be very rewarding. We do not apply this pruning in our Rashomon set enumeration because that algorithm already processes every entry in its completion map $E$ under a fixed objective bound. Best-split selection requires a different design: its incumbent can improve throughout the search, so previously evaluated completions must be reconsidered adaptively as intervals are popped from the queue. Thus, the fixed-bound structure of Rashomon enumeration supports largely upfront pruning, whereas proxy split selection require something adaptive. Needing to be adaptive motivates the inclusion of this lightweight check.

\begin{algorithm}[!t]
\caption{\textsc{ShrinkFromEvaluated}%
\((E,D,i,j,L,U,B^\star)\)}
\label{alg:shrink-from-evaluated}
\begin{algorithmic}[1]
\REQUIRE Local map \(E\) from evaluated threshold indices to proxy completions,
subproblem \(D\), popped interval \([i,j]\), current global bounds \([L,U]\),
and incumbent proxy objective \(B^\star\)

\STATE \(u\gets\textsc{PredecessorKey}(E,i)\)
\COMMENT{Largest evaluated threshold strictly below \(i\), or \(\bot\)}
\STATE \(v\gets\textsc{SuccessorKey}(E,j)\)
\COMMENT{Smallest evaluated threshold strictly above \(j\), or \(\bot\)}

\STATE \COMMENT{This is a pruning technique from \citet{brița2025optimal}.}
\IF{\(u\neq\bot\) \textbf{ and } \(v\neq\bot\)}
    \STATE \((P_L(u),P_R(u))\gets E(u)\)
    \STATE \((P_L(v),P_R(v))\gets E(v)\)
    \IF{\(P_L(u)+P_R(v)\geq B^\star\)}
        \STATE \textbf{return} \((1,0)\)
        \COMMENT{No threshold in \([i,j]\) can strictly improve the incumbent}
    \ENDIF
\ENDIF

\IF{\(u\neq\bot\)}
    \STATE \((P_L(u),P_R(u))\gets E(u)\)
    \STATE \(P(u)\gets P_L(u)+P_R(u)\)
    \IF{\(P(u)\geq B^\star\)}
        \STATE \(\Delta\gets\max\{1,P(u)-B^\star\}\)
        \STATE \(i\gets\max\!\left\{
        i,\textsc{RightBoundary}(D,u,i,U,\Delta)
        \right\}\)
        \COMMENT{Move right until strict improvement is possible}
    \ENDIF
\ENDIF

\IF{\(v\neq\bot\)}
    \STATE \((P_L(v),P_R(v))\gets E(v)\)
    \STATE \(P(v)\gets P_L(v)+P_R(v)\)
    \IF{\(P(v)\geq B^\star\)}
        \STATE \(\Delta\gets\max\{1,P(v)-B^\star\}\)
        \STATE \(j\gets\min\!\left\{
        j,\textsc{LeftBoundary}(D,v,j,L,\Delta)
        \right\}\)
        \COMMENT{Move left until strict improvement is possible}
    \ENDIF
\ENDIF

\STATE \(i\gets\max\{i,L\}\); \quad \(j\gets\min\{j,U\}\)
\STATE \textbf{return} \((i,j)\)
\end{algorithmic}
\end{algorithm}

For the greedy subroutine, there are two natural ways to handle continuous features. One option is to avoid bitvectors and instead create sorted lists of sample indices for each feature, maintaining these lists across recursive calls. With this representation, we can scan each feature in sorted order, updating the class counts on the left and right sides incrementally to efficiently identify the best split.

The empirical question is whether the cost of constructing and maintaining these sorted index lists is justified relative to running the greedy solver directly over binarized threshold features. Because our main algorithm does not maintain sorted index lists throughout the search, this representation must be constructed specifically for each greedy proxy call. This setup cost can outweigh the savings, particularly when the proxy is invoked on small subproblems deep in the search space. Empirically, the sorted-index representation is not beneficial on small- to medium-sized problems and can incur a ($2\text{--}3\times$) overhead. On larger problems, however, it can provide substantial speedups.

This comparison should not be interpreted as evidence that binarized threshold representations are generally slower in our implementation. In the greedy split-selection loop, we do not use bounds to skip groups of similar thresholds, whereas the main algorithm applies additional pruning that can eliminate large collections of thresholds at once. The greedy solver is also particularly well suited to sorted index lists: evaluating a candidate split requires only the feature values and the corresponding class counts; the latter can be updated efficiently when traversing left to right. More general decision-tree optimization routines must preserve joint information about samples across features and subproblems.

Appendix \ref{app:timing-memory-recall} compares the sorted-index implementation with a greedy subroutine operating over binarized threshold features. The latter is unchanged from \citet{heile2026}, except that it skips consecutive thresholds that induce the same partition of the current subproblem.

In the main-paper experiments, we use fast bitvector operations for the greedy subroutine, either over all thresholds or over a restricted binarization. While it is true that a better default for large-scale problems is to create sorted-lists of indices, we find that an even better default is to consider a smaller binarization with fast bitvector operations. 

When a greedy proxy subroutine is called, and if we do not want to scan the pre-binarized threshold columns with bitvectors, we instead obtain a sorted list of active sample indices for each continuous feature. This can be done in two ways. 
First, we may sort the samples once at the root for each continuous feature and, at a later subproblem, filter this global ordering to the samples active in the current subproblem. 
Second, we may collect the active samples in the current subproblem and sort them locally by the feature value. Our implementation uses the former approach. Given a subproblem $D$ and a continuous feature $j$, let \[ i_1,\ldots,i_m \] be the active samples in $D$, sorted so that 
\[ x_{i_1j} \le x_{i_2j} \le \cdots \le x_{i_mj}. \]

We evaluate all candidate thresholds for feature $j$ in one left-to-right sweep \citep{quinlan2014c45}. Initially, all samples lie on the right. Let $n = |D|$ and $p$ be the number of positive samples in $D$. We maintain left-side counts $n_L$ and $p_L$, initialized to zero. At each distinct value $v$, we move the entire tie block $B_v = \{i \in D : x_{ij} = v\}$ from right to left and update
\[
n_L \gets n_L + |B_v|,
\qquad
p_L \gets p_L + \sum_{i \in B_v} \mathbf{1}\{y_i = 1\}.
\]
The right-side counts follow by subtraction:
\[
n_R = n - n_L,
\qquad
p_R = p - p_L.
\]
We then score the threshold after $v$ from these counts. For information gain,
\[
\mathrm{Score}(v) = H(p/n)
-\frac{n_L}{n}H(p_L/n_L)
-\frac{n_R}{n}H(p_R/n_R),
\]
where $H(q) = -q\log q - (1-q)\log(1-q)$. At depth $1$, we instead use the negative sum of the two child leaf objectives. After selecting the best threshold, we partition the sorted sample indices into the two child subproblems and recurse.

\FloatBarrier

\subsection{Improvements without Proxy Caching}
\label{app:walk}

\begin{algorithm}[!b]
\caption{\textsc{WalkFromSolvedThreshold}%
$(G,D,d,\gamma,\varepsilon_{\mathrm{abs}},\mathcal{S},c,q,P_L,P_R)$}
\label{alg:walk-from-solved-threshold}
\begin{algorithmic}[1]
\REQUIRE Current OR node $G$, subproblem $D$, depth $d$,
budget $\varepsilon_{\mathrm{abs}}$, threshold registry $\mathcal{S}$,
continuous feature $c$, active position $q$, and proxy completions
$(P_L,P_R)$ for the threshold

\STATE $\mathcal{A} \gets
\textsc{ActiveThresholds}(\mathcal{S},D,c)$
\STATE $t \gets \mathcal{A}[q]$

\STATE $(G_L,G_R) \gets
\begin{aligned}[t]
\textsc{AddOrExtendSplit}(&G,D,t,d,\gamma,\varepsilon_{\mathrm{abs}},\\
                         &\mathcal{S},P_L,P_R)
\end{aligned}$

\IF{$G_L=\emptyset$ \textbf{ or } $G_R=\emptyset$}
    \STATE \textbf{return}
\ENDIF

\STATE $L \gets \textsc{MinObjective}(G_L)$
\STATE $U \gets \textsc{MinObjective}(G_R)$
\STATE $\Delta \gets \varepsilon_{\mathrm{abs}}-(L+U)$

\IF{$\Delta<0$}
    \STATE \textbf{return}
\ENDIF

\STATE $
\begin{aligned}[t]
\textsc{WalkDirection}(&G,D,d,\gamma,\varepsilon_{\mathrm{abs}},
\mathcal{S},c,\\
& q,t,L,U,\textsc{Left})
\end{aligned}$

\STATE $
\begin{aligned}[t]
\textsc{WalkDirection}(&G,D,d,\gamma,\varepsilon_{\mathrm{abs}},
\mathcal{S},c,\\
& q,t,L,U,\textsc{Right})
\end{aligned}$

\end{algorithmic}
\end{algorithm}

Algorithm \ref{alg:walk-from-solved-threshold} gives an optional subroutine for \textsc{EnumContFeature} that uses solved thresholds to seed nearby ones. When a threshold is within budget, the minimum objectives of its left and right child subgraphs give upper bounds on the optimal child solutions for neighboring thresholds. If these induced bounds are within budget, we skip the proxy-pruning test for the neighboring threshold and call \textsc{AddOrExtendSplit} directly. This procedure can be chained across consecutive thresholds, using each newly solved threshold to seed the next, while provably returning a superset of the trees that would otherwise be returned (Theorem \ref{alg:walk-from-solved-threshold}). In practice, this is unnecessary when we can cache every intermediate solution computed during each proxy query, but it can be very beneficial in memory-constrained settings.

\begin{algorithm}[!t]
\caption{\textsc{WalkDirection}%
$(G,D,d,\gamma,\varepsilon_{\mathrm{abs}},\mathcal{S},
c,q,t,L,U,\mathrm{dir})$}
\label{alg:walk-direction}
\begin{algorithmic}[1]
\REQUIRE Current OR node $G$, subproblem $D$, depth $d$,
budget $\varepsilon_{\mathrm{abs}}$, threshold registry $\mathcal{S}$,
continuous feature $c$, solved position $q$, solved threshold $t$,
child minima $L,U$, and direction $\mathrm{dir}$

\STATE $\mathcal{A} \gets
\textsc{ActiveThresholds}(\mathcal{S},D,c)$
\STATE $t_{\mathrm{anchor}} \gets t$
\STATE $L^{\mathrm{anchor}} \gets L$
\STATE $U^{\mathrm{anchor}} \gets U$
\STATE $\Delta \gets
\varepsilon_{\mathrm{abs}}-
(L^{\mathrm{anchor}}+U^{\mathrm{anchor}})$

\IF{$\mathrm{dir}=\textsc{Left}$}
    \STATE $\mathcal{R} \gets (q-1,q-2,\ldots,0)$
\ELSE
    \STATE $\mathcal{R} \gets
    (q+1,q+2,\ldots,|\mathcal{A}|-1)$
\ENDIF

\FOR{\textbf{each} $r\in\mathcal{R}$}
    \STATE $s \gets \mathcal{A}[r]$

    \IF{$\mathrm{dir}=\textsc{Left}$}
        \STATE $x \gets
        \mathrm{dist}_D(s,t_{\mathrm{anchor}})$
    \ELSE
        \STATE $x \gets
        \mathrm{dist}_D(t_{\mathrm{anchor}},s)$
    \ENDIF
    \COMMENT{Number of active samples whose side changes}

    \IF{$x>\Delta$}
        \STATE \textbf{break}
    \ENDIF

    \STATE $(D_L,D_R) \gets \textsc{Partition}(D,s)$

    \IF{$D_L=\emptyset$}
        \STATE $\textsc{RaiseLowerBound}(\mathcal{S},c,s+1)$
        \STATE \textbf{break}
    \ENDIF

    \IF{$D_R=\emptyset$}
        \STATE $\textsc{LowerUpperBound}(\mathcal{S},c,s-1)$
        \STATE \textbf{break}
    \ENDIF

    \IF{$\mathrm{dir}=\textsc{Left}$}
        \STATE $\widehat P_L \gets L^{\mathrm{anchor}}$
        \STATE $\widehat P_R \gets U^{\mathrm{anchor}}+x$
        \COMMENT{Right child gained at most $x$ samples}
    \ELSE
        \STATE $\widehat P_L \gets L^{\mathrm{anchor}}+x$
        \STATE $\widehat P_R \gets U^{\mathrm{anchor}}$
        \COMMENT{Left child gained at most $x$ samples}
    \ENDIF

    \STATE $(G_L,G_R) \gets
    \begin{aligned}[t]
    \textsc{AddOrExtendSplit}(&G,D,s,d,\gamma,
    \varepsilon_{\mathrm{abs}},\\
                             &\mathcal{S},
    \widehat P_L,\widehat P_R)
    \end{aligned}$

    \IF{$G_L=\emptyset$ \textbf{ or } $G_R=\emptyset$}
        \STATE \textbf{break}
    \ENDIF

    \STATE $t_{\mathrm{anchor}} \gets s$
    \STATE $L^{\mathrm{anchor}} \gets
    \textsc{MinObjective}(G_L)$
    \STATE $U^{\mathrm{anchor}} \gets
    \textsc{MinObjective}(G_R)$
    \STATE $\Delta \gets
    \varepsilon_{\mathrm{abs}}-
    (L^{\mathrm{anchor}}+U^{\mathrm{anchor}})$
\ENDFOR

\end{algorithmic}
\end{algorithm}

To be precise, we assume that whenever
a subproblem is solved with budget at least its proxy objective, the resulting
subgraph has minimum objective no worse than the proxy objective. This is proven in \citet{heile2026}, though there are some subtleties introduced with our neighborhood pruning. The condition holds immediately when the proxy is optimal, either over all continuous thresholds or over a fixed
binarization. It also holds for any proxy satisfying the robustness conditions that the pruning assumes, so that the proxy split is not falsely pruned by neighboring-threshold bounds. In the anytime algorithm, we also track the proxy's selected split separately as a special threshold that is evaluated before similarity-based pruning; this also ensures that the proxy tree is recovered at a subproblem. When when not doing this, empirically, our near-perfect recall results suggest that this condition is satisfied.

The intuition is as follows: whenever the initial solves construct feasible solutions within the available budget, iterative budget refinement can propagate their minimum objectives and recover the same solutions that would have been obtained from the proxy. Conversely, if a proxy solution cannot be recovered, then its objective already exceeds the available budget, so it could not have contributed any trees under the original procedure.

\begin{theorem}[Neighboring-Threshold Pruning-Test]
\label{thm:neighboring-threshold-walk}
Assume that whenever a subproblem is solved with budget at least its proxy objective,
the resulting subgraph has minimum objective no worse than the proxy objective.
Then the neighboring-threshold walk in Algorithm \ref{alg:walk-from-solved-threshold} never sacrifices approximation quality relative
to explicitly applying the proxy-pruning test at each threshold. 
\end{theorem}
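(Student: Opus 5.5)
The plan is a threshold-by-threshold comparison. Fix a threshold $s$ that the walk reaches, and compare what the walk does at $s$ with what the explicit proxy-pruning test would do there. Under the proxy-pruning test, $s$ is kept only if $P_L(s)+P_R(s)\le\varepsilon_{\mathrm{abs}}$, and then \textsc{AddOrExtendSplit} is called with seeds $(P_L(s),P_R(s))$. Under the walk, \textsc{AddOrExtendSplit} is called with seeds $(\widehat P_L,\widehat P_R)$, which are computed from the anchor's child minimum objectives. I will argue two things: the walk never discards a threshold the test would keep, and whenever both procedures call \textsc{AddOrExtendSplit} on $s$, the walk's call returns a superset of the subtrees. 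Thresholds the walk never reaches (it breaks, or they lie beyond the walk) are still processed by the ordinary \textsc{EnumContFeature} path. So the walk only adds calls, and the overall output can only grow.

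\textbf{Step 1 (validity of the seeds).} Let $t$ be the anchor and move left to $s$, so the right child gains at most $x=\mathrm{dist}_D(s,t)$ samples and the left child only loses samples. Take the minimum-objective tree $T_L$ stored under the anchor's left child, with objective $L^{\mathrm{anchor}}$. Restricted to the smaller left child of $s$, it has misclassifications no larger and the same leaves. Similarly, the anchor's right minimum tree, applied to the enlarged right child, incurs at most $x$ extra mistakes. Hence $\mathrm{OPT}(D_L(s))\le L^{\mathrm{anchor}}=\widehat P_L$ and $\mathrm{OPT}(D_R(s))\le U^{\mathrm{anchor}}+x=\widehat P_R$. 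These trees are genuine completions, so the seeds are objectives actually achieved by trees at $s$. Since the walk proceeds only when $x\le\Delta$, we get $\widehat P_L+\widehat P_R\le\varepsilon_{\mathrm{abs}}$. The right-moving case is symmetric.

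\textbf{Step 2 (monotonicity of iterative budget refinement).} The main obstacle is showing that \textsc{GraphSolveSiblings}/\textsc{GraphSolveOrdered}, seeded with $(\widehat P_L,\widehat P_R)$, produces child subgraphs containing at least the subtrees produced when seeded with $(P_L,P_R)$. I would prove a fixed-point claim: in both runs the alternating loop stops only when each child has been solved to budget $\varepsilon_{\mathrm{abs}}-\min(\text{sibling})$. The assumption in the statement says that solving a child with budget at least its proxy objective yields a minimum no worse than that proxy value. Using this, I would show by induction on the loop iterations that the child minima in the seeded-by-proxy run are always $\ge$ those reachable in the walk's run. The walk's seeds are certified by real trees (Step 1), and canonical subgraphs are budget-independent and only extended, never shrunk. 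It follows that the final budget for each child in the walk's run is at least the final budget in the proxy run. Budget monotonicity of \textsc{ArborEnum} (a larger budget extends the same canonical node) then gives the superset at $s$.

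I expect a subtlety in the case where the proxy value is smaller than the walk seed, i.e.\ $P_L<\widehat P_L$. Here the proxy run initially grants the sibling a larger budget. I would handle it with the stated assumption: once the proxy run solves the first child under its budget, that child's minimum is $\le P_L$. The walk run reaches that same budget, or a larger one, after one more refinement round, because its loop continues while any child budget can grow. The two runs' budgets therefore converge to the same or larger limits.

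\textbf{Step 3 (chaining).} After a successful step, the new anchor's minima again certify completions, so the argument repeats along the chain. A \textbf{break} merely returns the remaining thresholds to the standard path, which is exactly the test-based processing. Combining Steps 1--3 over all thresholds gives the theorem.
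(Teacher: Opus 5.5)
Your Step 1 matches the paper's opening argument. Comparing only at thresholds with $P_L(s)+P_R(s)\le\varepsilon_{\mathrm{abs}}$ is also how the paper frames the comparison. The gap is Step 2, which you rightly flag as the crux but do not actually prove.

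The theorem's hypothesis applies only to a child that is handed a budget at least its \emph{proxy} objective $P_L(s)$ or $P_R(s)$. Nothing in your plan shows that the walk's refinement ever hands either child such a budget. Step 1 cannot supply this: knowing that some tree of objective at most $\widehat P_L$ exists on $D_L(s)$ says nothing about whether an approximate \textsc{ArborEnum} called with budget at least $\widehat P_L$ returns one. Your proposed induction (proxy-run minima always at least walk minima) cannot hold round by round. If $\widehat P_R>P_R(s)$, the proxy run gives the left child the strictly larger first budget $\varepsilon_{\mathrm{abs}}-P_R(s)$, and with an approximate proxy a larger budget can expose a strictly better tree. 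In your subtle case, the catch-up step applies the hypothesis to the \emph{proxy run's} left child. What is needed is that the \emph{walk's} left child reaches minimum at most $P_L(s)$. The reason you offer, that the loop continues while any child budget can grow, is circular: a budget grows only after the sibling's minimum drops, which is exactly what must be shown.

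The missing idea, which is the core of the paper's proof, is to compare each proxy objective with its seed and show that in every case at least one child's walk budget is at least its proxy objective.
\begin{itemize}
\item The condition $x\le\Delta$ gives $\varepsilon_{\mathrm{abs}}-\widehat P_R\ge\widehat P_L$. So if $P_L(s)\le\widehat P_L$, the left child's walk budget already exceeds $P_L(s)$. This inequality, not the loop's persistence, is what settles your subtle case.
\item If $P_L(s)>\widehat P_L$, the right child's budget is $\varepsilon_{\mathrm{abs}}$ minus a quantity smaller than $P_L(s)$ (namely $\widehat P_L$ in the paper's accounting). Hence it exceeds $\varepsilon_{\mathrm{abs}}-P_L(s)\ge P_R(s)$, using $P_L(s)+P_R(s)\le\varepsilon_{\mathrm{abs}}$.
\end{itemize}
Either way, the hypothesis bounds that child's minimum by its proxy objective. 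One more refinement step then gives the sibling a budget of at least $\varepsilon_{\mathrm{abs}}$ minus that proxy objective, which is at least the sibling's own proxy objective. From there your fixed-point monotonicity argument can take over.

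If you want to follow the actual \textsc{GraphSolveOrdered} order, the left child is solved first with budget $\varepsilon_{\mathrm{abs}}-\widehat P_R$, and the right child then gets $\varepsilon_{\mathrm{abs}}-m_L$. The left child can start below $P_L(s)$ only when $\widehat P_L<P_L(s)$ and $\widehat P_R>P_R(s)$. In that case a nonempty left node still leaves the right child a budget of at least $P_R(s)$. The remaining failure mode is an empty left child, which makes the walk abort at $s$. That is exactly where your fallback of returning $s$ to the test-based path is needed.
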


\begin{proof}
Consider a solved threshold $t$ with child subgraphs of minimum objectives
$L(t)$ and $U(t)$, and suppose
\[
    L(t)+U(t) \le \varepsilon_{\mathrm{abs}}.
\]
Let
\[
    \Delta = \varepsilon_{\mathrm{abs}} - L(t)-U(t)
\]
be the remaining slack. Now move to a neighboring threshold $s$ in the same
continuous feature group, and let
\[
    x = \mathrm{dist}_D(t,s)
\]
be the number of active samples whose branch assignment changes between $t$
and $s$.

Suppose first that $s$ lies to the right of $t$. Then the left child gains at
most $x$ samples, while the right child loses samples. Reusing the solved left
subtree from $t$ can increase its loss by at most $x$, and reusing the solved
right subtree cannot increase its loss because it is evaluated on fewer samples.
Thus the split at $s$ has a certified completion with child objectives bounded by
\[
    L(t)+x
    \qquad\text{and}\qquad
    U(t).
\]
If $x \le \Delta$, then
\[
    L(t)+x+U(t) \le \varepsilon_{\mathrm{abs}},
\]
so these bounds are sufficient child budgets for calling
\textsc{AddOrExtendSplit} at $s$ without first recomputing the proxy-pruning test.
The leftward case is symmetric.

By assumption, whenever a child is given budget at least its proxy objective, the
constructed subgraph has minimum objective no worse than that proxy objective.

Now we ask whether the child budgets assigned by
\textsc{AddOrExtendSplit} are at least the corresponding proxy objectives. If
they are, then the claim follows. This may not be immediate: finding a child
subgraph whose minimum objective is no worse than the proxy objective does not,
by itself, say that we have recovered every tree that the proxy-pruning test
would have recovered. The reason it is enough is that
\textsc{AddOrExtendSplit} performs iterative budget refinement. Once one child
is solved to objective no worse than its proxy objective, the next refinement
step subtracts this no-larger objective instead of the original walk bound
$L(t)+x$ or $U(t)$. Therefore the opposite child receives a budget at least
as large as it would have received under the explicit proxy-pruning test. Thus
the refinement process reaches the same budgets as the explicit proxy procedure,
possibly one iteration later. Since iterative budget refinement is run to
convergence, this delay does not change the recovered set.

Let
\[
    P_L(s) \qquad\text{and}\qquad P_R(s)
\]
denote the proxy objectives of the left and right child subproblems induced by
threshold $s$. We only need to consider the case where the proxy completion at
$s$ is within budget, namely
\[
    P_L(s)+P_R(s) \le \varepsilon_{\mathrm{abs}}.
\]
If this inequality fails, then explicitly applying the proxy-pruning test at $s$
would prune $s$, so the neighboring-threshold walk cannot miss any threshold
that the proxy-pruning test would have retained.

Thus suppose
\[
    P_L(s)+P_R(s) \le \varepsilon_{\mathrm{abs}}.
\]

Furthermore, we are only interested in the case where $P_L(S) \leq L(t) + x$ and $P_R(s) \leq U(t)$. If either one of these is not true, then our budgets are set for the chilren by subtracting a smaller number: thus the budgets are at least as large, and because we know $P_L(s)+P_R(s) \le \varepsilon_{\mathrm{abs}}$, we also know $ P_L(s)\le \varepsilon_{\mathrm{abs}}-P_R(s) $ and $ P_R(s)\le \varepsilon_{\mathrm{abs}}-P_L(s) $. That is, we know that if the true proxy check passes that the children subproblems are set with budget at least the proxy, so setting budgets any bigger will also be at least the proxy.

Therefore, we are left with handling the cases where $P_L(S) \leq L(t) + x$ and $P_R(s) \leq U(t)$.

We know $L(t) + x\le \varepsilon_{\mathrm{abs}}-U(t)$ because these upper bounds pass the pruning test. We assumed that $P_L(S) \leq L(t) + x$, so we have ensured it on this side. 

It remains to check the other side. We know $U(t) \le \varepsilon_{\mathrm{abs}}-(L(t) + x)$. By analogous reasoning (knowing $P_R(s) \leq U(t)$), we have the guarantee here.

Thus, we were fine if the true proxy test failed, fine if either one of the proxy objectives were bigger than our guesses, and if either one of our proxy objectives are smaller than our guesses. Therefore, no approximation quality is lost (and maybe some is gained). To state clearly, we can solve either side first, and whether we subtract a smaller or bigger number than the proxy, it is okay.

\end{proof}

\FloatBarrier

\section{Datasets}\label{app:datasets}

We summarize each dataset considered in our experiments, including the corresponding binary prediction problem.

Across all datasets, we discard observations with missing entries and represent categorical features using one-hot encoding.

We use 20 datasets in this work, most of which are drawn from the benchmark suite of \citet{heile2026}. We exclude several datasets from that suite because they contain only binary features, such as Droid and Monk2, and add several additional datasets to reach our goal of 20 datasets. 
\paragraph{\textbf{Abalone \citep{openml_abalone_44956}} (\textit{4{,}177 samples})}
Predict whether an abalone is male based on its physical measurements. Exhaustive binarization of this dataset creates 6039 columns (not counting the label).

\paragraph{\textbf{Adult \citep{adult_2}} (\textit{48{,}842 samples})}
Predict whether an individual earns more than \$50{,}000 per year based on demographic and occupational attributes. Exhaustive binarization of this dataset creates 27237 columns.

\paragraph{\textbf{Aging \citep{national_poll_on_healthy_aging_(npha)_936, malani2017npha}} (\textit{714 samples})}
Predict whether an individual has visited at least two doctors. Exhaustive binarization of this dataset creates 35 columns.

\paragraph{\textbf{Bank \citep{moro2014ada, bank_marketing_222}} (\textit{45{,}211 samples})}
Predict whether a client subscribes to a term deposit following a marketing campaign. Exhaustive binarization of this dataset creates 9530 columns.

\paragraph{\textbf{Bike \citep{FanaeeT2013EventLC, bike_sharing_275}} (\textit{17{,}379 samples})}
Predict whether bike rental demand exceeds the median. Exhaustive binarization of this dataset creates 279 columns.

\paragraph{\textbf{Churn \citep{erickson2025tabarenalivingbenchmarkmachine, marcoulides2005data}} (\textit{5{,}000 samples})}
Predict whether a customer will churn. Exhaustive binarization of this dataset creates 16400 columns.

\paragraph{\textbf{Compas \citep{bao2021compaslicated}} (\textit{4{,}966 samples})}
Predict whether a defendant will recidivate within two years. Exhaustive binarization of this dataset creates 120 columns.

\paragraph{\textbf{Coupon \citep{in-vehicle_coupon_recommendation_603}} (\textit{108 samples})}
Predict whether an individual accepts a recommended coupon. Exhaustive binarization of this dataset creates 64 columns.

\paragraph{\textbf{Credit \citep{default_of_credit_card_clients_350, Yeh2009TheCO}} (\textit{30{,}000 samples})}
Predict whether a client will default on their credit card payment in the following month. Exhaustive binarization of this dataset creates 174581 columns.

\paragraph{\textbf{Diabetes \citep{burrows2017esrd}} (\textit{253{,}680 samples})}
Predict whether an individual is diabetic. Exhaustive binarization of this dataset creates 185 columns.

\paragraph{\textbf{Diamonds \citep{openml_diamonds_42225}} (\textit{53{,}940 samples})}
Predict whether a diamond belongs to cut category 2 based on its physical attributes. Exhaustive binarization of this dataset creates 2074 columns.

\paragraph{\textbf{Helena \citep{openml_helena_41169, automlchallenges}} (\textit{65{,}196 samples})}
Predict whether an instance belongs to the most frequent class. Exhaustive binarization of this dataset creates 1540012 (over 1 million) columns.

\paragraph{\textbf{Heloc \citep{fico2018heloc}} (\textit{2{,}502 samples})}
Predict whether an individual is high- or low-risk for a home equity line of credit. Exhaustive binarization of this dataset creates 1505 columns.

\paragraph{\textbf{Jasmine \citep{openml_jasmine_41143, automlchallenges}} (\textit{2{,}984 samples})}
Binary classification using the provided target column. Exhaustive binarization of this dataset creates 2457 columns.

\paragraph{\textbf{Pol \citep{openml_pol_44082}} (\textit{10{,}082 samples})}Binary classification using the provided target column. Exhaustive binarization of this dataset creates 2126 columns.

\paragraph{\textbf{Rl \citep{openml_rl_43949}} (\textit{4{,}970 samples})}Binary classification using the provided target column. Exhaustive binarization of this dataset creates 1313 columns.

\paragraph{\textbf{Shopping \citep{online_shoppers_purchasing_intention_dataset_468, Sakar2018RealtimePO}} (\textit{12{,}330 samples})}
Predict whether an online shopping session ends in a purchase. Exhaustive binarization of this dataset creates 23909 columns.

\paragraph{\textbf{Spambase \citep{spambase_94}} (\textit{4{,}601 samples})}
Predict whether an email is spam. Exhaustive binarization of this dataset creates 15037 columns.

\paragraph{\textbf{Student \citep{student_performance_320, Cortez2008UsingDM}} (\textit{649 samples})}
Predict whether a student passes a course (final grade $\geq 10$). Exhaustive binarization of this dataset creates 114 columns.

\paragraph{\textbf{Wine \citep{openml_wine_47041}} (\textit{6{,}497 samples})}
Predict whether wine quality is at least 7. Exhaustive binarization of this dataset creates 2640 columns.

\section{Experiments}
\label{app:experiments}

\subsection{Computational Resources}
\label{app:computational-resources}

All experiments were conducted on an institutional computing cluster. Each run was executed on a single compute node equipped with an AMD EPYC 9554 processor (2.75 GHz, 64 physical cores) and was restricted to a single CPU core. With one exception, each algorithm was given 128 GB of memory and a 100-hour timeout to compute a Rashomon set for one bootstrap of one dataset for one set of parameters. We instead limited our anytime algorithms to 32 GB of memory and a 24-hour timeout, both to reduce resource usage and to evaluate their performance in a more resource-constrained setting.

When restricting a proxy algorithm, or one of its subroutines, to a fixed binarization, we select candidate thresholds using the ThresholdGuessing procedure of \citet{gosdt_guesses}. We train a gradient-boosted tree ensemble with 150 depth-2 estimators and random seed 0, extract the thresholds used by the ensemble, and apply backward elimination to remove thresholds without reducing the ensemble's training accuracy.

\subsection{Timing, Memory, and Recall}
\label{app:timing-memory-recall}

\begin{table*}[!t]
\centering
\scriptsize
\setlength{\tabcolsep}{3pt}
\caption{
$\lambda=0.02, \varepsilon=0.03, d=5$. Runtime and peak memory for the exhaustive threshold (fully continuous) setting. Our methods are the four variants shown in the table, each combining our continuous-feature Rashomon set algorithm (ArborEnum) with a different proxy algorithm. 
Time is reported in seconds and peak RSS is reported in MB. Mean $\pm$ std across 3 bootstraps.
Entries marked ``--'' correspond to partial or unfinished runs with a 100hr timeout and 128GB memory limit.
}
\label{tab:runtime-memory-exhaustive}
\resizebox{\textwidth}{!}{%
\begin{tabular}{lrrrrrrrr}
\toprule
& \multicolumn{2}{c}{LicketySPLIT over a binarization}
& \multicolumn{2}{c}{LicketySNIP (fully continuous)}
& \multicolumn{2}{c}{LicketySNIP (with greedy subroutine using a binarization)}
& \multicolumn{2}{c}{Optimal} \\
\cmidrule(lr){2-3}
\cmidrule(lr){4-5}
\cmidrule(lr){6-7}
\cmidrule(lr){8-9}
Dataset
& Time & Mem.
& Time & Mem.
& Time & Mem.
& Time & Mem. \\
\midrule
Abalone    & $264.26 \pm 401.02$ & $1823.5 \pm 2412.8$ & $2810.60 \pm 4187.00$ & $1259.3 \pm 1544.1$ & $205.14 \pm 311.57$ & $1251.0 \pm 1528.6$ & $55174.03 \pm 47866.54$ & $53538.0 \pm 44185.3$ \\
Adult      & $239.57 \pm 26.37$ & $837.4 \pm 17.4$ & $12131.75 \pm 371.85$ & $891.1 \pm 12.5$ & $292.12 \pm 16.25$ & $851.0 \pm 10.9$ & $141689.07 \pm 13770.50$ & $28629.1 \pm 960.4$ \\
Aging      & $0.08 \pm 0.00$ & $249.1 \pm 0.4$ & $0.08 \pm 0.00$ & $248.0 \pm 0.8$ & $0.08 \pm 0.00$ & $250.6 \pm 0.6$ & $0.24 \pm 0.00$ & $253.9 \pm 2.4$ \\
Bank       & $14.64 \pm 0.99$ & $376.0 \pm 1.6$ & $212.52 \pm 5.41$ & $358.3 \pm 0.7$ & $6.43 \pm 0.11$ & $366.1 \pm 0.5$ & $20978.46 \pm 2133.15$ & $10577.0 \pm 826.5$ \\
Bike       & $7.58 \pm 2.25$ & $298.8 \pm 8.7$ & $27.05 \pm 4.96$ & $299.6 \pm 5.8$ & $7.95 \pm 1.91$ & $299.5 \pm 7.2$ & $1379.90 \pm 56.81$ & $3044.2 \pm 89.8$ \\
Churn      & $7.18 \pm 5.99$ & $335.2 \pm 49.2$ & $643.87 \pm 555.53$ & $337.4 \pm 58.8$ & $9.42 \pm 9.13$ & $338.4 \pm 57.6$ & $257505.10 \pm 25437.01$ & $129432.5 \pm 1527.8$ \\
Compas     & $0.45 \pm 0.15$ & $255.1 \pm 3.1$ & $0.50 \pm 0.19$ & $251.0 \pm 1.1$ & $0.35 \pm 0.12$ & $254.3 \pm 1.1$ & $4.76 \pm 0.15$ & $276.9 \pm 2.1$ \\
Coupon     & $0.25 \pm 0.24$ & $272.1 \pm 34.6$ & $0.15 \pm 0.04$ & $249.7 \pm 2.7$ & $0.12 \pm 0.04$ & $252.0 \pm 3.1$ & $0.20 \pm 0.01$ & $250.5 \pm 1.1$ \\
Credit     & $74.43 \pm 42.76$ & $1231.4 \pm 30.2$ & $24849.80 \pm 619.24$ & $1217.0 \pm 3.1$ & $65.52 \pm 22.19$ & $1218.8 \pm 2.4$ & -- & -- \\
Diabetes   & $6.86 \pm 0.66$ & $379.5 \pm 0.7$ & $8.55 \pm 0.22$ & $353.4 \pm 1.3$ & $3.77 \pm 0.17$ & $378.5 \pm 1.3$ & $1373.41 \pm 65.57$ & $808.1 \pm 54.1$ \\
Diamonds   & $46.39 \pm 2.67$ & $355.0 \pm 2.2$ & $313.35 \pm 23.79$ & $341.6 \pm 3.9$ & $39.10 \pm 3.25$ & $346.0 \pm 4.2$ & $16615.59 \pm 1023.55$ & $6282.0 \pm 131.1$ \\
Helena     & $213.06 \pm 16.55$ & $15908.0 \pm 21.5$ & $286987.90 \pm 12346.63$ & $15861.3 \pm 16.8$ & $185.50 \pm 11.11$ & $15879.9 \pm 17.1$ & -- & -- \\
Heloc      & $31.57 \pm 28.38$ & $537.2 \pm 227.4$ & $273.53 \pm 133.41$ & $548.9 \pm 162.1$ & $34.89 \pm 26.12$ & $565.2 \pm 184.1$ & -- & -- \\
Jasmine    & $44.03 \pm 48.44$ & $646.8 \pm 424.1$ & $405.78 \pm 427.03$ & $704.8 \pm 483.7$ & $61.10 \pm 70.61$ & $751.2 \pm 550.0$ & -- & -- \\
Pol        & $283.44 \pm 153.15$ & $1945.8 \pm 882.3$ & $2155.51 \pm 817.12$ & $2094.3 \pm 727.8$ & $383.98 \pm 190.49$ & $2014.6 \pm 768.6$ & $33700.35 \pm 2870.31$ & $35074.4 \pm 2411.6$ \\
RL         & $123.59 \pm 69.11$ & $1363.4 \pm 453.8$ & $666.06 \pm 74.75$ & $1198.2 \pm 166.1$ & $117.10 \pm 33.85$ & $1248.4 \pm 214.3$ & $32255.91 \pm 6094.71$ & $45360.1 \pm 6446.0$ \\
Shopping   & $901.11 \pm 114.47$ & $3037.6 \pm 210.3$ & $25547.42 \pm 2593.81$ & $1563.7 \pm 140.9$ & $489.25 \pm 57.80$ & $1596.9 \pm 97.5$ & -- & -- \\
Spambase   & $9897.26 \pm 2575.68$ & $64879.6 \pm 12411.8$ & -- & -- & $25248.74 \pm 8522.68$ & $93511.3 \pm 31495.6$ & -- & -- \\
Student    & $0.16 \pm 0.02$ & $253.3 \pm 0.6$ & $0.26 \pm 0.05$ & $251.5 \pm 1.5$ & $0.17 \pm 0.02$ & $253.1 \pm 2.0$ & $43.22 \pm 2.13$ & $871.6 \pm 23.4$ \\
Wine       & $17.86 \pm 8.30$ & $348.9 \pm 44.0$ & $113.01 \pm 79.44$ & $304.2 \pm 34.0$ & $10.64 \pm 7.08$ & $306.0 \pm 31.9$ & $81042.78 \pm 8688.47$ & $78053.3 \pm 10711.4$ \\
\bottomrule
\end{tabular}%
}
\end{table*}

\begin{table*}[!t]
\centering
\scriptsize
\setlength{\tabcolsep}{5pt}
\caption{
$\lambda=0.02, \varepsilon=0.03, d=5$. Recall for the exhaustive threshold (fully continuous) setting. mean $\pm$ std is reported.
Recall is reported relative to the best method that finished. Formally, we calculate the guessed Rashomon bound by taking the minimum objective any method found, counting the number of trees each method found within that bound, and scoring a method by the number of trees as a proportion of the best.
Entries marked ``--'' correspond to partial or unfinished runs. Our methods are the four variants shown in the table, each combining our continuous-feature Rashomon set algorithm with a different proxy algorithm.
}
\label{tab:recall-exhaustive}
\resizebox{\textwidth}{!}{%
\begin{tabular}{lrrrr}
\toprule
Dataset
& LicketySPLIT over a binarization
& LicketySNIP (fully continuous)
& LicketySNIP with a binarized greedy subroutine)
& Optimal \\
\midrule
Abalone    & $1.000 \pm 0.000$ & $1.000 \pm 0.000$ & $1.000 \pm 0.000$ & $1.000 \pm 0.000$ \\
Adult      & $1.000 \pm 0.000$ & $1.000 \pm 0.000$ & $1.000 \pm 0.000$ & $1.000 \pm 0.000$ \\
Aging      & $1.000 \pm 0.000$ & $1.000 \pm 0.000$ & $1.000 \pm 0.000$ & $1.000 \pm 0.000$ \\
Bank       & $1.000 \pm 0.000$ & $1.000 \pm 0.000$ & $1.000 \pm 0.000$ & $1.000 \pm 0.000$ \\
Bike       & $1.000 \pm 0.000$ & $1.000 \pm 0.000$ & $1.000 \pm 0.000$ & $1.000 \pm 0.000$ \\
Churn      & $1.000 \pm 0.000$ & $1.000 \pm 0.000$ & $1.000 \pm 0.000$ & $1.000 \pm 0.000$ \\
Compas     & $1.000 \pm 0.000$ & $1.000 \pm 0.000$ & $1.000 \pm 0.000$ & $1.000 \pm 0.000$ \\
Coupon     & $0.667 \pm 0.577$ & $1.000 \pm 0.000$ & $1.000 \pm 0.000$ & $1.000 \pm 0.000$ \\
Credit     & $1.000 \pm 0.000$ & $1.000 \pm 0.000$ & $1.000 \pm 0.000$ & -- \\
Diabetes   & $1.000 \pm 0.000$ & $1.000 \pm 0.000$ & $1.000 \pm 0.000$ & $1.000 \pm 0.000$ \\
Diamonds   & $1.000 \pm 0.000$ & $1.000 \pm 0.000$ & $1.000 \pm 0.000$ & $1.000 \pm 0.000$ \\
Helena     & $1.000 \pm 0.000$ & $1.000 \pm 0.000$ & $1.000 \pm 0.000$ & -- \\
Heloc      & $1.000 \pm 0.000$ & $1.000 \pm 0.000$ & $1.000 \pm 0.000$ & -- \\
Jasmine    & $0.931 \pm 0.120$ & $1.000 \pm 0.000$ & $1.000 \pm 0.000$ & -- \\
Pol        & $1.000 \pm 0.001$ & $1.000 \pm 0.000$ & $1.000 \pm 0.000$ & $1.000 \pm 0.000$ \\
RL         & $0.993 \pm 0.012$ & $1.000 \pm 0.000$ & $1.000 \pm 0.000$ & $1.000 \pm 0.000$ \\
Shopping   & $1.000 \pm 0.000$ & $1.000 \pm 0.000$ & $1.000 \pm 0.000$ & -- \\
Spambase   & $0.999 \pm 0.001$ & -- & $1.000 \pm 0.000$ & -- \\
Student    & $1.000 \pm 0.000$ & $1.000 \pm 0.000$ & $1.000 \pm 0.000$ & $1.000 \pm 0.000$ \\
Wine       & $1.000 \pm 0.000$ & $1.000 \pm 0.000$ & $1.000 \pm 0.000$ & $1.000 \pm 0.000$ \\
\bottomrule
\end{tabular}%
}
\end{table*}

We use one-sided Wilcoxon signed-rank tests on the results in Table~\ref{tab:runtime-exhaustive-8}, treating the mean runtime across the three bootstraps as one paired observation per dataset. Each comparison includes only datasets for which both methods completed. Because we perform two runtime comparisons, we control the family-wise error rate at $\alpha=0.05$ using Holm's correction. On the six datasets where both our optimal method and SORTD completed, our optimal method was significantly faster ($W=0$, Holm-adjusted $p=0.0156$). On the 14 datasets where both SNIP+GR and our optimal method completed, SNIP+GR was also significantly faster ($W=0$, Holm-adjusted $p=1.22\times10^{-4}$). These complete-case comparisons are conservative with respect to the faster methods: for instance, SORTD is not penalized for running out of memory or time.
\begin{table*}[!t]
\centering
\scriptsize
\setlength{\tabcolsep}{3pt}
\caption{
$\lambda=0.02, \varepsilon=0.03, d=5$. Runtime for the exhaustive-threshold (fully continuous) setting.
Time is reported in seconds as mean $\pm$ standard deviation across 3 bootstraps.
Entries marked ``--'' correspond to partial or unfinished runs under a 100-hour timeout and 128GB memory limit.
The best completed runtime in each row is bolded and the second-best is underlined. Our methods are the first four shown in the table, each combining our continuous-feature Rashomon set algorithm with a different proxy algorithm.
}
\label{tab:runtime-exhaustive-8}
\resizebox{\textwidth}{!}{%
\begin{tabular}{lrrrrrrrr}
\toprule
Dataset
& LSR
& SNIP
& SNIP+GR
& Optimal
& PRAXIS
& SORTD
& TreeFARMS
& RESPLIT \\
\midrule
Abalone & \underline{$264.26 \pm 401.02$} & $2810.60 \pm 4187.00$ & $\boldsymbol{205.14 \pm 311.57}$ & $55174.03 \pm 47866.54$ & -- & -- & -- & -- \\
Adult & $\boldsymbol{239.57 \pm 26.37}$ & $12131.75 \pm 371.85$ & \underline{$292.12 \pm 16.25$} & $141689.07 \pm 13770.50$ & -- & -- & -- & -- \\
Aging & \underline{$0.08 \pm 0.00$} & \underline{$0.08 \pm 0.00$} & \underline{$0.08 \pm 0.00$} & $0.24 \pm 0.00$ & $\boldsymbol{0.06 \pm 0.00}$ & $1.60 \pm 0.19$ & $1116.43 \pm 449.69$ & $0.52 \pm 0.02$ \\
Bank & \underline{$14.64 \pm 0.99$} & $212.52 \pm 5.41$ & $\boldsymbol{6.43 \pm 0.11}$ & $20978.46 \pm 2133.15$ & -- & -- & -- & -- \\
Bike & $\boldsymbol{7.58 \pm 2.25}$ & $27.05 \pm 4.96$ & \underline{$7.95 \pm 1.91$} & $1379.90 \pm 56.81$ & $116.84 \pm 24.95$ & $86351.11 \pm 15522.58$ & -- & $3813.08 \pm 305.80$ \\
Churn & $\boldsymbol{7.18 \pm 5.99}$ & $643.87 \pm 555.53$ & \underline{$9.42 \pm 9.13$} & $257505.10 \pm 25437.01$ & -- & -- & -- & -- \\
Compas & \underline{$0.45 \pm 0.15$} & $0.50 \pm 0.19$ & $\boldsymbol{0.35 \pm 0.12}$ & $4.76 \pm 0.15$ & $1.35 \pm 0.76$ & $239.54 \pm 10.20$ & -- & $88.90 \pm 48.53$ \\
Coupon & $0.25 \pm 0.24$ & \underline{$0.15 \pm 0.04$} & $\boldsymbol{0.12 \pm 0.04}$ & $0.20 \pm 0.01$ & $0.23 \pm 0.22$ & $1.16 \pm 0.09$ & $1006.25 \pm 323.06$ & -- \\
Credit & \underline{$74.43 \pm 42.76$} & $24849.80 \pm 619.24$ & $\boldsymbol{65.52 \pm 22.19}$ & -- & -- & -- & -- & -- \\
Diabetes & \underline{$6.86 \pm 0.66$} & $8.55 \pm 0.22$ & $\boldsymbol{3.77 \pm 0.17}$ & $1373.41 \pm 65.57$ & $64.16 \pm 0.66$ & $232015.98 \pm 24149.42$ & -- & $7507.98 \pm 232.33$ \\
Diamonds & \underline{$46.39 \pm 2.67$} & $313.35 \pm 23.79$ & $\boldsymbol{39.10 \pm 3.25}$ & $16615.59 \pm 1023.55$ & $77628.35 \pm 6710.46$ & -- & -- & -- \\
Helena & \underline{$213.06 \pm 16.55$} & $286987.90 \pm 12346.63$ & $\boldsymbol{185.50 \pm 11.11}$ & -- & -- & -- & -- & -- \\
Heloc & $\boldsymbol{31.57 \pm 28.38}$ & $273.53 \pm 133.41$ & \underline{$34.89 \pm 26.12$} & -- & $2628.72 \pm 1410.49$ & -- & -- & $87593.52 \pm 30122.99$ \\
Jasmine & $\boldsymbol{44.03 \pm 48.44}$ & $405.78 \pm 427.03$ & \underline{$61.10 \pm 70.61$} & -- & $5118.09 \pm 4269.98$ & -- & -- & $222714.18 \pm 111739.02$ \\
Pol & $\boldsymbol{283.44 \pm 153.15}$ & $2155.51 \pm 817.12$ & \underline{$383.98 \pm 190.49$} & $33700.35 \pm 2870.31$ & -- & -- & -- & -- \\
Rl & \underline{$123.59 \pm 69.11$} & $666.06 \pm 74.75$ & $\boldsymbol{117.10 \pm 33.85}$ & $32255.91 \pm 6094.71$ & $27240.65 \pm 6758.17$ & -- & -- & -- \\
Shopping & \underline{$901.11 \pm 114.47$} & $25547.42 \pm 2593.81$ & $\boldsymbol{489.25 \pm 57.80}$ & -- & -- & -- & -- & -- \\
Spambase & $\boldsymbol{9897.26 \pm 2575.68}$ & -- & \underline{$25248.74 \pm 8522.68$} & -- & -- & -- & -- & -- \\
Student & $\boldsymbol{0.16 \pm 0.02}$ & $0.26 \pm 0.05$ & \underline{$0.17 \pm 0.02$} & $43.22 \pm 2.13$ & $0.22 \pm 0.03$ & $145.98 \pm 9.97$ & -- & $16.92 \pm 6.44$ \\
Wine & \underline{$17.86 \pm 8.30$} & $113.01 \pm 79.44$ & $\boldsymbol{10.64 \pm 7.08}$ & $81042.78 \pm 8688.47$ & $7189.26 \pm 4812.77$ & -- & -- & -- \\
\bottomrule
\end{tabular}%
}
\footnotesize{
LSR = LicketySPLIT over a binarization; SNIP = LicketySNIP (fully continuous); SNIP+GR = LicketySNIP with its greedy subroutine restricted to a binarization. 
}
\end{table*}

\begin{table*}[!t]
\centering
\scriptsize
\setlength{\tabcolsep}{3pt}
\caption{
$\lambda=0.02, \varepsilon=0.03, d=5$. Peak memory for the exhaustive-threshold (fully continuous) setting.
Peak RSS is reported in MB as mean $\pm$ standard deviation across 3 bootstraps.
Entries marked ``--'' correspond to partial or unfinished runs under a 100-hour timeout and 128GB memory limit.
The lowest completed memory usage in each row is bolded and the second-lowest is underlined. Our methods are the first four shown in the table, each combining our continuous-feature Rashomon set algorithm with a different proxy algorithm.
}
\label{tab:memory-exhaustive-8}
\resizebox{\textwidth}{!}{%
\begin{tabular}{lrrrrrrrr}
\toprule
Dataset
& LSR
& SNIP
& SNIP+GR
& Optimal
& PRAXIS
& SORTD
& TreeFARMS
& RESPLIT \\
\midrule
Abalone & $1823.5 \pm 2412.8$ & \underline{$1259.3 \pm 1544.1$} & $\boldsymbol{1251.0 \pm 1528.6}$ & $53538.0 \pm 44185.3$ & -- & -- & -- & -- \\
Adult & $\boldsymbol{837.4 \pm 17.4}$ & $891.1 \pm 12.5$ & \underline{$851.0 \pm 10.9$} & $28629.1 \pm 960.4$ & -- & -- & -- & -- \\
Aging & $249.1 \pm 0.4$ & $248.0 \pm 0.8$ & $250.6 \pm 0.6$ & $253.9 \pm 2.4$ & \underline{$223.1 \pm 1.5$} & $\boldsymbol{196.4 \pm 1.3}$ & $5350.2 \pm 1187.7$ & $233.9 \pm 1.8$ \\
Bank & $376.0 \pm 1.6$ & $\boldsymbol{358.3 \pm 0.7}$ & \underline{$366.1 \pm 0.5$} & $10577.0 \pm 826.5$ & -- & -- & -- & -- \\
Bike & $\boldsymbol{298.8 \pm 8.7}$ & $299.6 \pm 5.8$ & \underline{$299.5 \pm 7.2$} & $3044.2 \pm 89.8$ & $356.5 \pm 22.4$ & $12139.7 \pm 213.2$ & -- & $2824.3 \pm 36.7$ \\
Churn & $\boldsymbol{335.2 \pm 49.2}$ & \underline{$337.4 \pm 58.8$} & $338.4 \pm 57.6$ & $129432.5 \pm 1527.8$ & -- & -- & -- & -- \\
Compas & $255.1 \pm 3.1$ & \underline{$251.0 \pm 1.1$} & $254.3 \pm 1.1$ & $276.9 \pm 2.1$ & $\boldsymbol{233.4 \pm 3.6}$ & $434.0 \pm 11.8$ & -- & $365.6 \pm 0.8$ \\
Coupon & $272.1 \pm 34.6$ & \underline{$249.7 \pm 2.7$} & $252.0 \pm 3.1$ & $250.5 \pm 1.1$ & $284.2 \pm 96.3$ & $\boldsymbol{197.4 \pm 0.9}$ & $3084.8 \pm 318.2$ & -- \\
Credit & $1231.4 \pm 30.2$ & $\boldsymbol{1217.0 \pm 3.1}$ & \underline{$1218.8 \pm 2.4$} & -- & -- & -- & -- & -- \\
Diabetes & $379.5 \pm 0.7$ & $\boldsymbol{353.4 \pm 1.3}$ & $378.5 \pm 1.3$ & $808.1 \pm 54.1$ & \underline{$372.5 \pm 8.7$} & $24278.8 \pm 434.4$ & -- & $2748.5 \pm 9.7$ \\
Diamonds & $355.0 \pm 2.2$ & $\boldsymbol{341.6 \pm 3.9}$ & \underline{$346.0 \pm 4.2$} & $6282.0 \pm 131.1$ & $4424.3 \pm 595.3$ & -- & -- & -- \\
Helena & $15908.0 \pm 21.5$ & $\boldsymbol{15861.3 \pm 16.8}$ & \underline{$15879.9 \pm 17.1$} & -- & -- & -- & -- & -- \\
Heloc & $\boldsymbol{537.2 \pm 227.4}$ & \underline{$548.9 \pm 162.1$} & $565.2 \pm 184.1$ & -- & $3213.1 \pm 1695.3$ & -- & -- & $20853.2 \pm 2931.5$ \\
Jasmine & $\boldsymbol{646.8 \pm 424.1}$ & \underline{$704.8 \pm 483.7$} & $751.2 \pm 550.0$ & -- & $4275.8 \pm 3840.1$ & -- & -- & $32739.5 \pm 1899.4$ \\
Pol & $\boldsymbol{1945.8 \pm 882.3}$ & $2094.3 \pm 727.8$ & \underline{$2014.6 \pm 768.6$} & $35074.4 \pm 2411.6$ & -- & -- & -- & -- \\
Rl & $1363.4 \pm 453.8$ & $\boldsymbol{1198.2 \pm 166.1}$ & \underline{$1248.4 \pm 214.3$} & $45360.1 \pm 6446.0$ & $21879.3 \pm 4285.1$ & -- & -- & -- \\
Shopping & $3037.6 \pm 210.3$ & $\boldsymbol{1563.7 \pm 140.9}$ & \underline{$1596.9 \pm 97.5$} & -- & -- & -- & -- & -- \\
Spambase & $\boldsymbol{64879.6 \pm 12411.8}$ & -- & \underline{$93511.3 \pm 31495.6$} & -- & -- & -- & -- & -- \\
Student & $253.3 \pm 0.6$ & \underline{$251.5 \pm 1.5$} & $253.1 \pm 2.0$ & $871.6 \pm 23.4$ & $\boldsymbol{226.1 \pm 1.6}$ & $526.4 \pm 12.3$ & -- & $287.1 \pm 5.8$ \\
Wine & $348.9 \pm 44.0$ & $\boldsymbol{304.2 \pm 34.0}$ & \underline{$306.0 \pm 31.9$} & $78053.3 \pm 10711.4$ & $2361.7 \pm 1519.2$ & -- & -- & -- \\
\bottomrule
\end{tabular}%
}
\footnotesize{
LSR = LicketySPLIT over a binarization; SNIP = LicketySNIP (fully continuous); SNIP+GR = LicketySNIP with its greedy subroutine restricted to a binarization. 
}
\end{table*}

\begin{table*}[!t]
\centering
\scriptsize
\setlength{\tabcolsep}{3pt}
\caption{
$\lambda=0.02, \varepsilon=0.03, d=5$. Runtime with at most 100 thresholds per continuous feature.
Time is reported in seconds as mean $\pm$ standard deviation across 3 bootstraps.
Entries marked ``--'' correspond to partial or unfinished runs under a 100-hour timeout and 128GB memory limit.
The best completed runtime in each row is bolded and the second-best is underlined. Our methods are the first four shown in the table, each combining our continuous-feature Rashomon set algorithm with a different proxy algorithm.
}
\label{tab:runtime-100-8}
\resizebox{\textwidth}{!}{%
\begin{tabular}{lrrrrrrrr}
\toprule
Dataset
& LSR
& SNIP
& SNIP+GR
& Optimal
& PRAXIS
& SORTD
& TreeFARMS
& RESPLIT \\
\midrule
Abalone & \underline{$24.12 \pm 32.55$} & $66.27 \pm 90.30$ & $\boldsymbol{17.93 \pm 24.48}$ & $7837.86 \pm 2316.53$ & $468.03 \pm 675.36$ & -- & -- & $23645.29 \pm 31027.96$ \\
Adult & $\boldsymbol{40.30 \pm 8.73}$ & $200.64 \pm 33.19$ & \underline{$51.53 \pm 8.39$} & $28071.59 \pm 1892.54$ & $388.16 \pm 60.47$ & $308020.36 \pm 14767.12$ & -- & $10647.57 \pm 390.47$ \\
Aging & $0.09 \pm 0.01$ & \underline{$0.08 \pm 0.00$} & \underline{$0.08 \pm 0.00$} & $0.25 \pm 0.01$ & $\boldsymbol{0.06 \pm 0.00}$ & $1.19 \pm 0.10$ & $856.66 \pm 220.11$ & $0.53 \pm 0.08$ \\
Bank & \underline{$6.65 \pm 0.63$} & $12.98 \pm 0.40$ & $\boldsymbol{3.64 \pm 0.12}$ & $5825.58 \pm 209.41$ & $77.35 \pm 1.61$ & -- & -- & $7755.72 \pm 86.31$ \\
Bike & $\boldsymbol{7.59 \pm 2.25}$ & $27.08 \pm 4.68$ & \underline{$8.08 \pm 1.95$} & $1382.71 \pm 60.02$ & $63.03 \pm 11.20$ & $25437.95 \pm 1987.45$ & -- & $2119.83 \pm 135.74$ \\
Churn & $\boldsymbol{2.31 \pm 0.37}$ & $35.93 \pm 4.98$ & \underline{$2.63 \pm 0.47$} & $54132.99 \pm 5265.63$ & $508.46 \pm 39.74$ & -- & -- & $54643.30 \pm 5983.82$ \\
Compas & \underline{$0.45 \pm 0.14$} & $0.50 \pm 0.20$ & $\boldsymbol{0.35 \pm 0.12}$ & $4.70 \pm 0.15$ & $0.63 \pm 0.31$ & $45.01 \pm 1.63$ & -- & $58.15 \pm 47.29$ \\
Coupon & $0.28 \pm 0.24$ & \underline{$0.15 \pm 0.03$} & $\boldsymbol{0.12 \pm 0.04}$ & $0.19 \pm 0.01$ & $3.10 \pm 4.74$ & $1.38 \pm 0.09$ & -- & -- \\
Credit & \underline{$33.92 \pm 15.53$} & $246.79 \pm 3.23$ & $\boldsymbol{32.31 \pm 8.89}$ & -- & $2814.93 \pm 100.25$ & -- & -- & $268946.24 \pm 4316.33$ \\
Diabetes & \underline{$6.89 \pm 0.63$} & $8.53 \pm 0.22$ & $\boldsymbol{3.79 \pm 0.15}$ & $1368.23 \pm 61.30$ & $10.94 \pm 0.47$ & $23158.60 \pm 1157.70$ & -- & $1152.02 \pm 43.55$ \\
Diamonds & $\boldsymbol{8.11 \pm 0.31}$ & $34.78 \pm 2.90$ & \underline{$9.05 \pm 0.28$} & $4232.98 \pm 182.45$ & $320.47 \pm 31.37$ & -- & -- & $30681.51 \pm 1680.20$ \\
Helena & $\boldsymbol{58.98 \pm 4.05}$ & $952.19 \pm 50.87$ & \underline{$64.08 \pm 4.62$} & -- & $35213.33 \pm 1016.12$ & -- & -- & -- \\
Heloc & $\boldsymbol{28.62 \pm 25.81}$ & $221.98 \pm 109.88$ & \underline{$34.07 \pm 25.68$} & -- & $794.29 \pm 429.05$ & -- & -- & $20140.03 \pm 7215.81$ \\
Jasmine & $\boldsymbol{20.31 \pm 11.73}$ & $149.04 \pm 95.28$ & \underline{$27.55 \pm 15.79$} & $77700.00 \pm 3568.15$ & $311.73 \pm 166.92$ & -- & -- & $13199.46 \pm 8250.98$ \\
Pol & $\boldsymbol{169.89 \pm 90.07}$ & $1243.43 \pm 478.57$ & \underline{$301.10 \pm 221.54$} & $22884.28 \pm 3738.72$ & $3032.13 \pm 1062.79$ & -- & -- & $20700.59 \pm 2163.83$ \\
Rl & $\boldsymbol{33.83 \pm 18.94}$ & $122.82 \pm 28.90$ & \underline{$34.20 \pm 10.26$} & $9259.82 \pm 1195.01$ & $619.29 \pm 189.76$ & $160810.85 \pm 6355.71$ & -- & $25441.88 \pm 10029.45$ \\
Shopping & \underline{$13.42 \pm 0.96$} & $30.13 \pm 1.55$ & $\boldsymbol{8.42 \pm 0.42}$ & $26436.73 \pm 5736.45$ & $271.09 \pm 19.24$ & -- & -- & $11775.36 \pm 323.10$ \\
Spambase & $\boldsymbol{195.95 \pm 64.78}$ & $3962.34 \pm 852.79$ & \underline{$389.91 \pm 93.21$} & -- & $40131.08 \pm 9168.36$ & -- & -- & -- \\
Student & $\boldsymbol{0.16 \pm 0.01}$ & $0.27 \pm 0.05$ & \underline{$0.17 \pm 0.02$} & $41.09 \pm 3.43$ & $0.23 \pm 0.03$ & $113.73 \pm 15.50$ & -- & $19.52 \pm 9.48$ \\
Wine & \underline{$15.06 \pm 7.76$} & $48.69 \pm 36.00$ & $\boldsymbol{9.36 \pm 6.36}$ & $49434.96 \pm 4044.88$ & $335.38 \pm 197.14$ & -- & -- & $17320.45 \pm 5949.11$ \\
\bottomrule
\end{tabular}%
}
\footnotesize{
LSR = LicketySPLIT over a binarization; SNIP = LicketySNIP (fully continuous); SNIP+GR = LicketySNIP with its greedy subroutine restricted to a binarization. 
}
\end{table*}

\begin{table*}[!t]
\centering
\scriptsize
\setlength{\tabcolsep}{3pt}
\caption{
$\lambda=0.02, \varepsilon=0.03, d=5$. Peak memory with at most 100 thresholds per continuous feature.
Peak RSS is reported in MB as mean $\pm$ standard deviation across 3 bootstraps.
Entries marked ``--'' correspond to partial or unfinished runs under a 100-hour timeout and 128GB memory limit.
The lowest completed memory usage in each row is bolded and the second-lowest is underlined. Our methods are the first four shown in the table, each combining our continuous-feature Rashomon set algorithm with a different proxy algorithm.
}
\label{tab:memory-100-8}
\resizebox{\textwidth}{!}{%
\begin{tabular}{lrrrrrrrr}
\toprule
Dataset
& LSR
& SNIP
& SNIP+GR
& Optimal
& PRAXIS
& SORTD
& TreeFARMS
& RESPLIT \\
\midrule
Abalone & $431.0 \pm 249.3$ & $\boldsymbol{363.8 \pm 155.5}$ & \underline{$370.5 \pm 164.6$} & $12823.7 \pm 2559.4$ & $1106.4 \pm 1276.0$ & -- & -- & $8218.2 \pm 3190.3$ \\
Adult & $\boldsymbol{344.3 \pm 11.5}$ & \underline{$350.8 \pm 12.6$} & $359.5 \pm 12.7$ & $12485.5 \pm 478.1$ & $415.7 \pm 18.3$ & $36588.5 \pm 524.1$ & -- & $11378.9 \pm 151.7$ \\
Aging & $250.8 \pm 1.3$ & $249.3 \pm 0.4$ & $250.1 \pm 0.6$ & $254.5 \pm 0.4$ & \underline{$222.8 \pm 0.1$} & $\boldsymbol{195.6 \pm 0.8}$ & $5273.8 \pm 1116.2$ & $234.8 \pm 0.3$ \\
Bank & \underline{$287.3 \pm 1.0$} & $\boldsymbol{272.4 \pm 1.6}$ & $288.0 \pm 1.1$ & $5910.5 \pm 99.7$ & $288.1 \pm 0.3$ & -- & -- & $13410.7 \pm 77.4$ \\
Bike & $\boldsymbol{297.3 \pm 8.6}$ & \underline{$299.9 \pm 6.1$} & $300.5 \pm 7.2$ & $3045.2 \pm 89.9$ & $317.4 \pm 11.9$ & $7148.5 \pm 220.0$ & -- & $2078.5 \pm 66.3$ \\
Churn & $\boldsymbol{282.5 \pm 3.8}$ & \underline{$284.2 \pm 5.1$} & $285.5 \pm 4.7$ & $73047.4 \pm 10055.8$ & $571.3 \pm 45.5$ & -- & -- & $16742.8 \pm 477.8$ \\
Compas & $256.6 \pm 1.8$ & \underline{$253.3 \pm 2.1$} & $256.4 \pm 0.8$ & $277.4 \pm 1.3$ & $\boldsymbol{228.6 \pm 1.9}$ & $298.1 \pm 6.6$ & -- & $317.0 \pm 2.5$ \\
Coupon & $273.4 \pm 34.5$ & $252.3 \pm 2.3$ & $254.3 \pm 1.5$ & \underline{$251.7 \pm 0.7$} & $1746.7 \pm 2508.9$ & $\boldsymbol{199.5 \pm 2.0}$ & -- & -- \\
Credit & $341.4 \pm 17.5$ & $\boldsymbol{335.2 \pm 1.1}$ & \underline{$339.6 \pm 1.1$} & -- & $765.6 \pm 13.5$ & -- & -- & $89769.3 \pm 1303.4$ \\
Diabetes & $380.6 \pm 0.3$ & \underline{$353.8 \pm 0.3$} & $381.1 \pm 1.5$ & $807.9 \pm 54.1$ & $\boldsymbol{323.7 \pm 0.6}$ & $5819.4 \pm 157.4$ & -- & $1432.0 \pm 29.1$ \\
Diamonds & \underline{$293.5 \pm 1.6$} & $\boldsymbol{287.7 \pm 1.6}$ & $296.8 \pm 1.6$ & $2611.4 \pm 215.3$ & $357.9 \pm 5.6$ & -- & -- & $10629.9 \pm 379.5$ \\
Helena & $425.0 \pm 3.3$ & $\boldsymbol{404.1 \pm 2.5}$ & \underline{$422.7 \pm 3.8$} & -- & $1491.2 \pm 30.8$ & -- & -- & -- \\
Heloc & $\boldsymbol{525.0 \pm 216.7}$ & \underline{$541.7 \pm 157.6$} & $558.8 \pm 179.3$ & -- & $1751.9 \pm 868.2$ & -- & -- & $9313.2 \pm 1657.7$ \\
Jasmine & $\boldsymbol{466.5 \pm 136.2}$ & \underline{$500.2 \pm 159.6$} & $513.0 \pm 159.4$ & $127143.2 \pm 3196.2$ & $864.6 \pm 367.2$ & -- & -- & $5204.7 \pm 213.2$ \\
Pol & $\boldsymbol{1308.5 \pm 507.5}$ & $1420.4 \pm 480.4$ & \underline{$1341.1 \pm 410.0$} & $27732.1 \pm 2957.8$ & $3135.5 \pm 1062.4$ & -- & -- & $8678.5 \pm 166.9$ \\
Rl & $596.2 \pm 152.8$ & $\boldsymbol{557.2 \pm 88.0}$ & \underline{$568.9 \pm 69.2$} & $17706.2 \pm 2953.7$ & $1812.7 \pm 520.3$ & $26898.2 \pm 916.4$ & -- & $5012.2 \pm 506.3$ \\
Shopping & $327.9 \pm 4.6$ & $\boldsymbol{297.6 \pm 1.3}$ & \underline{$300.6 \pm 1.7$} & $41825.9 \pm 9248.8$ & $466.3 \pm 11.4$ & -- & -- & $7043.9 \pm 162.2$ \\
Spambase & $\boldsymbol{2165.7 \pm 660.2}$ & \underline{$3169.9 \pm 692.4$} & $3181.9 \pm 546.0$ & -- & $33904.9 \pm 7972.6$ & -- & -- & -- \\
Student & $252.9 \pm 0.7$ & \underline{$252.5 \pm 1.1$} & $254.3 \pm 1.0$ & $871.2 \pm 22.9$ & $\boldsymbol{225.8 \pm 0.8}$ & $488.7 \pm 20.7$ & -- & $286.7 \pm 4.9$ \\
Wine & $335.7 \pm 41.2$ & $\boldsymbol{299.6 \pm 32.7}$ & \underline{$301.6 \pm 32.3$} & $60847.9 \pm 3302.1$ & $558.9 \pm 185.0$ & -- & -- & $12178.3 \pm 1361.2$ \\
\bottomrule
\end{tabular}%
}
\footnotesize{
LSR = LicketySPLIT over a binarization; SNIP = LicketySNIP (fully continuous); SNIP+GR = LicketySNIP with its greedy subroutine restricted to a binarization. 
}
\end{table*}

\begin{table*}[!t]
\centering
\scriptsize
\setlength{\tabcolsep}{3pt}
\caption{
$\lambda=0.02, \varepsilon=0.03, d=5$. Recall with at most 100 thresholds per continuous feature for our four methods.
Mean $\pm$ standard deviation is reported across 3 bootstraps.
Recall is measured relative to the best method that finished: we use the minimum objective found by any method as the guessed Rashomon bound, count the trees each method found within that bound, and divide by the largest such count.
Entries marked ``--'' correspond to partial or unfinished runs.
The best recall in each row is bolded and the second-best distinct recall is underlined. Our methods are the four shown in the table, each combining our continuous-feature Rashomon set algorithm with a different proxy algorithm.
}
\label{tab:recall-100-ours}
\resizebox{\textwidth}{!}{%
\begin{tabular}{lrrrr}
\toprule
Dataset
& LSR
& SNIP
& SNIP+GR
& Optimal \\
\midrule
Abalone & $\boldsymbol{1.000 \pm 0.000}$ & $\boldsymbol{1.000 \pm 0.000}$ & $\boldsymbol{1.000 \pm 0.000}$ & $\boldsymbol{1.000 \pm 0.000}$ \\
Adult & $\boldsymbol{1.000 \pm 0.000}$ & $\boldsymbol{1.000 \pm 0.000}$ & $\boldsymbol{1.000 \pm 0.000}$ & $\boldsymbol{1.000 \pm 0.000}$ \\
Aging & $\boldsymbol{1.000 \pm 0.000}$ & $\boldsymbol{1.000 \pm 0.000}$ & $\boldsymbol{1.000 \pm 0.000}$ & $\boldsymbol{1.000 \pm 0.000}$ \\
Bank & $\boldsymbol{1.000 \pm 0.000}$ & $\boldsymbol{1.000 \pm 0.000}$ & $\boldsymbol{1.000 \pm 0.000}$ & $\boldsymbol{1.000 \pm 0.000}$ \\
Bike & $\boldsymbol{1.000 \pm 0.000}$ & $\boldsymbol{1.000 \pm 0.000}$ & $\boldsymbol{1.000 \pm 0.000}$ & $\boldsymbol{1.000 \pm 0.000}$ \\
Churn & $\boldsymbol{1.000 \pm 0.000}$ & $\boldsymbol{1.000 \pm 0.000}$ & $\boldsymbol{1.000 \pm 0.000}$ & $\boldsymbol{1.000 \pm 0.000}$ \\
Compas & $\boldsymbol{1.000 \pm 0.000}$ & $\boldsymbol{1.000 \pm 0.000}$ & $\boldsymbol{1.000 \pm 0.000}$ & $\boldsymbol{1.000 \pm 0.000}$ \\
Coupon & \underline{$0.667 \pm 0.577$} & $\boldsymbol{1.000 \pm 0.000}$ & $\boldsymbol{1.000 \pm 0.000}$ & $\boldsymbol{1.000 \pm 0.000}$ \\
Credit & $\boldsymbol{1.000 \pm 0.000}$ & $\boldsymbol{1.000 \pm 0.000}$ & $\boldsymbol{1.000 \pm 0.000}$ & -- \\
Diabetes & $\boldsymbol{1.000 \pm 0.000}$ & $\boldsymbol{1.000 \pm 0.000}$ & $\boldsymbol{1.000 \pm 0.000}$ & $\boldsymbol{1.000 \pm 0.000}$ \\
Diamonds & $\boldsymbol{1.000 \pm 0.000}$ & $\boldsymbol{1.000 \pm 0.000}$ & $\boldsymbol{1.000 \pm 0.000}$ & $\boldsymbol{1.000 \pm 0.000}$ \\
Helena & $\boldsymbol{1.000 \pm 0.000}$ & $\boldsymbol{1.000 \pm 0.000}$ & $\boldsymbol{1.000 \pm 0.000}$ & -- \\
Heloc & $\boldsymbol{1.000 \pm 0.000}$ & $\boldsymbol{1.000 \pm 0.000}$ & $\boldsymbol{1.000 \pm 0.000}$ & -- \\
Jasmine & \underline{$0.931 \pm 0.120$} & $\boldsymbol{1.000 \pm 0.000}$ & $\boldsymbol{1.000 \pm 0.000}$ & $\boldsymbol{1.000 \pm 0.000}$ \\
Pol & \underline{$0.999 \pm 0.001$} & $\boldsymbol{1.000 \pm 0.000}$ & $\boldsymbol{1.000 \pm 0.000}$ & $\boldsymbol{1.000 \pm 0.000}$ \\
Rl & \underline{$0.995 \pm 0.009$} & $\boldsymbol{1.000 \pm 0.000}$ & $\boldsymbol{1.000 \pm 0.000}$ & $\boldsymbol{1.000 \pm 0.000}$ \\
Shopping & $\boldsymbol{1.000 \pm 0.000}$ & $\boldsymbol{1.000 \pm 0.000}$ & $\boldsymbol{1.000 \pm 0.000}$ & $\boldsymbol{1.000 \pm 0.000}$ \\
Spambase & $0.989 \pm 0.017$ & $\boldsymbol{0.996 \pm 0.006}$ & \underline{$0.990 \pm 0.009$} & -- \\
Student & $\boldsymbol{1.000 \pm 0.000}$ & $\boldsymbol{1.000 \pm 0.000}$ & $\boldsymbol{1.000 \pm 0.000}$ & $\boldsymbol{1.000 \pm 0.000}$ \\
Wine & $\boldsymbol{1.000 \pm 0.000}$ & $\boldsymbol{1.000 \pm 0.000}$ & $\boldsymbol{1.000 \pm 0.000}$ & $\boldsymbol{1.000 \pm 0.000}$ \\
\bottomrule
\end{tabular}%
}
\end{table*}

\begin{table*}[!t]
\centering
\scriptsize
\setlength{\tabcolsep}{2pt}
\caption{
$\lambda=0.01, \varepsilon=0.03, d=5$. Runtime for the exhaustive-threshold (fully continuous) setting. Time is reported in seconds as mean $\pm$ std across 3 bootstraps.
Best values are bolded and second-best values are underlined. Entries marked ``--'' correspond to partial or unfinished runs with a 100hr timeout and 128GB memory limit. Our methods are the four shown in the table, each combining our continuous-feature Rashomon set algorithm with a different proxy algorithm.
}
\label{tab:runtime-exhaustive-lam001}
\resizebox{\textwidth}{!}{%
\begin{tabular}{lrrrr}
\toprule
Dataset & LicketySPLIT over a binarization & LicketySNIP (fully continuous) & LicketySNIP (with greedy subroutine using a binarization) & Optimal \\
\midrule
Abalone & $\boldsymbol{14796.07 \pm 5622.74}$ & -- & \underline{$23979.29 \pm 9221.78$} & -- \\
Adult & $\boldsymbol{303.37 \pm 38.40}$ & $19017.89 \pm 587.66$ & \underline{$535.41 \pm 47.74$} & -- \\
Aging & $\boldsymbol{0.08 \pm 0.00}$ & $\boldsymbol{0.08 \pm 0.00}$ & $\boldsymbol{0.08 \pm 0.00}$ & \underline{$0.27 \pm 0.02$} \\
Bank & \underline{$1087.63 \pm 678.55$} & $23380.61 \pm 15301.02$ & $\boldsymbol{1024.59 \pm 701.07}$ & -- \\
Bike & $\boldsymbol{18.41 \pm 10.36}$ & $69.35 \pm 18.80$ & \underline{$20.24 \pm 6.84$} & $3151.88 \pm 141.49$ \\
Churn & $\boldsymbol{2162.29 \pm 2435.05}$ & $148043.59 \pm 125342.15$ & \underline{$6403.88 \pm 6989.95$} & -- \\
Compas & $\boldsymbol{2.95 \pm 0.76}$ & $4.91 \pm 0.68$ & \underline{$3.07 \pm 0.74$} & $12.14 \pm 0.67$ \\
Coupon & $1.08 \pm 1.08$ & $0.60 \pm 0.08$ & \underline{$0.34 \pm 0.10$} & $\boldsymbol{0.20 \pm 0.01}$ \\
Credit & $\boldsymbol{222.26 \pm 132.96}$ & $121677.23 \pm 42357.76$ & \underline{$291.35 \pm 109.58$} & -- \\
Diabetes & \underline{$13.67 \pm 0.76$} & $18.95 \pm 0.71$ & $\boldsymbol{8.06 \pm 0.21}$ & $3815.47 \pm 37.82$ \\
Diamonds & \underline{$126.94 \pm 3.90$} & $950.32 \pm 90.93$ & $\boldsymbol{119.30 \pm 2.11}$ & $49773.95 \pm 4976.55$ \\
Heloc & $\boldsymbol{4973.98 \pm 7001.58}$ & -- & -- & -- \\
Jasmine & $\boldsymbol{2791.13 \pm 2212.80}$ & $24647.36 \pm 24775.85$ & \underline{$6131.83 \pm 7256.69$} & -- \\
Pol & $\boldsymbol{493.97 \pm 120.44}$ & $4160.87 \pm 553.92$ & \underline{$720.60 \pm 160.16$} & $75883.34 \pm 5102.78$ \\
RL & $\boldsymbol{3373.67 \pm 1552.18}$ & $32741.28 \pm 7847.16$ & \underline{$6501.85 \pm 2522.64$} & -- \\
Shopping & \underline{$2088.32 \pm 245.03$} & $67693.59 \pm 9408.62$ & $\boldsymbol{1405.68 \pm 132.88}$ & -- \\
Student & \underline{$0.65 \pm 0.37$} & $2.40 \pm 2.13$ & $\boldsymbol{0.59 \pm 0.19}$ & $65.68 \pm 5.63$ \\
Wine & $\boldsymbol{344.17 \pm 376.88}$ & $3571.75 \pm 4037.83$ & \underline{$371.28 \pm 419.06$} & -- \\
\bottomrule
\end{tabular}%
}
\end{table*}

\begin{table*}[!t]
\centering
\scriptsize
\setlength{\tabcolsep}{2pt}
\caption{
$\lambda=0.01, \varepsilon=0.03, d=5$. Peak memory for the exhaustive-threshold (fully continuous) setting. Peak RSS is reported in MB as mean $\pm$ std across 3 bootstraps.
Best values are bolded and second-best values are underlined. Entries marked ``--'' correspond to partial or unfinished runs with a 100hr timeout and 128GB memory limit. Our methods are the four shown in the table, each combining our continuous-feature Rashomon set algorithm with a different proxy algorithm.
}
\label{tab:memory-exhaustive-lam001}
\resizebox{\textwidth}{!}{%
\begin{tabular}{lrrrr}
\toprule
Dataset & LicketySPLIT over a binarization & LicketySNIP (fully continuous) & LicketySNIP (with greedy subroutine using a binarization) & Optimal \\
\midrule
Abalone & $\boldsymbol{84966.9 \pm 30923.3}$ & -- & \underline{$97009.6 \pm 36423.7$} & -- \\
Adult & $\boldsymbol{886.8 \pm 30.2}$ & $1171.7 \pm 60.6$ & \underline{$1127.4 \pm 24.3$} & -- \\
Aging & \underline{$250.0 \pm 0.9$} & $\boldsymbol{247.6 \pm 1.0}$ & $250.6 \pm 0.2$ & $255.6 \pm 0.3$ \\
Bank & $1449.3 \pm 752.0$ & \underline{$1425.5 \pm 764.1$} & $\boldsymbol{1291.5 \pm 669.2}$ & -- \\
Bike & \underline{$353.3 \pm 53.8$} & $356.2 \pm 27.6$ & $\boldsymbol{351.1 \pm 23.7}$ & $5698.7 \pm 191.7$ \\
Churn & $\boldsymbol{16918.3 \pm 18404.5}$ & \underline{$34152.9 \pm 33922.0$} & $37057.9 \pm 38343.5$ & -- \\
Compas & $288.1 \pm 10.0$ & $\boldsymbol{280.2 \pm 6.1}$ & \underline{$285.5 \pm 9.1$} & $310.5 \pm 4.3$ \\
Coupon & $354.5 \pm 104.2$ & $303.3 \pm 12.6$ & \underline{$285.1 \pm 12.2$} & $\boldsymbol{252.4 \pm 0.5}$ \\
Credit & $\boldsymbol{1383.6 \pm 90.9}$ & $1475.5 \pm 10.3$ & \underline{$1470.6 \pm 11.8$} & -- \\
Diabetes & $380.8 \pm 1.0$ & $\boldsymbol{353.6 \pm 0.3}$ & \underline{$380.6 \pm 1.4$} & $1268.5 \pm 6.4$ \\
Diamonds & $455.3 \pm 16.3$ & \underline{$425.3 \pm 7.9$} & $\boldsymbol{423.1 \pm 7.5}$ & $15013.6 \pm 274.2$ \\
Heloc & $\boldsymbol{44727.9 \pm 60429.1}$ & -- & -- & -- \\
Jasmine & $\boldsymbol{29019.5 \pm 24304.9}$ & \underline{$30157.7 \pm 34382.8$} & $32932.1 \pm 35373.9$ & -- \\
Pol & $\boldsymbol{2848.3 \pm 579.4}$ & $3594.2 \pm 627.6$ & \underline{$3279.9 \pm 432.7$} & $71209.5 \pm 3690.3$ \\
RL & $\boldsymbol{34691.5 \pm 14586.2}$ & \underline{$44668.7 \pm 16325.0$} & $45394.2 \pm 16727.7$ & -- \\
Shopping & $5700.2 \pm 473.7$ & \underline{$3668.6 \pm 478.1$} & $\boldsymbol{3470.2 \pm 311.8}$ & -- \\
Student & \underline{$272.9 \pm 14.9$} & $279.4 \pm 26.8$ & $\boldsymbol{269.4 \pm 6.9}$ & $947.2 \pm 4.0$ \\
Wine & \underline{$1996.1 \pm 1981.0$} & $\boldsymbol{1940.5 \pm 2026.2}$ & $2030.6 \pm 2157.7$ & -- \\
\bottomrule
\end{tabular}%
}
\end{table*}

\begin{table*}[!t]
\centering
\scriptsize
\setlength{\tabcolsep}{2pt}
\caption{
$\lambda=0.01, \varepsilon=0.03, d=5$. Recall for the exhaustive-threshold (fully continuous) setting, reported as mean $\pm$ std across 3 bootstraps. Recall is measured relative to the best method that finished: we take the minimum objective found by any completed method, count each method's trees within the resulting Rashomon bound, and divide by the largest such count.
Best values are bolded and second-best values are underlined. Entries marked ``--'' correspond to partial or unfinished runs with a 100hr timeout and 128GB memory limit. Our methods are the four shown in the table, each combining our continuous-feature Rashomon set algorithm with a different proxy algorithm.
}
\label{tab:recall-exhaustive-lam001}
\resizebox{\textwidth}{!}{%
\begin{tabular}{lrrrr}
\toprule
Dataset & LicketySPLIT over a binarization & LicketySNIP (fully continuous) & LicketySNIP (with greedy subroutine using a binarization) & Optimal \\
\midrule
Abalone & \underline{$0.999 \pm 0.002$} & -- & $\boldsymbol{1.000 \pm 0.000}$ & -- \\
Adult & $\boldsymbol{1.000 \pm 0.000}$ & $\boldsymbol{1.000 \pm 0.000}$ & $\boldsymbol{1.000 \pm 0.000}$ & -- \\
Aging & $\boldsymbol{1.000 \pm 0.000}$ & $\boldsymbol{1.000 \pm 0.000}$ & $\boldsymbol{1.000 \pm 0.000}$ & $\boldsymbol{1.000 \pm 0.000}$ \\
Bank & $\boldsymbol{1.000 \pm 0.000}$ & $\boldsymbol{1.000 \pm 0.000}$ & $\boldsymbol{1.000 \pm 0.000}$ & -- \\
Bike & $\boldsymbol{1.000 \pm 0.000}$ & \underline{$0.994 \pm 0.010$} & \underline{$0.994 \pm 0.010$} & $\boldsymbol{1.000 \pm 0.000}$ \\
Churn & \underline{$0.967 \pm 0.055$} & $\boldsymbol{1.000 \pm 0.000}$ & $\boldsymbol{1.000 \pm 0.000}$ & -- \\
Compas & $\boldsymbol{1.000 \pm 0.000}$ & $\boldsymbol{1.000 \pm 0.000}$ & $\boldsymbol{1.000 \pm 0.000}$ & $\boldsymbol{1.000 \pm 0.000}$ \\
Coupon & \underline{$0.999 \pm 0.001$} & $\boldsymbol{1.000 \pm 0.000}$ & $\boldsymbol{1.000 \pm 0.000}$ & $\boldsymbol{1.000 \pm 0.000}$ \\
Credit & $\boldsymbol{1.000 \pm 0.000}$ & $\boldsymbol{1.000 \pm 0.000}$ & $\boldsymbol{1.000 \pm 0.000}$ & -- \\
Diabetes & $\boldsymbol{1.000 \pm 0.000}$ & $\boldsymbol{1.000 \pm 0.000}$ & $\boldsymbol{1.000 \pm 0.000}$ & $\boldsymbol{1.000 \pm 0.000}$ \\
Diamonds & $\boldsymbol{1.000 \pm 0.000}$ & $\boldsymbol{1.000 \pm 0.000}$ & $\boldsymbol{1.000 \pm 0.000}$ & $\boldsymbol{1.000 \pm 0.000}$ \\
Heloc & $\boldsymbol{1.000 \pm 0.000}$ & -- & -- & -- \\
Jasmine & $0.898 \pm 0.176$ & $\boldsymbol{1.000 \pm 0.000}$ & \underline{$0.981 \pm 0.032$} & -- \\
Pol & $\boldsymbol{1.000 \pm 0.000}$ & $\boldsymbol{1.000 \pm 0.000}$ & $\boldsymbol{1.000 \pm 0.000}$ & $\boldsymbol{1.000 \pm 0.000}$ \\
RL & $0.911 \pm 0.114$ & $\boldsymbol{1.000 \pm 0.000}$ & \underline{$0.988 \pm 0.019$} & -- \\
Shopping & $\boldsymbol{1.000 \pm 0.000}$ & $\boldsymbol{1.000 \pm 0.000}$ & $\boldsymbol{1.000 \pm 0.000}$ & -- \\
Student & $0.928 \pm 0.112$ & $\boldsymbol{1.000 \pm 0.000}$ & \underline{$0.989 \pm 0.018$} & $\boldsymbol{1.000 \pm 0.000}$ \\
Wine & \underline{$0.998 \pm 0.004$} & $\boldsymbol{1.000 \pm 0.000}$ & $\boldsymbol{1.000 \pm 0.000}$ & -- \\
\bottomrule
\end{tabular}%
}
\end{table*}

\begin{table*}[!t]
\centering
\scriptsize
\setlength{\tabcolsep}{5pt}
\caption{
$\lambda=0.005, \varepsilon=0.03, d=5$. Runtime for the exhaustive threshold (fully continuous) setting. Time is reported in seconds as mean $\pm$ std across 3 bootstraps. Entries marked ``--'' correspond to partial or unfinished runs with a 100hr timeout and 128GB memory limit. The fastest completed method is bolded and the second-fastest distinct method is underlined. Our methods are the four shown in the table, each combining our continuous-feature Rashomon set algorithm with a different proxy algorithm.
}
\label{tab:runtime-exhaustive-lam0005-four-methods}
\resizebox{\textwidth}{!}{%
\begin{tabular}{lrrrr}
\toprule
Dataset
& LicketySPLIT over a binarization
& LicketySNIP (fully continuous)
& LicketySNIP (with greedy subroutine using a binarization)
& Optimal \\
\midrule
Adult & $\boldsymbol{7051.04 \pm 2359.88}$ & -- & \underline{$28856.32 \pm 10451.01$} & -- \\
Aging & \underline{$0.14 \pm 0.01$} & \underline{$0.14 \pm 0.01$} & $\boldsymbol{0.12 \pm 0.00}$ & $0.31 \pm 0.02$ \\
Bank & $\boldsymbol{31700.27 \pm 17379.86}$ & -- & \underline{$65726.91 \pm 1897.62$} & -- \\
Bike & $\boldsymbol{147.49 \pm 82.47}$ & $569.73 \pm 224.77$ & \underline{$191.26 \pm 77.17$} & $5157.56 \pm 397.29$ \\
Compas & $\boldsymbol{9.32 \pm 3.01}$ & $17.19 \pm 4.54$ & \underline{$13.72 \pm 4.76$} & $19.57 \pm 2.33$ \\
Coupon & $1.13 \pm 1.20$ & $0.59 \pm 0.09$ & \underline{$0.38 \pm 0.11$} & $\boldsymbol{0.21 \pm 0.01}$ \\
Diabetes & \underline{$24.49 \pm 1.07$} & $40.79 \pm 1.69$ & $\boldsymbol{16.71 \pm 0.57}$ & $11244.89 \pm 160.80$ \\
Diamonds & $\boldsymbol{969.94 \pm 37.14}$ & $7918.95 \pm 385.47$ & \underline{$1331.94 \pm 94.96$} & $184770.98 \pm 8532.51$ \\
Pol & $\boldsymbol{152.20 \pm 29.16}$ & $1598.81 \pm 253.20$ & \underline{$218.53 \pm 60.63$} & $98966.66 \pm 2495.21$ \\
Student & $\boldsymbol{5.57 \pm 4.24}$ & \underline{$7.69 \pm 5.23$} & $15.77 \pm 18.00$ & $78.95 \pm 3.47$ \\
\bottomrule
\end{tabular}%
}
\end{table*}

\begin{table*}[!t]
\centering
\scriptsize
\setlength{\tabcolsep}{5pt}
\caption{
$\lambda=0.005, \varepsilon=0.03, d=5$. Peak memory for the exhaustive threshold (fully continuous) setting. Peak RSS is reported in MB as mean $\pm$ std across 3 bootstraps. Entries marked ``--'' correspond to partial or unfinished runs with a 100hr timeout and 128GB memory limit. The lowest completed memory usage is bolded and the second-lowest distinct value is underlined. Our methods are the four shown in the table, each combining our continuous-feature Rashomon set algorithm with a different proxy algorithm.
}
\label{tab:memory-exhaustive-lam0005-four-methods}
\resizebox{\textwidth}{!}{%
\begin{tabular}{lrrrr}
\toprule
Dataset
& LicketySPLIT over a binarization
& LicketySNIP (fully continuous)
& LicketySNIP (with greedy subroutine using a binarization)
& Optimal \\
\midrule
Adult & $\boldsymbol{10311.2 \pm 3203.0}$ & -- & \underline{$20715.1 \pm 7059.0$} & -- \\
Aging & $253.2 \pm 2.1$ & $\boldsymbol{248.1 \pm 0.2}$ & \underline{$251.9 \pm 1.2$} & $254.3 \pm 1.3$ \\
Bank & $\boldsymbol{29022.6 \pm 15684.3}$ & -- & \underline{$53701.6 \pm 1801.3$} & -- \\
Bike & $\boldsymbol{1167.8 \pm 471.9}$ & $1377.4 \pm 611.2$ & \underline{$1322.2 \pm 585.2$} & $7719.4 \pm 471.1$ \\
Compas & $474.5 \pm 106.4$ & $\boldsymbol{466.4 \pm 107.5}$ & $486.8 \pm 113.4$ & \underline{$469.8 \pm 96.9$} \\
Coupon & $355.3 \pm 103.1$ & $304.3 \pm 10.2$ & \underline{$285.6 \pm 12.0$} & $\boldsymbol{251.7 \pm 0.5}$ \\
Diabetes & \underline{$379.7 \pm 1.0$} & $\boldsymbol{352.0 \pm 2.2}$ & $381.3 \pm 0.6$ & $2678.1 \pm 24.8$ \\
Diamonds & $\boldsymbol{1314.9 \pm 56.3}$ & $1738.6 \pm 189.6$ & \underline{$1644.0 \pm 104.9$} & $50519.3 \pm 1195.6$ \\
Pol & $\boldsymbol{1047.3 \pm 78.6}$ & $1273.7 \pm 166.8$ & \underline{$1168.6 \pm 207.5$} & $93571.6 \pm 1205.8$ \\
Student & \underline{$678.6 \pm 484.0$} & $\boldsymbol{424.7 \pm 198.8}$ & $1112.4 \pm 1167.2$ & $963.6 \pm 13.4$ \\
\bottomrule
\end{tabular}%
}
\end{table*}

\begin{table*}[!t]
\centering
\scriptsize
\setlength{\tabcolsep}{5pt}
\caption{
$\lambda=0.005, \varepsilon=0.03, d=5$. Recall for the exhaustive threshold (fully continuous) setting. Mean $\pm$ std is reported across 3 bootstraps. Recall is measured relative to the best method that finished. Entries marked ``--'' correspond to partial or unfinished runs. The highest completed recall is bolded and the second-highest distinct value is underlined. Our methods are the four shown in the table, each combining our continuous-feature Rashomon set algorithm with a different proxy algorithm.
}
\label{tab:recall-exhaustive-lam0005-four-methods}
\resizebox{\textwidth}{!}{%
\begin{tabular}{lrrrr}
\toprule
Dataset
& LicketySPLIT over a binarization
& LicketySNIP (fully continuous)
& LicketySNIP (with greedy subroutine using a binarization)
& Optimal \\
\midrule
Adult & $\boldsymbol{1.000 \pm 0.000}$ & -- & $\boldsymbol{1.000 \pm 0.000}$ & -- \\
Aging & $\boldsymbol{1.000 \pm 0.000}$ & $\boldsymbol{1.000 \pm 0.000}$ & $\boldsymbol{1.000 \pm 0.000}$ & $\boldsymbol{1.000 \pm 0.000}$ \\
Bank & $\boldsymbol{1.000 \pm 0.000}$ & -- & $\boldsymbol{1.000 \pm 0.000}$ & -- \\
Bike & \underline{$0.999 \pm 0.002$} & \underline{$0.999 \pm 0.002$} & \underline{$0.999 \pm 0.002$} & $\boldsymbol{1.000 \pm 0.000}$ \\
Compas & $\boldsymbol{1.000 \pm 0.001}$ & $\boldsymbol{1.000 \pm 0.000}$ & $\boldsymbol{1.000 \pm 0.000}$ & $\boldsymbol{1.000 \pm 0.000}$ \\
Coupon & \underline{$0.999 \pm 0.001$} & $\boldsymbol{1.000 \pm 0.000}$ & $\boldsymbol{1.000 \pm 0.000}$ & $\boldsymbol{1.000 \pm 0.000}$ \\
Diabetes & $\boldsymbol{1.000 \pm 0.000}$ & $\boldsymbol{1.000 \pm 0.000}$ & $\boldsymbol{1.000 \pm 0.000}$ & $\boldsymbol{1.000 \pm 0.000}$ \\
Diamonds & $\boldsymbol{1.000 \pm 0.000}$ & $\boldsymbol{1.000 \pm 0.000}$ & $\boldsymbol{1.000 \pm 0.000}$ & $\boldsymbol{1.000 \pm 0.000}$ \\
Pol & \underline{$0.949 \pm 0.088$} & $0.945 \pm 0.096$ & $0.945 \pm 0.096$ & $\boldsymbol{1.000 \pm 0.000}$ \\
Student & $0.880 \pm 0.206$ & \underline{$0.995 \pm 0.009$} & \underline{$0.995 \pm 0.009$} & $\boldsymbol{1.000 \pm 0.000}$ \\
\bottomrule
\end{tabular}%
}
\end{table*}

\begin{table*}[!t]
\centering
\small
\setlength{\tabcolsep}{6pt}
\caption{
Runtime comparison of LicketySNIP using sorted lists of active sample indices
versus binary threshold columns for its fully continuous greedy subroutine.
The italicized ArborEnum+SNIP+GR column is shown only as a reference and is not part
of the comparison; accordingly, it does not affect the bolding.
$\lambda=0.02$, $\varepsilon=0.03$, $d=5$, and exhaustive thresholds.
Time is reported in seconds as mean $\pm$ standard deviation across 3 bootstraps.
}
\label{tab:sorted-vs-bitvector-greedy}
\begin{tabular}{lrr@{\hspace{12pt}}|@{\hspace{12pt}}r}
\toprule
Dataset
& Sorted lists
& Binary columns
& \textit{ArborEnum+SNIP+GR (reference)} \\
\midrule
Abalone
& $\mathbf{2772.79 \pm 4095.38}$
& $2810.60 \pm 4187.00$
& \textit{$205.14 \pm 311.57$} \\

Adult
& $\mathbf{5246.38 \pm 159.04}$
& $12131.75 \pm 371.85$
& \textit{$292.12 \pm 16.25$} \\

Aging
& $\mathbf{0.08 \pm 0.00}$
& $\mathbf{0.08 \pm 0.00}$
& \textit{$0.08 \pm 0.00$} \\

Bank
& $\mathbf{91.67 \pm 1.07}$
& $212.52 \pm 5.41$
& \textit{$6.43 \pm 0.11$} \\

Bike
& $104.38 \pm 12.09$
& $\mathbf{27.05 \pm 4.96}$
& \textit{$7.95 \pm 1.91$} \\

Churn
& $815.17 \pm 832.06$
& $\mathbf{643.87 \pm 555.53}$
& \textit{$9.42 \pm 9.13$} \\

Compas
& $2.24 \pm 0.10$
& $\mathbf{0.50 \pm 0.19}$
& \textit{$0.35 \pm 0.12$} \\

Coupon
& $0.22 \pm 0.09$
& $\mathbf{0.15 \pm 0.04}$
& \textit{$0.12 \pm 0.04$} \\

Credit
& $\mathbf{7204.55 \pm 1665.56}$
& $24849.80 \pm 619.24$
& \textit{$65.52 \pm 22.19$} \\

Diabetes
& $101.47 \pm 26.68$
& $\mathbf{8.55 \pm 0.22}$
& \textit{$3.77 \pm 0.17$} \\

Diamonds
& $614.77 \pm 181.34$
& $\mathbf{313.35 \pm 23.79}$
& \textit{$39.10 \pm 3.25$} \\

Helena
& $\mathbf{19566.26 \pm 566.81}$
& $286987.90 \pm 12346.63$
& \textit{$185.50 \pm 11.11$} \\

Heloc
& $720.83 \pm 339.09$
& $\mathbf{273.53 \pm 133.41}$
& \textit{$34.89 \pm 26.12$} \\

Jasmine
& $501.59 \pm 419.02$
& $\mathbf{405.78 \pm 427.03}$
& \textit{$61.10 \pm 70.61$} \\

Pol
& $7438.25 \pm 3733.37$
& $\mathbf{2155.51 \pm 817.12}$
& \textit{$383.98 \pm 190.49$} \\

Rl
& $883.34 \pm 134.21$
& $\mathbf{666.06 \pm 74.75}$
& \textit{$117.10 \pm 33.85$} \\

Shopping
& $\mathbf{18270.56 \pm 2017.08}$
& $25547.42 \pm 2593.81$
& \textit{$489.25 \pm 57.80$} \\

Student
& $0.40 \pm 0.08$
& $\mathbf{0.26 \pm 0.05}$
& \textit{$0.17 \pm 0.02$} \\

Wine
& $199.71 \pm 167.05$
& $\mathbf{113.01 \pm 79.44}$
& \textit{$10.64 \pm 7.08$} \\
\bottomrule
\end{tabular}
\end{table*}

\begin{table*}[!t]
\centering
\small
\setlength{\tabcolsep}{6pt}
\caption{
Runtime comparison of LicketySNIP using sorted lists of active sample indices
versus binary threshold columns for its fully continuous greedy subroutine.
The italicized ArborEnum+SNIP+GR column is shown only as a reference and is not part
of the comparison; accordingly, it does not affect the bolding.
$\lambda=0.01$, $\varepsilon=0.03$, $d=5$, and exhaustive thresholds.
Time is reported in seconds as mean $\pm$ standard deviation across 3 bootstraps.
}
\label{tab:sorted-vs-bitvector-greedy-lam001}
\begin{tabular}{lrr@{\hspace{12pt}}|@{\hspace{12pt}}r}
\toprule
Dataset
& Sorted lists
& Binary columns
& \textit{ArborEnum+SNIP+GR (reference)} \\
\midrule
Adult
& $\mathbf{8757.94 \pm 580.08}$
& $19017.89 \pm 587.66$
& \textit{$535.41 \pm 47.74$} \\

Aging
& $\mathbf{0.08 \pm 0.00}$
& $\mathbf{0.08 \pm 0.00}$
& \textit{$0.08 \pm 0.00$} \\

Bank
& $\mathbf{15233.48 \pm 10927.13}$
& $23380.61 \pm 15301.02$
& \textit{$1024.59 \pm 701.07$} \\

Bike
& $198.65 \pm 39.26$
& $\mathbf{69.35 \pm 18.80}$
& \textit{$20.24 \pm 6.84$} \\

Churn
& $174079.20 \pm 119293.59$
& $\mathbf{148043.59 \pm 125342.15}$
& \textit{$6403.88 \pm 6989.95$} \\

Compas
& $10.33 \pm 0.86$
& $\mathbf{4.91 \pm 0.68}$
& \textit{$3.07 \pm 0.74$} \\

Coupon
& $\mathbf{0.51 \pm 0.08}$
& $0.60 \pm 0.08$
& \textit{$0.34 \pm 0.10$} \\

Credit
& $\mathbf{20343.32 \pm 790.15}$
& $121677.23 \pm 42357.76$
& \textit{$291.35 \pm 109.58$} \\

Diabetes
& $103.66 \pm 6.74$
& $\mathbf{18.95 \pm 0.71}$
& \textit{$8.06 \pm 0.21$} \\

Diamonds
& $1119.35 \pm 75.59$
& $\mathbf{950.32 \pm 90.93}$
& \textit{$119.30 \pm 2.11$} \\

Jasmine
& $50979.34 \pm 68968.40$
& $\mathbf{24647.36 \pm 24775.85}$
& \textit{$6131.83 \pm 7256.69$} \\

Pol
& $8162.68 \pm 818.02$
& $\mathbf{4160.87 \pm 553.92}$
& \textit{$720.60 \pm 160.16$} \\

Rl
& $34333.97 \pm 8102.61$
& $\mathbf{32741.28 \pm 7847.16}$
& \textit{$6501.85 \pm 2522.64$} \\

Shopping
& $\mathbf{41722.32 \pm 4729.10}$
& $67693.59 \pm 9408.62$
& \textit{$1405.68 \pm 132.88$} \\

Student
& $3.09 \pm 2.70$
& $\mathbf{2.40 \pm 2.13}$
& \textit{$0.59 \pm 0.19$} \\

Wine
& $5589.56 \pm 6239.08$
& $\mathbf{3571.75 \pm 4037.83}$
& \textit{$371.28 \pm 419.06$} \\
\bottomrule
\end{tabular}
\end{table*}

\begin{table*}[!t]
\centering
\scriptsize
\setlength{\tabcolsep}{4pt}
\caption{
$\lambda=0.005$, $\varepsilon=0.03$, and $d=5$.
Runtime of ArborEnum+LicketySNIP depending on when the greedy subroutine in LicketySNIP uses binary threshold columns or sorted lists.
The italicized ArborEnum+SNIP+GR column is shown only as a reference and is not part
of the comparison; accordingly, it does not affect the bolding or underlining.
Time is reported in seconds as mean $\pm$ standard deviation across
3 bootstraps. The best value among the first two columns in each row is bolded,
and the second-best is underlined.
}
\label{tab:licketysnip-sorted-lists-lam005}
\resizebox{\columnwidth}{!}{%
\begin{tabular}{lrr@{\hspace{12pt}}|@{\hspace{12pt}}r}
\toprule
Dataset
& Binary threshold columns
& Sorted lists
& \textit{ArborEnum+SNIP+GR (reference)} \\
\midrule
Aging
& \textbf{0.140 $\pm$ 0.010}
& \underline{0.153 $\pm$ 0.018}
& \textit{$0.120 \pm 0.000$} \\
Bike
& \textbf{569.730 $\pm$ 224.770}
& \underline{1086.120 $\pm$ 393.106}
& \textit{$191.260 \pm 77.170$} \\
Compas
& \textbf{17.190 $\pm$ 4.540}
& \underline{23.993 $\pm$ 4.518}
& \textit{$13.720 \pm 4.760$} \\
Coupon
& \underline{0.590 $\pm$ 0.090}
& \textbf{0.524 $\pm$ 0.090}
& \textit{$0.380 \pm 0.110$} \\
Diabetes
& \textbf{40.790 $\pm$ 1.690}
& \underline{238.117 $\pm$ 4.698}
& \textit{$16.710 \pm 0.570$} \\
Diamonds
& \textbf{7918.950 $\pm$ 385.470}
& \underline{8271.592 $\pm$ 493.634}
& \textit{$1331.940 \pm 94.960$} \\
Pol
& \textbf{1598.810 $\pm$ 253.200}
& \underline{3987.730 $\pm$ 392.975}
& \textit{$218.530 \pm 60.630$} \\
Student
& \textbf{7.690 $\pm$ 5.230}
& \underline{8.332 $\pm$ 5.647}
& \textit{$15.770 \pm 18.000$} \\
\bottomrule
\end{tabular}%
}
\end{table*}

\FloatBarrier
\subsection{Anytime Overhead}
\label{app:anytime-overhead}

Table \ref{tab:anytime-overhead} reports the runtime overhead of the anytime algorithm when proxy strength is held fixed. In some cases, the anytime variant is actually faster because it explores subproblems in a different order. Across datasets, the median overhead is only (2.7\%).

In all experiments, we evaluate the anytime algorithm without progressively strengthening the proxy because our approximate proxies already recover essentially all trees while running orders of magnitude faster than an optimal proxy. For example, when the approximation is (100$\times$) faster, approximately (99\%) of the full optimal runtime would be spent obtaining a certificate of optimality and recovering the small number of remaining trees. We therefore focus on the more practically consequential regime of aiming to converge to our approximations. Runtime overhead is also less informative in the optimal-proxy setting, since one could simply discard the approximate computation and rerun the optimal method from scratch, incurring only about a (1\%) overhead when the approximation is (100$\times$) faster.

\begin{table}[!t]
\centering
\scriptsize
\setlength{\tabcolsep}{4pt}
\caption{Runtime overhead of the anytime algorithm (ArborEnum+LicketySNIP) relative to non-anytime execution using the LicketySNIP proxy (ArborEnum+LicketySNIP) over continuous features. Each value is the mean $\pm$ standard deviation across three bootstraps of the anytime runtime divided by the non-anytime runtime. $\lambda=0.02$ and $\varepsilon_{\mathrm{mult}}=0.03$.}
\label{tab:anytime-overhead}
\begin{tabular}{lr}
\toprule
Dataset & Overhead Ratio \\
\midrule
Abalone  & $1.087 \pm 0.011$ \\
Adult    & $1.043 \pm 0.006$ \\
Aging    & $1.009 \pm 0.011$ \\
Bank     & $1.024 \pm 0.012$ \\
Bike     & $1.033 \pm 0.008$ \\
Churn    & $1.029 \pm 0.022$ \\
COMPAS   & $1.056 \pm 0.009$ \\
Coupon   & $0.988 \pm 0.011$ \\
Credit   & $0.860 \pm 0.005$ \\
Diabetes & $1.076 \pm 0.019$ \\
Diamonds & $0.998 \pm 0.008$ \\
HELOC    & $1.006 \pm 0.008$ \\
Jasmine  & $1.018 \pm 0.037$ \\
Pol      & $1.085 \pm 0.011$ \\
RL       & $1.069 \pm 0.014$ \\
Shopping & $1.099 \pm 0.004$ \\
Student  & $1.026 \pm 0.006$ \\
Wine     & $1.021 \pm 0.009$ \\
\midrule
Median   & $1.027$ \\
\bottomrule
\end{tabular}
\end{table}

\FloatBarrier

\subsection{Improving Recall with a Bigger Budget}
\label{app:bigger-budget}

Table \ref{tab:budget-extension} presents the same result as the main paper but for $\lambda=0.01$. We do not show results for $\lambda=0.02$ because this proxy choice always yielded perfect approximation quality (there is nothing to show).

\begin{table}[!t]
\centering
\scriptsize
\setlength{\tabcolsep}{2.5pt}
\caption{For every dataset on which ArborEnum+SNIP+GR does not achieve perfect recall (averaged across bootstraps), extending the root budget recovers the remaining trees with little additional runtime while remaining substantially faster than our optimal method when it completes. The timings are obtained by first solving with $\varepsilon_{\textrm{mult}}=0.03$ and then extending the subgraph. $\lambda=0.01$.}
\label{tab:budget-extension}
\begin{tabular}{lrrrr}
\toprule
Dataset
& Recall
& ArborEnum+SNIP+GR
& Expanded
& ArborEnum+OPT \\
\midrule
Bike
& $.994 \!\to\! 1.000$
& $20\,\mathrm{s}$
& $22\,\mathrm{s}\ (\varepsilon_{\textrm{mult}}=.0325)$
& $3152\,\mathrm{s}$ \\
Jasmine
& $.980 \!\to\! 1.000$
& $6132\,\mathrm{s}$
& $10998\,\mathrm{s}\ (\varepsilon_{\textrm{mult}}=.0375)$
& -- \\
RL
& $.985 \!\to\! 1.000$
& $6502\,\mathrm{s}$
& $16715\,\mathrm{s}\ (\varepsilon_{\textrm{mult}}=.0375)$
& -- \\
\bottomrule
\end{tabular}
\end{table}

\FloatBarrier

\subsection{Budget-Independent Subgraphs}
\label{app:budget-independent-results}

\begin{table*}[!t]
\centering
\scriptsize
\setlength{\tabcolsep}{5pt}
\caption{Effect of budget-independent subgraph caching for
$\lambda=0.005$ and $\varepsilon_{\mathrm{mult}}=0.015$.
Speedup is the runtime without storing pointers to previously constructed
subgraphs, as in PRAXIS \citep{heile2026}, divided by the runtime with
budget-independent subgraph caching. The increase in trees is the number of
trees recovered with budget-independent subgraph caching divided by the number
recovered without storing subgraph pointers.}
\label{tab:cache-lam0005-rm0015}
\resizebox{\textwidth}{!}{%
\begin{tabular}{lrrrr}
\toprule
Dataset
& Speedup by Caching
& $\times$ Increase in Trees by Caching
& OR Nodes with Budget-Independent Caching
& OR Nodes without Stored Subgraph Pointers, as in PRAXIS \\
\midrule
Adult    & 1.043  & 1.000 & 404       & 2,661 \\
Aging    & 0.989  & 1.000 & 1         & 1 \\
Bank     & 0.997  & 1.000 & 1,161     & 1,659 \\
Bike     & 1.032  & 1.000 & 80,817    & 424,071 \\
COMPAS   & 1.054  & 1.000 & 5,512     & 35,981 \\
Coupon   & 53.586 & 1.000 & 92,711    & 116,247,505 \\
Credit   & 0.991  & 1.000 & 3         & 3 \\
Diabetes & 1.003  & 1.000 & 1         & 1 \\
Diamonds & 0.998  & 1.000 & 173       & 349 \\
HELOC    & 1.021  & 1.001 & 1,103,134 & 8,545,465 \\
Pol      & 1.012  & 1.000 & 3,001     & 13,743 \\
Shopping & 1.000  & 1.000 & 1,037     & 1,037 \\
Spambase & 1.322  & 1.007 & 1,047,399 & 24,536,957 \\
Student  & 0.980  & 1.000 & 1,240     & 4,981 \\
Wine     & 1.006  & 1.000 & 562       & 1,177 \\
\bottomrule
\end{tabular}%
}
\end{table*}

\begin{table*}[!t]
\centering
\scriptsize
\setlength{\tabcolsep}{5pt}
\caption{Effect of budget-independent subgraph caching for
$\lambda=0.005$ and $\varepsilon_{\mathrm{mult}}=0.03$.
Speedup is the runtime without storing pointers to previously constructed
subgraphs, as in PRAXIS, divided by the runtime with
budget-independent subgraph caching. The increase in trees is the number of
trees recovered with budget-independent subgraph caching divided by the number
recovered without storing subgraph pointers.}
\label{tab:cache-lam0005-rm003}
\resizebox{\textwidth}{!}{%
\begin{tabular}{lrrrr}
\toprule
Dataset
& Speedup by Caching
& $\times$ Increase in Trees by Caching
& OR Nodes with Budget-Independent Caching
& OR Nodes without Stored Subgraph Pointers, as in PRAXIS \\
\midrule
Adult    & 1.098  & 1.000 & 278,933 & 5,144,219 \\
Aging    & 0.965  & 1.000 & 71      & 71 \\
Bank     & 1.014  & 1.000 & 41,390  & 210,399 \\
Bike     & 1.130  & 1.000 & 814,521 & 5,743,543 \\
COMPAS   & 1.923  & 1.000 & 133,985 & 3,775,339 \\
Coupon   & 74.979 & 1.000 & 150,620 & 301,679,963 \\
Diabetes & 1.004  & 1.000 & 1       & 1 \\
Diamonds & 1.007  & 1.000 & 1,190   & 4,249 \\
Pol      & 1.011  & 1.000 & 13,090  & 67,771 \\
Shopping & 1.016  & 1.000 & 22,672  & 118,759 \\
Student  & 1.044  & 1.001 & 7,592   & 41,319 \\
\bottomrule
\end{tabular}%
}
\end{table*}

\begin{table*}[!t]
\centering
\scriptsize
\setlength{\tabcolsep}{5pt}
\caption{Effect of budget-independent subgraph caching for
$\lambda=0.005$ and $\varepsilon_{\mathrm{mult}}=0.05$.
Speedup is the runtime without storing pointers to previously constructed
subgraphs, as in PRAXIS, divided by the runtime with
budget-independent subgraph caching. The increase in trees is the number of
trees recovered with budget-independent subgraph caching divided by the number
recovered without storing subgraph pointers.}
\label{tab:cache-lam0005-rm005}
\resizebox{\textwidth}{!}{%
\begin{tabular}{lrrrr}
\toprule
Dataset
& Speedup by Caching
& $\times$ Increase in Trees by Caching
& OR Nodes with Budget-Independent Caching
& OR Nodes without Stored Subgraph Pointers, as in \citet{heile2026} \\
\midrule
Aging    & 0.948  & 1.000 & 115       & 151 \\
Bike     & 1.374  & 1.000 & 5,951,647 & 68,214,333 \\
COMPAS   & 11.769 & 1.000 & 1,742,674 & 412,465,135 \\
Coupon   & 73.996 & 1.000 & 150,620   & 301,679,963 \\
Diabetes & 1.007  & 1.000 & 371       & 371 \\
Diamonds & 1.052  & 1.000 & 290,858   & 1,044,781 \\
Pol      & 1.037  & 1.000 & 66,291    & 639,571 \\
Student  & 1.603  & 1.001 & 404,680   & 6,353,339 \\
\bottomrule
\end{tabular}%
}
\end{table*}

\begin{table*}[!t]
\centering
\scriptsize
\setlength{\tabcolsep}{5pt}
\caption{Effect of budget-independent subgraph caching for
$\lambda=0.005$ and $\varepsilon_{\mathrm{mult}}=0.5$.
Speedup is the runtime without storing pointers to previously constructed
subgraphs, as in PRAXIS, divided by the runtime with
budget-independent subgraph caching. The increase in trees is the number of
trees recovered with budget-independent subgraph caching divided by the number
recovered without storing subgraph pointers.}
\label{tab:cache-lam0005-rm05}
\resizebox{\textwidth}{!}{%
\begin{tabular}{lrrrr}
\toprule
Dataset
& Speedup by Caching
& $\times$ Increase in Trees by Caching
& OR Nodes with Budget-Independent Caching
& OR Nodes without Stored Subgraph Pointers, as in \citet{heile2026} \\
\midrule
Aging & 18.510 & 1.000 & 1,167,911 & 329,792,087 \\
\bottomrule
\end{tabular}%
}
\end{table*}

\begin{table*}[!t]
\centering
\scriptsize
\setlength{\tabcolsep}{5pt}
\caption{Effect of budget-independent subgraph caching for
$\lambda=0.01$ and $\varepsilon_{\mathrm{mult}}=0.03$.
Speedup is the runtime without storing pointers to previously constructed
subgraphs, as in PRAXIS, divided by the runtime with
budget-independent subgraph caching. The increase in trees is the number of
trees recovered with budget-independent subgraph caching divided by the number
recovered without storing subgraph pointers.}
\label{tab:cache-lam001-rm003}
\resizebox{\textwidth}{!}{%
\begin{tabular}{lrrrr}
\toprule
Dataset
& Speedup by Caching
& $\times$ Increase in Trees by Caching
& OR Nodes with Budget-Independent Caching
& OR Nodes without Stored Subgraph Pointers, as in \citet{heile2026} \\
\midrule
Abalone  & 1.131  & 1.000 & 285,698   & 2,159,377 \\
Adult    & 1.020  & 1.000 & 483       & 1,069 \\
Aging    & 0.985  & 1.000 & 1         & 1 \\
Bank     & 0.999  & 1.000 & 79        & 79 \\
Bike     & 0.991  & 1.000 & 677       & 2,007 \\
Churn    & 1.060  & 1.000 & 14,623    & 625,837 \\
COMPAS   & 1.060  & 1.000 & 7,854     & 47,399 \\
Coupon   & 75.680 & 1.000 & 150,620   & 301,679,963 \\
Credit   & 1.008  & 1.000 & 3         & 3 \\
Diabetes & 1.009  & 1.000 & 1         & 1 \\
Diamonds & 0.991  & 1.000 & 115       & 147 \\
HELOC    & 1.009  & 1.000 & 88,306    & 296,815 \\
Jasmine  & 1.311  & 1.000 & 1,978,196 & 33,918,179 \\
Pol      & 1.083  & 1.000 & 63,139    & 2,823,515 \\
Shopping & 1.004  & 1.000 & 1,861     & 1,861 \\
Spambase & 1.587  & 1.020 & 6,124,151 & 297,521,989 \\
Student  & 0.975  & 1.000 & 86        & 205 \\
Wine     & 0.993  & 1.000 & 33        & 35 \\
\bottomrule
\end{tabular}%
}
\end{table*}

\begin{table*}[!t]
\centering
\scriptsize
\setlength{\tabcolsep}{5pt}
\caption{Effect of budget-independent subgraph caching for
$\lambda=0.01$ and $\varepsilon_{\mathrm{mult}}=0.05$.
Speedup is the runtime without storing pointers to previously constructed
subgraphs, as in PRAXIS, divided by the runtime with
budget-independent subgraph caching. The increase in trees is the number of
trees recovered with budget-independent subgraph caching divided by the number
recovered without storing subgraph pointers.}
\label{tab:cache-lam001-rm005}
\resizebox{\textwidth}{!}{%
\begin{tabular}{lrrrr}
\toprule
Dataset
& Speedup by Caching
& $\times$ Increase in Trees by Caching
& OR Nodes with Budget-Independent Caching
& OR Nodes without Stored Subgraph Pointers, as in \citet{heile2026} \\
\midrule
Adult    & 1.062  & 1.000 & 196,707 & 803,077 \\
Aging    & 0.963  & 1.000 & 71      & 71 \\
Bank     & 1.007  & 1.000 & 1,323   & 1,811 \\
Bike     & 1.011  & 1.000 & 6,215   & 30,489 \\
Compas   & 1.652  & 1.000 & 80,977  & 1,663,445 \\
Coupon   & 70.756 & 1.000 & 150,620 & 301,679,963 \\
Diabetes & 1.002  & 1.000 & 1       & 1 \\
Diamonds & 0.997  & 1.000 & 429     & 1,015 \\
Pol      & 1.132  & 1.000 & 260,820 & 15,415,497 \\
Shopping & 0.999  & 1.000 & 2,349   & 2,349 \\
Student  & 0.978  & 1.000 & 1,063   & 3,175 \\
Wine     & 1.024  & 1.000 & 11,411  & 18,567 \\
\bottomrule
\end{tabular}%
}
\end{table*}

\begin{table*}[!t]
\centering
\scriptsize
\setlength{\tabcolsep}{5pt}
\caption{Effect of budget-independent subgraph caching for
$\lambda=0.02$ and $\varepsilon_{\mathrm{mult}}=0.2$.
Speedup is the runtime without storing pointers to previously constructed
subgraphs, as in PRAXIS, divided by the runtime with
budget-independent subgraph caching. The increase in trees is the number of
trees recovered with budget-independent subgraph caching divided by the number
recovered without storing subgraph pointers.}
\label{tab:cache-lam002-rm02}
\resizebox{\textwidth}{!}{%
\begin{tabular}{lrrrr}
\toprule
Dataset
& Speedup by Caching
& $\times$ Increase in Trees by Caching
& OR Nodes with Budget-Independent Caching
& OR Nodes without Stored Subgraph Pointers, as in \citet{heile2026} \\
\midrule
Aging    & 0.977  & 1.000 & 1,928     & 4,223 \\
Bank     & 0.996  & 1.000 & 20,083    & 20,905 \\
Bike     & 1.625  & 1.000 & 6,969,673 & 154,718,899 \\
Coupon   & 68.379 & 1.000 & 217,516   & 412,732,685 \\
Diabetes & 1.001  & 1.000 & 371       & 371 \\
Diamonds & 1.100  & 1.000 & 570,896   & 1,785,377 \\
Student  & 1.198  & 1.000 & 72,674    & 640,247 \\
\bottomrule
\end{tabular}%
}
\end{table*}

% Check whether the conference requires a reproducibility checklist to be included in the paper.
% If so, you can uncomment the following line and ajust the path to include it.
% \input{../../ReproducibilityChecklist/LaTeX/ReproducibilityChecklist.tex}

\end{document}